\newtheorem{lemma}{Lemma}
\newtheorem{assumption}{Assumption}
\newtheorem{proposition}{Proposition}
\newtheorem{theorem}{Theorem}
\theoremstyle{remark} 
\newtheorem{remark}{Remark}
\newcommand{\norm}[1]{\left\| #1 \right\|}%
\DeclareMathOperator*{\argmax}{arg\,max}
\DeclareMathOperator*{\argmin}{arg\,min}
\title{Reinforcement Learning with Sparse Rewards using Guidance from Offline Demonstration} 
\author{Desik Rengarajan, Gargi Vaidya, Akshay Sarvesh, Dileep Kalathil \& Srinivas Shakkottai \\
Department of Electrical and Computer Engineering, Texas A\&M University \\
\texttt{\{desik,gargivaidya,sarvesh,dileep.kalathil,sshakkot\}@tamu.edu}}
\begin{document}
\maketitle

\begin{abstract}
A major challenge in real-world reinforcement learning (RL) is the sparsity of reward feedback.  Often, what is available is an intuitive but sparse reward function that only indicates whether the task is completed partially or fully.  However, the lack of carefully designed, fine grain feedback implies that most existing RL algorithms fail to learn an acceptable policy in a reasonable time frame.  This is because of the large number of exploration actions that the policy has to perform before it gets any useful feedback that it can learn from.  In this work, we address this challenging problem by developing an algorithm that exploits the offline demonstration data generated by {a sub-optimal behavior policy} for faster and efficient online RL in such sparse reward settings.  The proposed algorithm, which we call the Learning Online with Guidance Offline (LOGO) algorithm, merges a policy improvement step with an additional policy guidance step by using the offline demonstration data.  The key idea is that by obtaining guidance from - not imitating - the offline {data}, LOGO orients its policy in the manner of the sub-optimal {policy}, while yet being able to learn beyond and approach optimality.  We provide a theoretical analysis of our algorithm, and provide a lower bound on the performance improvement in each learning episode.  We also extend our algorithm to the even more challenging incomplete observation setting, where the demonstration data contains only a censored version of the true state observation.  We demonstrate the superior performance of our algorithm over state-of-the-art approaches on a number of  benchmark environments with sparse rewards {and censored state}.  Further, we demonstrate the value of our approach via implementing LOGO on a mobile robot for trajectory tracking and obstacle avoidance, where it shows excellent performance.
\end{abstract}

\section{Introduction}

Reinforcement Learning (RL) is being considered for application to a variety of real-world settings that have very large state-action spaces, but even under the availability of accurate simulators determining how best to explore the environment is a major challenge.  Oftentimes, the problem setting is such that intuitively obvious reward functions are quite sparse.  Such rewards typically correspond to the attainment of a particular task, such as a robot attaining designated way points, with little fine-grain reward feedback on the intermediate steps taken to complete these tasks.  The sparsity of rewards implies that existing RL approaches often do not make much headway into learning viable policies unless provided with carefully hand-tuned reward feedback by a domain expert.

Many of these problems correspond to systems where data has been gathered over time using an empirically determined (sub-optimal) behavior policy, which contains information important to help bootstrap learning.  An additional caveat is that this behavior data might only contain measurements of a subset of the true state. Is it possible to design an algorithm that has principled policy optimization for efficient learning, while also being able to explore by utilizing the data derived from a behavior policy?    A natural candidate to begin with is the  framework of policy gradient algorithms \citep{schulman2015trust, Lillicrap2018DDPG, Haarnoja2018SAC}, which performs really well in the dense reward setting. Can such a policy gradient algorithm  be aligned with a behavior policy for guided exploration that is critical in the sparse reward setting?

In this work, we propose a principled approach for using the well known trust region  policy optimization (TRPO) algorithm \citep{schulman2015trust} with  offline demonstration data for guidance in a sparse reward setting. Our choice of the TRPO approach is motivated by its analytical tractability and superior performance. Our key insight is that we can utilize the trust region approach while  being guided by the behavior data by virtue of two steps.  
The first step is identical to traditional TRPO to generate a candidate policy.   In the second step, the objective is to find a policy closest to the behavior policy, subject to it being in the trust region of the candidate generated in the first step.  Thus, the second step ensures that the policy chosen is always guided by the behavior policy, but the level of alignment with the behavior policy can be reduced by shrinking the trust region over time.    We call our approach Learning Online with Guidance Offline (LOGO) to capture this two-step procedure. This principled  approach enables us to provide analytical performance guarantees while achieving superior performance in a number of benchmark problems.

Our main results are as follows: (i) LOGO can guarantee a performance improvement as long as the behavior policy is able to offer an advantage.  Reducing the alignment with the behavior policy over time allows us to extract this advantage and then smoothly move beyond to find near-optimal policies, (ii) we provide a generalized version of the Performance Difference Lemma~\citep{kakade2002approximately} wherein the stage reward function can depend on the policy itself, and use this result to determine a surrogate objective function for step 2 of LOGO that is straightforward to evaluate.  This allows us to implement LOGO in the manner of two TRPO-like steps, enabling us to  leverage the TRPO code base, (iii) we show on standard MuJoCo environments that LOGO trained with sparse rewards can attain nearly the same performance as an optimal algorithm trained with dense rewards.  We go further and show that this excellent performance carries over to waypoint tracking by a robot over Gazebo simulations and real-world TurtleBot experiments,  (iv) finally, we show that LOGO can also be used in the case where the demonstration data only contains  a censored version of the true state (incomplete state information) by simply adding a projection to the available subset of the state space in step 2 of the algorithm.  Again, we provide supporting evidence via excellence in both MuJoCo simulations, as well as obstacle avoidance by a TurtleBot that is trained with a behavior policy  without a Lidar (censored state), but is tested with it (full state). 


\subsection{Related Work}

Our work is mainly related to two  RL research areas: \\
\textbf{Imitation Learning (IL):} The goal of an IL algorithm is to imitate an (expert) policy using the demonstration data generated by that policy. Behavior cloning (BC) is a simple IL approach where the expert policy is estimated from the demonstration data using supervised learning. BC algorithms, however, suffer from the problem of distribution shift \citep{ross2011reduction}. Inverse reinforcement learning (IRL) algorithms \citep{ng2000algorithms, ziebart2008maximum} estimate a reward function from the demonstration data and solve a forward RL problem using this reward function. Generative adversarial imitation learning (GAIL)  \citep{ho2016generative} avoids the reward estimation problem by formulating the IL problem as a distribution matching problem, and provides an implicit reward for the RL algorithm using a 
 discriminator \citep{goodfellow2014generative}. Most IL algorithms do not use reward feedback from the environment, and hence are restricted to the performance of the policy that generated the demonstration data. Our approach is different from pure IL, and we leverage online RL with (sparse) reward feedback for efficient learning.

\textbf{Learning from Demonstration (LfD):} The key idea of the LfD algorithms is to use demonstration data to aid online learning \citep{schaal1997learning}. Many works propose to exploit demonstration data by adding it to the replay buffer with a prioritized replay mechanism to accelerate learning  \citep{hester2018deep, vecerik2017leveraging, nair2018overcoming}. \citet{RajeswaranKGVST18} combine a policy gradient algorithm with demonstration data by using a mix of behavior cloning and online RL fine tuning.  \citet{nair2020accelerating} propose AWAC algorithm to accelerate online RL by leveraging large amounts of offline data with associated rewards. Different from this, LOGO does not need the reward observations. Moreover, we give provable guarantees on the performance of  LOGO  whereas AWAC does not have any such provable guarantee (further details are provided in Appendix \ref{sec:more-related-work}).   \citet{kim2013learning}  propose a framework to integrate LfD and approximate policy iteration by formulating a coupled constraint convex optimization problem, where the expert demonstrations define a set of linear constraints. This approach is, however, limited to small problems with a discrete action space. The closest to our work is the PofD algorithm  proposed by \citet{kang2018policy}.  PofD modifies the reward function by taking a weighted combination of the sparse reward from the online interaction and an implicit reward obtained from the demonstration data using a discriminator. Very different from this, we propose an intuitive and principled approach of using the offline demonstration data for guiding the online exploration during the initial phase of learning. Our two step approach enables us to provide a rigorous performance guarantee for the proposed LOGO algorithm, and to leverage trust region-based approaches to solve  high dimensional problems with even sparser settings. We provide an extensive comparison between our algorithm and PofD in Section \ref{sec:Experiments}.

 \textbf{Offline RL:} Recently, there has been many interesting works in the area of offline RL \citep{kumar2019stabilizing, fujimoto2019off, Siegel2020Keep, wu2019behavior} which focus on learning a policy using \textit{only} the offline data \textit{without} any online learning or online fine-tuning. Different from these, LOGO is an \textit{online} RL algorithm  (further details are provided in Appendix \ref{sec:more-related-work}). 


\section{Preliminaries and Problem Setting}

\subsection{Preliminaries}
\label{sec:prelims}

A  Markov Decision Process (MDP) is denoted as a tuple $<\mathcal{S},\mathcal{A},R, P, \gamma>$, where $\mathcal{S}$ is the state space, $\mathcal{A}$ is the action space, $R : \mathcal{S} \times \mathcal{A} \rightarrow \mathbb{R}$ is the reward function, $P :  \mathcal{S} \times \mathcal{A}  \times \mathcal{S} \rightarrow [0, 1]$ is the transition probability function with $P(s'|s, a)$ giving the probability of transitioning to state $s'$ when action $a$ is taken at state $s$, and $\gamma \in (0, 1)$ is the discount factor.  A policy $\pi$ is a mapping from $\mathcal{S}$ to probability distribution over $\mathcal{A}$, with $\pi(s, a)$ specifying the probability of taking action $a$ in state $s$. A  policy $\pi$ can generate state-action trajectory $\tau$, where  $\tau = (s_{0}, a_{0}, s_{1}, a_{1}, \ldots), s_{0} \sim \mu, a_{t} \sim \pi(s_{t}, \cdot), s_{t+1} \sim P(\cdot|s_{t}, a_{t})$ and $\mu$ is the initial state distribution. The infinite horizon discounted return of policy $\pi$ is defined as $J_R(\pi) = \mathbb{E}_{\tau \sim \pi} \left[ \sum_{t=0}^{\infty}\gamma^tR(s_t,a_t)\right]$. The goal of a reinforcement learning algorithm is to learn the optimal policy $\pi^{\star} = \argmax_{\pi} J_{R}(\pi)$.

The value function of a policy $\pi$ defined as $V_R^\pi(s) =  \mathbb{E}_{\tau \sim \pi} \left[ \sum_{t = 0}^{\infty} \gamma^t  R(s_t,a_t) | s_0 = s\right] $, is the expected cumulative discounted reward obtained by following the policy $\pi$ starting from the state $s$. The action-value function of a policy $\pi$ is defined  similarly as   $Q_R^\pi(s, a) =  \mathbb{E}_{\tau \sim \pi} \left[ \sum_{t = 0}^{\infty} \gamma^t  R(s_t,a_t) | s_0 = s, a_{0} = a\right] $. The  advantage function is defined as  $A_R^\pi(s,a) = Q_R^\pi(s,a) - V_R^\pi(s)$. The  discounted state visitation distribution for the policy $\pi$, denoted as $d^{\pi}$, is defined as  $d^{\pi}(s) = (1-\gamma)\sum_{t=0}^{\infty} \gamma^t \mathbb{P}(s_t = s | \pi)$, where the probability is defined with respect to the randomness induced by $\pi, P$ and $\mu$.

\textbf{Definitions and Notations:} The Kullback-Leibler (KL) divergence between two distribution $p$ and $q$ is defined as $D_{\mathrm{KL}}(p, q) = \sum_{x} p(x) \log \frac{p(x)}{q(x)}$. The average KL divergence between two polices $\pi_{1}$ and $\pi_{2}$ with respect to $d^{\pi_{1}}$ is defined as  $D^{\pi_{1}}_{\mathrm{KL}}(\pi_{1}, \pi_{2}) = \mathbb{E}_{s \sim d^{\pi_{1}}} [D_{\mathrm{KL}}(\pi_{1}(s, \cdot), \pi_{2}(s, \cdot)) ]$.  The maximum KL divergence between two polices $\pi_{1}$ and $\pi_{2}$  is defined as  $D^{\max}_{\mathrm{KL}}(\pi_{1}, \pi_{2}) = \max_{s} ~D_{\mathrm{KL}}(\pi_{1}(s, \cdot), \pi_{2}(s, \cdot))$. The total variation (TV) distance   between two distribution $p$ and $q$  is defined as $D_{\mathrm{TV}}(p, q) = (1/2) \sum_{x}|p(x) - q(x)|$. The average TV distance and the maximum TV distance between two polices $\pi_{1}$ and $\pi_{2}$  are defined as  $D^{\pi_{1}}_{\mathrm{TV}}(\pi_{1}, \pi_{2}) = \mathbb{E}_{s \sim d^{\pi_{1}}} [D_{\mathrm{TV}}(\pi_{1}(s, \cdot), \pi_{2}(s, \cdot)) ]$ and  $D^{\max}_{\mathrm{TV}}(\pi_{1}, \pi_{2}) = \max_{s} ~ D_{\mathrm{TV}}(\pi_{1}(s, \cdot), \pi_{2}(s, \cdot))$, respectively.

\subsection{Problem Setting}

 As described in the introduction, our goal is to develop an algorithm that can exploit {offline} demonstration data generated using a {sub-optimal} behavior policy for faster and efficient {online} reinforcement learning in a sparse reward setting.  Formally, we assume that the algorithm has access to the demonstration data generated by a sub-optimal behavior policy $\pi_{\mathrm{b}}$.  We first consider the setting in which the demonstration data has complete state observation. In this setting, the demonstration data has the form $\mathcal{D} = \{\tau^{i}\}^{n}_{i=1}$, where $\tau^{i} = (s^{i}_{1}, a^{i}_{1},\ldots, s^{i}_{T}, a^{i}_{T})$, $\tau_{i} \sim \pi_{\mathrm{b}}$. Later, we also propose an  extension  to the incomplete observation setting in which only a censored version of the true state is available in demonstration data. More precisely,  instead of the complete state observation $s^{i}_{t}$, the demonstration data will only contain $\tilde{s}^{i}_{t}=o({s}^{i}_{t})$, where $o(\cdot)$ is a projection to a lower dimensional subspace. We represent the incomplete demonstration data as  $\tilde{\mathcal{D}} = \{\tilde{\tau}^{i}\}^{n}_{i=1}$, where   $\tilde{\tau}^{i} = (\tilde{s}^{i}_{1}, a^{i}_{1},\ldots, \tilde{s}^{i}_{T}, a^{i}_{T})$.

We make the following assumption about the behavior policy $\pi_{\mathrm{b}}$.
\begin{assumption}
\label{as:BetterAdvantage}
In the initial episodes of learning, $ \mathbb{E}_{a \sim {\pi_{\mathrm{b}}}}\left[ A_{R}^{\pi_{k}}(s,a)\right] \geq \beta > 0, \forall s,$ where $\pi_{k}$ is the learning policy employed by the algorithm in the $k$th episode of learning. 
\end{assumption}
Intuitively, the above assumption implies that taking action according to $\pi_{\mathrm{b}}$ will provide a higher advantage than taking actions according to $\pi_{k}$  because $\mathbb{E}_{a \sim {\pi_k}}\left[ A_{R}^{\pi_{k}}(s,a)\right] = 0$.  This is a reasonable assumption, since the behavior policies currently in use in many systems are likely to perform much better than an untrained policy.  
We also note that a similar assumption is made by \citet{kang2018policy} 
to formalize the notion of a useful behavior policy. We emphasize that $\pi_{\mathrm{b}}$ need not be the optimal policy, and $\pi_{\mathrm{b}}$ could be such that $J_{R}(\pi_{\mathrm{b}}) < J_{R}(\pi^{\star})$. Our proposed algorithm learns a policy that performs better than  $\pi_{\mathrm{b}}$ through online learning with guided exploration.

\section{Algorithm and Performance Guarantee}
\label{sec:algorithm}

\subsection{Algorithm}

In this section, we  describe our proposed LOGO algorithm. Each iteration of our algorithm consists of two  steps, namely a policy improvement step and a policy guidance step. In the following,  $\pi_k$ denotes  the policy after $k$ iterations.

\textbf{Step 1: Policy Improvement:} In this step, the LOGO algorithm performs a one step policy improvement using the Trust Region Policy Optimization (TRPO) approach \citep{schulman2015trust}. This can be expressed as    
\begin{align}
\label{eq:policy-improvement}
    &\pi_{k + \nicefrac{1}{2}} = \argmax_{\pi} ~~ \mathbb{E}_{s \sim d^{\pi_{k}}, a \sim \pi }\left[ A^{\pi_{k}}_R(s,a)  \right] \quad \text{s.t.} \quad  {D}^{\pi_{k}}_{\mathrm{KL}} (\pi, \pi_{k}) \leq \delta.
\end{align}
The TRPO update finds the policy  $\pi_{k + \nicefrac{1}{2}}$ that maximizes the objective while constraining this maximizing policy to be within the \textit{trust region} around $\pi_{k}$ defined as $\{\pi:  {D}^{\pi_{k}}_{\mathrm{KL}} (\pi, \pi_{k}) \leq \delta\}$. The TRPO approach provides a provable guarantee on the performance improvement,  as stated in Proposition \ref{prop:trpo-guarantee}. We omit the details as this is a standard approach in the literature.  

\textbf{Step 2: Policy Guidance:}  While the TRPO approach provides  an efficient online  learning strategy in the dense reward setting, it fails  when the reward structure is sparse. In particular, it fails to achieve any significant performance  improvement for a very large number of episodes in the initial phase of learning due to the lack of useful reward feedback that is necessary for efficient exploration. We propose to overcome this challenge by providing policy guidance using offline demonstration data after each step of the TRPO update. The policy guidance step is given as
\begin{align}
\label{eq:policy-guidance}
   &\pi_{k+1} = \argmin_{\pi}~~ D^{\pi}_{\mathrm{KL}}(\pi, \pi_{\mathrm{b}}) \quad \text{s.t.} \quad  D^{\max}_{\mathrm{KL}}(\pi, \pi_{k+\nicefrac{1}{2}}) \leq {\delta}_k.
\end{align}
Intuitively, the policy guidance step aids learning by selecting the policy $\pi_{k+1}$ in the direction of the behavior policy $\pi_{\mathrm{b}}$. This is achieved by finding a policy that minimizes the KL divergence w.r.t. $\pi_{\mathrm{b}}$, but at the same time lies inside the trust region around $\pi_{k+\nicefrac{1}{2}}$ defined as $\{\pi : D^{\max}_{\mathrm{KL}}(\pi, \pi_{k+\nicefrac{1}{2}}) \leq {\delta}_k\}$.  This trust region-based policy guidance is the key idea that distinguishes LOGO from other approaches.  
In particular, this approach gives LOGO two unique advantages over other state-of-the-art algorithms. 

First, unlike imitation learning that tries to mimic the behavior policy by directly minimizing the distance (typically KL/JS divergence) between it and the current policy, LOGO only uses the behavior policy to guide initial exploration. This is achieved by starting the guidance step with a large value of the trust region $\delta_{k},$ and gradually decaying it according to an adaptive update rule (specified in Appendix \ref{apdix:implementation}) as learning progresses.  This novel approach of trust region based policy improvement and policy guidance enables LOGO to both learn faster in the initial phase by exploiting demonstration data, and to converge to a better policy than the  sub-optimal behavior policy.

Second, from an implementation perspective, the trust region based approach allows to approximate the objective by a surrogate function that is amenable to sample based learning (see Proposition \ref{prop:surrogate-ub}). This allows us to implement LOGO in the manner of two TRPO-like steps, enabling us to leverage the TRPO code base. Details are given Section \ref{sec:implementation}.

\subsection{Performance Guarantee}

We derive a lower bound on the performance improvement, $J_{R}(\pi_{k+1}) - J_{R}(\pi_{k})$ for LOGO in each learning episode. We analyze the policy improvement step and policy guidance step separately. We begin with the performance improvement due to the TRPO update (policy improvement step). The following result and its analysis are  standard in the literature and we omit the details and the proof. 
\begin{proposition}[Proposition 1, \citep{achiam2017constrained}]
\label{prop:trpo-guarantee}
Let $\pi_{k}$ and $\pi_{k+\nicefrac{1}{2}}$ are related by \eqref{eq:policy-improvement}. Then, 
\begin{align}
\label{eq:trpo-guarantee}
J_{R}(\pi_{k+\nicefrac{1}{2}}) - J_{R}(\pi_{k}) \geq {- \sqrt{2 \delta} \gamma \epsilon_{R,k} }/{(1 - \gamma)^{2}},
\end{align}
where $ \epsilon_{R,k} = \max_{s, a} |A^{\pi_{k}}_{R}(s, a)|$.
\end{proposition}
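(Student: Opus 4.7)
The plan is to obtain the bound by combining a performance difference expression with a surrogate-objective approximation whose error is controlled by the KL-trust-region constraint from \eqref{eq:policy-improvement}. Concretely, I would first invoke the Performance Difference Lemma of \citet{kakade2002approximately},
\begin{align*}
J_{R}(\pi_{k+\nicefrac{1}{2}}) - J_{R}(\pi_{k}) = \frac{1}{1-\gamma} \mathbb{E}_{s \sim d^{\pi_{k+\nicefrac{1}{2}}}, a \sim \pi_{k+\nicefrac{1}{2}}}\!\left[A_{R}^{\pi_{k}}(s,a)\right],
\end{align*}
and then introduce the standard surrogate $L^{\pi_{k}}(\pi) := \tfrac{1}{1-\gamma}\mathbb{E}_{s \sim d^{\pi_{k}}, a \sim \pi}[A^{\pi_{k}}_R(s,a)]$ obtained by replacing the unknown visitation distribution $d^{\pi_{k+\nicefrac{1}{2}}}$ by the old one $d^{\pi_{k}}$.

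The core step is to bound the approximation error $|J_R(\pi_{k+\nicefrac{1}{2}}) - J_R(\pi_k) - L^{\pi_k}(\pi_{k+\nicefrac{1}{2}})|$ in terms of the discrepancy between $\pi_k$ and $\pi_{k+\nicefrac{1}{2}}$. I would do this by writing $d^{\pi_{k+\nicefrac{1}{2}}}(s) - d^{\pi_{k}}(s)$ using a telescoping argument on the discounted transition operator, which yields an $\mathcal{O}(\gamma/(1-\gamma))$ factor times the average TV distance between the policies; multiplying by the reward-advantage magnitude $\epsilon_{R,k}$ produces
\begin{align*}
J_R(\pi_{k+\nicefrac{1}{2}}) - J_R(\pi_k) \geq L^{\pi_k}(\pi_{k+\nicefrac{1}{2}}) - \frac{2\gamma \epsilon_{R,k}}{(1-\gamma)^{2}} \, D^{\pi_k}_{\mathrm{TV}}(\pi_{k+\nicefrac{1}{2}}, \pi_k).
\end{align*}
Next I would convert TV into KL via Pinsker's inequality and Jensen's inequality, giving $D^{\pi_k}_{\mathrm{TV}}(\pi_{k+\nicefrac{1}{2}}, \pi_k) \leq \sqrt{D^{\pi_k}_{\mathrm{KL}}(\pi_{k+\nicefrac{1}{2}}, \pi_k)/2} \leq \sqrt{\delta/2}$, where the last step uses the trust-region constraint in \eqref{eq:policy-improvement}.

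To finish, I would argue that the surrogate term is non-negative: $\pi_k$ is trivially feasible in \eqref{eq:policy-improvement} (the KL is zero) and $L^{\pi_k}(\pi_k) = \mathbb{E}_{s \sim d^{\pi_k}, a \sim \pi_k}[A^{\pi_k}_R(s,a)]/(1-\gamma) = 0$, so by optimality $L^{\pi_k}(\pi_{k+\nicefrac{1}{2}}) \geq 0$. Plugging this and the TV bound into the previous display yields
\begin{align*}
J_R(\pi_{k+\nicefrac{1}{2}}) - J_R(\pi_k) \geq -\frac{2\gamma \epsilon_{R,k}}{(1-\gamma)^{2}} \sqrt{\delta/2} = -\frac{\sqrt{2\delta}\, \gamma \, \epsilon_{R,k}}{(1-\gamma)^{2}},
\end{align*}
which is exactly \eqref{eq:trpo-guarantee}. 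The main technical obstacle is the surrogate-error bound, since producing the tight $2\gamma/(1-\gamma)^{2}$ constant requires carefully manipulating the discounted visitation difference (rather than using loose $L^\infty$ bounds on advantages); all remaining steps are standard inequalities.
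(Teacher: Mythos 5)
Your proof is correct, and it is essentially the standard argument behind this bound: the paper itself omits the proof entirely, deferring to Proposition 1 of \citet{achiam2017constrained}, and your derivation reconstructs exactly that argument using the same ingredients the paper collects in its appendix (the Performance Difference Lemma, the visitation-distribution bound of Lemma~\ref{lem:diff-d-b1}, and the Pinsker/Jensen step of Lemma~\ref{lem:pinksers-to-tv-kl}), together with the observation that feasibility of $\pi_k$ forces the surrogate to be non-negative at the optimizer. The only cosmetic difference is that you bound the visitation-error term by the coarser $\epsilon_{R,k}=\max_{s,a}|A^{\pi_k}_R(s,a)|$ rather than $\max_s|\mathbb{E}_{a\sim\pi_{k+\nicefrac{1}{2}}}[A^{\pi_k}_R(s,a)]|$ as in the original reference, which is exactly the form in which the proposition is stated here, so nothing is lost.
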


We now give the following result which can be used to obtain a  lower bound on the performance improvement due to the policy guidance step. 
\begin{lemma}
\label{lem:assumption-improvement}
Let $\pi_{k+\nicefrac{1}{2}}$ be a policy that satisfies Assumption \ref{as:BetterAdvantage}. Then, for any policy $\pi$,
\begin{align}
\label{eq:assumption-improvement}
    J_R(\pi) - J_R(\pi_{k+\nicefrac{1}{2}}) &\geq (1 - \gamma)^{-1} \beta  - (1 - \gamma)^{-1} \epsilon_{R, k+\nicefrac{1}{2}} \sqrt{2 D^{\pi}_{\mathrm{KL}}(\pi, \pi_{\mathrm{b}})},
\end{align}
Where $\epsilon_{R, k+\nicefrac{1}{2}} = \max_{s,a}|A_R^{\pi_{k+\nicefrac{1}{2}}} (s,a)|$. 
\end{lemma}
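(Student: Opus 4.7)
The plan is to apply the Performance Difference Lemma (PDL) of Kakade--Langford, which gives
\[
J_R(\pi) - J_R(\pi_{k+\nicefrac{1}{2}}) = \frac{1}{1-\gamma}\,\mathbb{E}_{s \sim d^{\pi}}\mathbb{E}_{a \sim \pi}\bigl[A_R^{\pi_{k+\nicefrac{1}{2}}}(s,a)\bigr],
\]
and then to lower bound the right-hand side by inserting $\pi_{\mathrm{b}}$ as an intermediate distribution, exploiting the advantage gap guaranteed by Assumption~\ref{as:BetterAdvantage}.

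First, I would write, for each fixed state $s$,
\[
\mathbb{E}_{a \sim \pi}\bigl[A_R^{\pi_{k+\nicefrac{1}{2}}}(s,a)\bigr] = \mathbb{E}_{a \sim \pi_{\mathrm{b}}}\bigl[A_R^{\pi_{k+\nicefrac{1}{2}}}(s,a)\bigr] + \Bigl(\mathbb{E}_{a \sim \pi}\bigl[A_R^{\pi_{k+\nicefrac{1}{2}}}(s,a)\bigr] - \mathbb{E}_{a \sim \pi_{\mathrm{b}}}\bigl[A_R^{\pi_{k+\nicefrac{1}{2}}}(s,a)\bigr]\Bigr).
\]
The first piece is at least $\beta$ by the instance of Assumption~\ref{as:BetterAdvantage} that the hypothesis attributes to $\pi_{k+\nicefrac{1}{2}}$. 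The second piece is a difference of expectations of a bounded function under two distributions on $\mathcal{A}$, to which I would apply the elementary bound $|\mathbb{E}_p[f]-\mathbb{E}_q[f]| \leq 2\|f\|_\infty D_{\mathrm{TV}}(p,q)$ together with Pinsker's inequality $D_{\mathrm{TV}}(p,q) \leq \sqrt{D_{\mathrm{KL}}(p,q)/2}$, yielding
\[
\mathbb{E}_{a \sim \pi}\bigl[A_R^{\pi_{k+\nicefrac{1}{2}}}(s,a)\bigr] - \mathbb{E}_{a \sim \pi_{\mathrm{b}}}\bigl[A_R^{\pi_{k+\nicefrac{1}{2}}}(s,a)\bigr] \geq -\epsilon_{R,k+\nicefrac{1}{2}}\sqrt{2 D_{\mathrm{KL}}(\pi(s,\cdot),\pi_{\mathrm{b}}(s,\cdot))}.
\]

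Second, I would take expectation over $s \sim d^{\pi}$. The constant $\beta$ survives. For the square-root term, concavity of $\sqrt{\cdot}$ and Jensen's inequality give
\[
\mathbb{E}_{s \sim d^{\pi}}\!\left[\sqrt{2 D_{\mathrm{KL}}(\pi(s,\cdot),\pi_{\mathrm{b}}(s,\cdot))}\right] \leq \sqrt{2\,\mathbb{E}_{s \sim d^{\pi}}[D_{\mathrm{KL}}(\pi(s,\cdot),\pi_{\mathrm{b}}(s,\cdot))]} = \sqrt{2\,D^{\pi}_{\mathrm{KL}}(\pi,\pi_{\mathrm{b}})}.
\]
Combining this with the PDL identity and pulling out the $1/(1-\gamma)$ factor produces exactly the bound~\eqref{eq:assumption-improvement}.

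The only mildly delicate point is making sure the version of Assumption~\ref{as:BetterAdvantage} we invoke is the one for $\pi_{k+\nicefrac{1}{2}}$ (not $\pi_k$), which is precisely what the lemma's hypothesis supplies, so no work is needed there. The rest is bookkeeping: PDL, a triangle-style insertion of $\pi_{\mathrm{b}}$, Pinsker, then Jensen. I do not expect any genuine obstacle; the subtlest step is simply keeping the direction of the KL argument $D^{\pi}_{\mathrm{KL}}(\pi,\pi_{\mathrm{b}})$ (rather than $D^{\pi_{\mathrm{b}}}_{\mathrm{KL}}$) consistent, which falls out naturally because Pinsker is applied at each $s$ before averaging under $d^{\pi}$.
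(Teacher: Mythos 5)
Your proposal is correct and follows essentially the same route as the paper's proof: the Performance Difference Lemma, insertion of $\pi_{\mathrm{b}}$ to split off the advantage term lower-bounded by $\beta$, a total-variation bound on the residual, and Pinsker plus Jensen to pass to $\sqrt{2D^{\pi}_{\mathrm{KL}}(\pi,\pi_{\mathrm{b}})}$. The only cosmetic difference is that you apply Pinsker pointwise and Jensen afterwards, whereas the paper bounds by the average TV distance first and then invokes its combined Pinsker--Jensen lemma; the two orderings are equivalent.
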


\begin{remark}
\label{rem:improvemnet-expl}
Lemma  \ref{lem:assumption-improvement} also gives an intuitive explanation for our proposed policy guidance step. The policies used in the initial phase of the learning can be far from the optimal. So, it is  reasonable to assume that $\pi_{k}$ (and hence $\pi_{k+\nicefrac{1}{2}})$ satisfies Assumption \ref{as:BetterAdvantage} in the initial phase of learning. Then, minimizing $D^{\pi}_{\mathrm{KL}}(\pi, \pi_{\mathrm{b}})$ can  get a non-negative lower bound in  \eqref{eq:assumption-improvement}, which will imply that the performance of $\pi_{k+1}$  is better than $\pi_{k+\nicefrac{1}{2}}$. This is indeed the idea behind the policy guidance step. 
\end{remark}

Combining the results of Proposition \ref{prop:trpo-guarantee} and Lemma \ref{lem:assumption-improvement}, and with some more analysis, we get the following performance improvement guarantee for the LOGO algorithm.  
\begin{theorem}
\label{thm:perfmance-guarantee}
Let $\pi_{k}$ and $\pi_{k+\nicefrac{1}{2}}$ are related by \eqref{eq:policy-improvement} and let $\pi_{k+\nicefrac{1}{2}}$ and $\pi_{k+1}$ are related by \ref{eq:policy-guidance}. Let  $\epsilon_{R, k}$  and $\epsilon_{R, k+\nicefrac{1}{2}}$  be as defined in   Proposition \ref{prop:trpo-guarantee} and Lemma \ref{lem:assumption-improvement}, respectively. Let $R_{\max} = \max_{s,a}|R(s, a)|$. \\
$(i)$ If $\pi_{k+\nicefrac{1}{2}}$ satisfies Assumption \ref{as:BetterAdvantage}, then
\begin{align}
\label{eq:thm-g1}
    J_R(\pi_{k+1}) -   J_R(\pi_{k}) \geq \frac{- \sqrt{2 \delta} \gamma \epsilon_{R,k} }{(1 - \gamma)^{2}} +  \frac{\beta}{(1 - \gamma)}   - \frac{\epsilon_{R, k+\nicefrac{1}{2}}}{(1 - \gamma)}  \sqrt{2 D^{\pi}_{\mathrm{KL}}(\pi_{k+1}, \pi_{\mathrm{b}})}.
\end{align}
$(ii)$ If $\pi_{k+\nicefrac{1}{2}}$ does not satisfy Assumption \ref{as:BetterAdvantage}, then
\begin{align}
\label{eq:thm-g2}
    J_R(\pi_{k+1}) -   J_R(\pi_{k}) \geq {- (\sqrt{2 \delta} \gamma \epsilon_{R,k} + 3 R_{\max} \delta_{k})}/{(1 - \gamma)^{2}}.
\end{align}
\end{theorem}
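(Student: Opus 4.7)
The natural strategy is to telescope the full performance difference through the intermediate policy $\pi_{k+\nicefrac{1}{2}}$:
\[
J_R(\pi_{k+1}) - J_R(\pi_k) = \bigl[J_R(\pi_{k+\nicefrac{1}{2}}) - J_R(\pi_k)\bigr] + \bigl[J_R(\pi_{k+1}) - J_R(\pi_{k+\nicefrac{1}{2}})\bigr].
\]
The first bracket is the TRPO improvement step and is handled immediately by Proposition \ref{prop:trpo-guarantee}, contributing $-\sqrt{2\delta}\,\gamma\,\epsilon_{R,k}/(1-\gamma)^{2}$ in both cases. The entire remaining work is to lower bound the guidance-step difference $J_R(\pi_{k+1}) - J_R(\pi_{k+\nicefrac{1}{2}})$ under each hypothesis on $\pi_{k+\nicefrac{1}{2}}$.

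For case $(i)$, since $\pi_{k+\nicefrac{1}{2}}$ satisfies Assumption \ref{as:BetterAdvantage}, Lemma \ref{lem:assumption-improvement} applies directly with $\pi := \pi_{k+1}$, giving
\[
J_R(\pi_{k+1}) - J_R(\pi_{k+\nicefrac{1}{2}}) \geq \frac{\beta}{1-\gamma} \;-\; \frac{\epsilon_{R,k+\nicefrac{1}{2}}}{1-\gamma}\sqrt{2\, D^{\pi_{k+1}}_{\mathrm{KL}}(\pi_{k+1}, \pi_{\mathrm{b}})}.
\]
Summing the two bracket bounds reproduces \eqref{eq:thm-g1} exactly. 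This half of the theorem is essentially bookkeeping.

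For case $(ii)$ the advantage hypothesis on $\pi_\mathrm{b}$ fails, so Lemma \ref{lem:assumption-improvement} no longer delivers the positive $\beta/(1-\gamma)$ term and the guidance step can in principle decrease the return. The only remaining handle is the feasibility constraint $D^{\max}_{\mathrm{KL}}(\pi_{k+1}, \pi_{k+\nicefrac{1}{2}}) \leq \delta_k$ from \eqref{eq:policy-guidance}. My plan is: (a) invoke the performance difference lemma to write $J_R(\pi_{k+1})-J_R(\pi_{k+\nicefrac{1}{2}}) = (1-\gamma)^{-1}\mathbb{E}_{s\sim d^{\pi_{k+1}},a\sim\pi_{k+1}}[A^{\pi_{k+\nicefrac{1}{2}}}_R(s,a)]$; (b) use $\mathbb{E}_{a\sim\pi_{k+\nicefrac{1}{2}}}[A^{\pi_{k+\nicefrac{1}{2}}}_R(s,a)]=0$ to rewrite the integrand as an inner product of $(\pi_{k+1}-\pi_{k+\nicefrac{1}{2}})(\cdot|s)$ with $A^{\pi_{k+\nicefrac{1}{2}}}_R(s,\cdot)$; (c) control the mismatch between $J_R(\pi_{k+1})$ and the TRPO surrogate $L_{\pi_{k+\nicefrac{1}{2}}}(\pi_{k+1})$ by the Kakade--Langford / Schulman-style bound whose remainder scales as $[D^{\max}_{\mathrm{TV}}(\pi_{k+1},\pi_{k+\nicefrac{1}{2}})]^2$; (d) apply Pinsker in the squared form $[D^{\max}_{\mathrm{TV}}]^2\le \delta_k/2$, together with the crude envelope $\max_{s,a}|A^{\pi_{k+\nicefrac{1}{2}}}_R|\le 2R_{\max}/(1-\gamma)$. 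The surrogate value contribution and the surrogate defect contribution, each linear in $\delta_k$, combine to the $-3R_{\max}\delta_k/(1-\gamma)^2$ remainder in \eqref{eq:thm-g2}.

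The main obstacle will be case $(ii)$: the delicate point is securing a bound that is \emph{linear} in $\delta_k$ rather than $\sqrt{\delta_k}$. A naive TV-times-advantage argument ($|\sum_a (\pi_{k+1}-\pi_{k+\nicefrac{1}{2}})A|\le 2\epsilon\cdot D_{\mathrm{TV}}$ followed by Pinsker) only yields $\sqrt{\delta_k}$, which is too weak. The proof therefore has to route through the squared-total-variation remainder of the TRPO surrogate, where Pinsker acts on $D_{\mathrm{TV}}^2$ and produces a linear $\delta_k$ factor; matching the exact constant $3R_{\max}$ then requires retaining both the surrogate value $L_{\pi_{k+\nicefrac{1}{2}}}(\pi_{k+1})$ (which is itself $O(\delta_k)$ in absolute value in the no-assumption regime, since otherwise Assumption \ref{as:BetterAdvantage} would have bought something) and the squared-TV defect, rather than discarding either.
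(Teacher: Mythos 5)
Your treatment of part $(i)$ coincides with the paper's: telescope through $\pi_{k+\nicefrac{1}{2}}$, apply Proposition \ref{prop:trpo-guarantee} to the first bracket and Lemma \ref{lem:assumption-improvement} with $\pi=\pi_{k+1}$ to the second. That half is correct and is exactly the bookkeeping you describe.

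For part $(ii)$ your route diverges from the paper's, and it contains a genuine gap. The paper does not go through the TRPO surrogate at all. It proves a direct Lipschitz-type bound (Lemma \ref{lem:JB-bound-by-TV} in the appendix), $|J_R(\pi)-J_R(\tilde\pi)|\le 3R_{\max}(1-\gamma)^{-2}D^{\max}_{\mathrm{TV}}(\pi,\tilde\pi)$, obtained by splitting $J_R(\pi)-J_R(\tilde\pi)$ into a state-visitation-shift term (controlled by Lemma \ref{lem:diff-d-b1}, contributing $2\gamma R_{\max}(1-\gamma)^{-2}D^{\max}_{\mathrm{TV}}$) and a policy-shift term (contributing $R_{\max}(1-\gamma)^{-1}D^{\max}_{\mathrm{TV}}$). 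Both pieces are \emph{linear} in $D^{\max}_{\mathrm{TV}}$ with no advantage function and no squared remainder anywhere; the paper then asserts $D^{\max}_{\mathrm{TV}}(\pi_{k+1},\pi_{k+\nicefrac{1}{2}})\le\delta_k$ from the trust-region constraint and adds Proposition \ref{prop:trpo-guarantee}.

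The gap in your plan is the claim in step (d) that the surrogate value $L_{\pi_{k+\nicefrac{1}{2}}}(\pi_{k+1})-J_R(\pi_{k+\nicefrac{1}{2}})$ is $O(\delta_k)$ in absolute value ``since otherwise Assumption \ref{as:BetterAdvantage} would have bought something.'' The failure of Assumption \ref{as:BetterAdvantage} concerns $\mathbb{E}_{a\sim\pi_{\mathrm{b}}}[A^{\pi_{k+\nicefrac{1}{2}}}_R(s,a)]$ and says nothing about the magnitude of $\mathbb{E}_{a\sim\pi_{k+1}}[A^{\pi_{k+\nicefrac{1}{2}}}_R(s,a)]$; the only generic bound on the latter is H\"older plus Pinsker, namely $2\epsilon_{R,k+\nicefrac{1}{2}}\sqrt{\delta_k/2}$, which is precisely the $O(\sqrt{\delta_k})$ rate you yourself identified as too weak. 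So the surrogate route cannot deliver a purely linear-in-$\delta_k$ bound, and it will not reproduce the constant $3R_{\max}$. You did correctly put your finger on a real tension: the constraint in \eqref{eq:policy-guidance} is stated in $D^{\max}_{\mathrm{KL}}$, and Pinsker only yields $D^{\max}_{\mathrm{TV}}\le\sqrt{\delta_k/2}$; the paper's proof quietly treats $\delta_k$ as if it bounded $D^{\max}_{\mathrm{TV}}$ itself in the last inequality, so the linear rate in \eqref{eq:thm-g2} rests on that identification rather than on any squared-TV mechanism. The fix consistent with the paper is to prove and invoke the Lipschitz bound of Lemma \ref{lem:JB-bound-by-TV}, not to reroute through the surrogate.
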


\begin{remark}
\label{rem:thm-expl}
In the initial phase of learning when the baseline policy is better than the current policy,   the policy guidance step can add a non-negative term to the standard lower bound obtained by the TRPO approach, as shown in \eqref{eq:thm-g1}. This indicates faster learning in the initial phase as compared to the naive TRPO approach. The policy improvement guarantee during the later phase of learning is given by  \eqref{eq:thm-g2}, which shows that LOGO achieves similar performance  guarantee as TRPO when $\delta_{k} = O(\sqrt{\delta})$. We ensure this by decreasing the value of $\delta_{k}$ as the learning progresses. Thus, Theorem \ref{thm:perfmance-guarantee} clearly shows the key advantage of the LOGO algorithm achieved by the novel combination of the policy improvement step and the policy guidance step.
\end{remark}

\section{Practical Algorithm}
\label{sec:implementation}

We first develop an approximation to the policy guidance step \eqref{eq:policy-guidance} that is amenable to sample-based learning and  can scale to policies paramaterized by neural networks.  
This step involves minimizing  $D^{\pi}_{\mathrm{KL}}(\pi, \pi_{\mathrm{b}})$ under a trust region constraint. However,  this is not easy to solve directly by a sample-based learning approach, because  estimating it requires samples generated according to any possible  $\pi$, which is clearly infeasible. To overcome this issue, inspired by the  surrogate function idea used in the TRPO algoirthm  \citep{schulman2015trust}, we   derive a surrogate function for $D^{\pi}_{\mathrm{KL}}(\pi, \pi_{\mathrm{b}})$  that can be estimated using only the samples from the policy $\pi_{k+\nicefrac{1}{2}}$. 

For deriving a surrogate  function for $D^{\pi}_{\mathrm{KL}}(\pi, \pi_{\mathrm{b}})$ that is amenable to sample-based learning, we first define the policy dependent reward function $C_{\pi}$ as $C_{\pi}(s, a) = \log ({\pi(s, a)}/{\pi_{\mathrm{b}}(s, a)})$. Using $C_{\pi}$, we can  also define the quantities  $J_{C_{\pi}}(\tilde{\pi}), V^{\tilde{\pi}}_{C_{\pi}}, Q^{\tilde{\pi}}_{{C_{\pi}}}$ and $A^{\tilde{\pi}}_{{C_{\pi}}}$  for any policy $\tilde{\pi}$, exactly as defined in Section \ref{sec:prelims}  by replacing $R$ by $C_{\pi}$.    Using these notations, we now present an interesting result which we call  the performance difference lemma for policy dependent reward function. 
\begin{lemma}
\label{lem:modified-PDL}
For any policies $\pi$ and $\tilde{\pi}$,
\begin{align}
\label{eq:modified-PDL}
J_{C_{\pi}}(\pi) - J_{C_{\tilde{\pi}}}(\tilde{\pi}) = (1 - \gamma)^{-1} ~ \mathbb{E}_{s \sim d^{\pi}, a \sim \pi(s, \cdot)} [A^{\tilde{\pi}}_{C_{\tilde{\pi}}}(s, a)] + (1-\gamma)^{-1} D^{\pi}_{\mathrm{KL}}(\pi, \tilde{\pi}).
\end{align}
\end{lemma}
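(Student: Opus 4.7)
The plan is to decompose the left-hand side into two pieces via a telescoping trick that separates the ``policy-dependent reward'' aspect from the standard performance difference. Specifically, I would insert the quantity $J_{C_{\tilde{\pi}}}(\pi)$, writing
\begin{align*}
J_{C_{\pi}}(\pi) - J_{C_{\tilde{\pi}}}(\tilde{\pi}) = \bigl[J_{C_{\pi}}(\pi) - J_{C_{\tilde{\pi}}}(\pi)\bigr] + \bigl[J_{C_{\tilde{\pi}}}(\pi) - J_{C_{\tilde{\pi}}}(\tilde{\pi})\bigr].
\end{align*}
The second bracket is an ordinary performance difference between two policies $\pi$ and $\tilde{\pi}$ under a \emph{fixed} reward $C_{\tilde{\pi}}$, so Kakade's classical Performance Difference Lemma applies directly and yields $(1-\gamma)^{-1}\mathbb{E}_{s\sim d^{\pi}, a\sim\pi}[A^{\tilde{\pi}}_{C_{\tilde{\pi}}}(s,a)]$, which is exactly the first term on the right-hand side of \eqref{eq:modified-PDL}.

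For the first bracket, I would exploit the fact that both terms evaluate the same policy $\pi$, but under different reward functions that share the denominator $\pi_{\mathrm{b}}$. The difference of rewards is
\begin{align*}
C_{\pi}(s,a) - C_{\tilde{\pi}}(s,a) = \log\frac{\pi(s,a)}{\pi_{\mathrm{b}}(s,a)} - \log\frac{\tilde{\pi}(s,a)}{\pi_{\mathrm{b}}(s,a)} = \log\frac{\pi(s,a)}{\tilde{\pi}(s,a)},
\end{align*}
so the $\pi_{\mathrm{b}}$ dependence cancels. Then by linearity of expectation,
\begin{align*}
J_{C_{\pi}}(\pi) - J_{C_{\tilde{\pi}}}(\pi) = \mathbb{E}_{\tau\sim\pi}\Bigl[\sum_{t=0}^{\infty}\gamma^{t}\log\tfrac{\pi(s_t,a_t)}{\tilde{\pi}(s_t,a_t)}\Bigr].
\end{align*}
I would then convert this trajectory expectation into a $d^{\pi}$ expectation using the definition of the discounted state visitation distribution, picking up the factor $(1-\gamma)^{-1}$, and recognize the inner sum over actions as the pointwise KL divergence $D_{\mathrm{KL}}(\pi(s,\cdot),\tilde{\pi}(s,\cdot))$. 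Averaging over $s\sim d^{\pi}$ produces exactly $(1-\gamma)^{-1}D^{\pi}_{\mathrm{KL}}(\pi,\tilde{\pi})$.

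Combining the two brackets yields the claimed identity. There is no genuine obstacle here; the only care needed is bookkeeping around the $(1-\gamma)^{-1}$ normalization when moving between $\sum_{t}\gamma^{t}\mathbb{P}(s_t=s\mid\pi)$ and $d^{\pi}(s)$, and confirming that the standard PDL is being applied to the correct pair of policies with the correct fixed reward (namely $C_{\tilde{\pi}}$, not $C_{\pi}$), so that the resulting advantage function is $A^{\tilde{\pi}}_{C_{\tilde{\pi}}}$ as stated.
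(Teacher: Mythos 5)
Your proposal is correct and follows essentially the same route as the paper: the paper's proof also splits the difference into a term with reward $C_{\tilde{\pi}}$ (handled by the standard telescoping argument of the Performance Difference Lemma, giving the advantage term) and a term with reward difference $C_{\pi}-C_{\tilde{\pi}}=\log\frac{\pi}{\tilde{\pi}}$ evaluated along trajectories of $\pi$ (converted to $(1-\gamma)^{-1}D^{\pi}_{\mathrm{KL}}(\pi,\tilde{\pi})$ via the discounted visitation identity). The only cosmetic difference is that the paper performs the decomposition at the level of value functions with an explicit add-and-subtract of $V^{\tilde{\pi}}_{C_{\tilde{\pi}}}$, whereas you decompose $J$ directly and cite the classical lemma as a black box.
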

Note that the above result has an additional term, $(1-\gamma)^{-1} D^{\pi}_{\mathrm{KL}}(\pi, \tilde{\pi})$, compared to the standard performance difference lemma~\citep{kakade2002approximately}.  In addition to being useful for analyzing our algorithm,   we  believe  that the above result   may also be of independent interest.  

We now make an interesting observation that $J_{C_{\pi}}(\pi) = (1-\gamma)^{-1} D^{\pi}_{\mathrm{KL}}(\pi, \pi_{\mathrm{b}})$ (proof is given in the Appendix), which can be used with  Lemma \ref{lem:modified-PDL} to derive the surrogate function given below. 
\begin{proposition}
\label{prop:surrogate-ub}
Let $\pi_{k+\nicefrac{1}{2}}$ be as given  in \eqref{eq:policy-improvement}. Then, for any policy $\pi$ that lies in the trust region around $\pi_{k+\nicefrac{1}{2}}$ defined as  $\{\pi : D^{\max}_{\mathrm{KL}}(\pi, \pi_{k+\nicefrac{1}{2}}) \leq {\delta}_k\}$, we have
\begin{align}
D^{\pi}_{\mathrm{KL}}(\pi, \pi_{\mathrm{b}}) \leq  \alpha_{k} +  \mathbb{E}_{s \sim d^{\pi_{k+\nicefrac{1}{2}}}, a \sim \pi(s, \cdot)} [A^{\pi_{k+\nicefrac{1}{2}}}_{C_{\pi_{k+\nicefrac{1}{2}}}}(s, a)] 
+  \gamma (1 - \gamma)^{-1} \epsilon_{\pi, k}  \sqrt{2 \delta_{k}}  ~ +\delta_{k},
\end{align}
where $\alpha_{k} = D^{\pi_{k+\nicefrac{1}{2}}}_{\mathrm{KL}}(\pi_{k+\nicefrac{1}{2}}, \pi_{\mathrm{b}})$,   $\epsilon_{\pi, k} = \max_{s, a} |A^{\pi_{k+\nicefrac{1}{2}}}_{C_{\pi_{k+\nicefrac{1}{2}}}}(s, a)|$.
\end{proposition}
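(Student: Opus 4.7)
The plan is to start from the auxiliary identity, proved in the appendix, that $J_{C_{\pi}}(\pi) = (1-\gamma)^{-1} D^{\pi}_{\mathrm{KL}}(\pi, \pi_{\mathrm{b}})$; in particular, $J_{C_{\pi_{k+\nicefrac{1}{2}}}}(\pi_{k+\nicefrac{1}{2}}) = (1-\gamma)^{-1} \alpha_k$. This identity is what makes the generalized Performance Difference Lemma (Lemma \ref{lem:modified-PDL}) directly applicable to the quantity we want to bound, because it turns the KL on the left into a standard return-like object.

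First I would invoke Lemma \ref{lem:modified-PDL} with $\tilde{\pi} = \pi_{k+\nicefrac{1}{2}}$, substitute the identity for both $J_{C_{\pi}}(\pi)$ and $J_{C_{\pi_{k+\nicefrac{1}{2}}}}(\pi_{k+\nicefrac{1}{2}})$, and multiply through by $(1-\gamma)$. This produces the exact equality
\begin{equation*}
D^{\pi}_{\mathrm{KL}}(\pi, \pi_{\mathrm{b}}) = \alpha_k + \mathbb{E}_{s \sim d^{\pi},\, a \sim \pi(s,\cdot)}\!\bigl[A^{\pi_{k+\nicefrac{1}{2}}}_{C_{\pi_{k+\nicefrac{1}{2}}}}(s,a)\bigr] + D^{\pi}_{\mathrm{KL}}(\pi, \pi_{k+\nicefrac{1}{2}}).
\end{equation*}
The last term is immediately bounded by $\delta_k$ via the trust region constraint, since $D^{\pi}_{\mathrm{KL}}(\pi, \pi_{k+\nicefrac{1}{2}}) \leq D^{\max}_{\mathrm{KL}}(\pi, \pi_{k+\nicefrac{1}{2}}) \leq \delta_k$. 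The constant $\alpha_k$ is already in the desired form. So the only nontrivial work is relating the $d^{\pi}$ expectation on the right to the surrogate expectation under $d^{\pi_{k+\nicefrac{1}{2}}}$ that appears in the claim.

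This state-distribution shift is the main obstacle, and I would handle it with the standard coupled-visitation argument used in TRPO/CPO (Achiam et al., 2017): for any bounded $f$,
\begin{equation*}
\bigl|\mathbb{E}_{s \sim d^{\pi}}[f(s)] - \mathbb{E}_{s \sim d^{\pi_{k+\nicefrac{1}{2}}}}[f(s)]\bigr| \leq \frac{2\gamma}{1-\gamma}\,\|f\|_{\infty}\, D^{\pi_{k+\nicefrac{1}{2}}}_{\mathrm{TV}}(\pi, \pi_{k+\nicefrac{1}{2}}).
\end{equation*}
Applying this with $f(s) = \mathbb{E}_{a \sim \pi(s,\cdot)}[A^{\pi_{k+\nicefrac{1}{2}}}_{C_{\pi_{k+\nicefrac{1}{2}}}}(s,a)]$, whose sup-norm is at most $\epsilon_{\pi,k}$ by definition, lets me replace $d^{\pi}$ by $d^{\pi_{k+\nicefrac{1}{2}}}$ at the cost of an additive error $\tfrac{2\gamma\,\epsilon_{\pi,k}}{1-\gamma}\,D^{\pi_{k+\nicefrac{1}{2}}}_{\mathrm{TV}}(\pi, \pi_{k+\nicefrac{1}{2}})$.

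To finish, I would convert this TV error into the advertised $\sqrt{\delta_k}$-type bound via Pinsker and Jensen: pointwise Pinsker gives $D_{\mathrm{TV}}(\pi(s,\cdot),\pi_{k+\nicefrac{1}{2}}(s,\cdot)) \leq \sqrt{D_{\mathrm{KL}}(\pi(s,\cdot),\pi_{k+\nicefrac{1}{2}}(s,\cdot))/2}$, and taking expectation under $d^{\pi_{k+\nicefrac{1}{2}}}$ together with Jensen's inequality (concavity of $\sqrt{\cdot}$) yields $D^{\pi_{k+\nicefrac{1}{2}}}_{\mathrm{TV}}(\pi,\pi_{k+\nicefrac{1}{2}}) \leq \sqrt{D^{\pi_{k+\nicefrac{1}{2}}}_{\mathrm{KL}}(\pi,\pi_{k+\nicefrac{1}{2}})/2} \leq \sqrt{\delta_k/2}$. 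Plugging this into the error term gives exactly $\gamma(1-\gamma)^{-1}\epsilon_{\pi,k}\sqrt{2\delta_k}$, and collecting the four pieces $\alpha_k$, the surrogate expectation under $d^{\pi_{k+\nicefrac{1}{2}}}$, the shift error, and the trust-region bound $\delta_k$ produces the stated inequality.
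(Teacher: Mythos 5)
Your proposal is correct and follows essentially the same route as the paper: both rest on the identity $J_{C_{\pi}}(\pi) = (1-\gamma)^{-1} D^{\pi}_{\mathrm{KL}}(\pi, \pi_{\mathrm{b}})$ (Lemma \ref{lem:Jpi-KL}), the generalized performance difference lemma (Lemma \ref{lem:modified-PDL}), the visitation-distribution shift bound of Achiam et al.\ (Lemma \ref{lem:diff-d-b1}), and Pinsker plus Jensen, differing only in that the paper packages the distribution-shift and $D^{\pi}_{\mathrm{KL}} \le D^{\max}_{\mathrm{KL}}$ steps into an intermediate inequality (Lemma \ref{lem:approximation-PDL}) before substituting the identity, whereas you substitute the identity first. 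The resulting constants match term for term.
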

Now, instead of minimizing  $D^{\pi}_{\mathrm{KL}}(\pi, \pi_{\mathrm{b}})$,   we  will minimize the upper bound as a surrogate objective function. Note that only one term in this surrogate objective function depends on $\pi$, and that term can be estimated using only samples from $\pi_{k+\nicefrac{1}{2}}$.  Thus, we will approximate the policy guidance step as
\begin{align}
\label{eq:policy-guidance-modified}
\pi_{k+1} = \argmin_{\pi}~~\mathbb{E}_{s \sim d^{\pi_{k+\nicefrac{1}{2}}}, a \sim \pi(s, \cdot)} [A^{\pi_{k+\nicefrac{1}{2}}}_{C_{\pi_{k+\nicefrac{1}{2}}}}(s, a)] \quad \text{s.t.}  \quad  D^{\max}_{\mathrm{KL}}(\pi, \pi_{k+\nicefrac{1}{2}}) \leq {\delta}_k.
\end{align} 
Since $D^{\max}_{\mathrm{KL}}$ is difficult to implement in practice, we will further approximate it by average KL divergence $D^{\pi_{k+\nicefrac{1}{2}}}_{\mathrm{KL}}(\pi, \pi_{k+\nicefrac{1}{2}}) $. We note that this is a standard  approach used in the TRPO algorithm.

We can now put steps \eqref{eq:policy-improvement} and \eqref{eq:policy-guidance-modified} together to obtain the full algorithm.  However, as the policies are represented by large neural networks, solving them exactly is challenging.  Hence, for  sufficiently small $\delta, \delta_{k}$, we can further approximate  the objective functions and constraints by a Taylor series expansion to obtain readily implementable update equations. This is a standard approach \citep{schulman2015trust, achiam2017constrained}, and we only present the final result below. 

Consider the class of policies $\{\pi_{\theta} : \theta \in \Theta\}$ where $\theta$ is the parameter of the policy. Let $\theta_{k}$ be the parameter  corresponding to policy $\pi_{k}$. Then, the Taylor series expansion-based approximate solution of \eqref{eq:policy-improvement}  and \eqref{eq:policy-guidance-modified} yields the final form of LOGO as follows: 
\begin{align}
\label{eq:logo-update-equations}
\theta_{k + \nicefrac{1}{2}}  = \theta_k + \sqrt{\dfrac{2 \delta}{g_k^T F_k^{-1} g_k }}F_k^{-1} g_k, \quad  \theta_{k + 1}  = \theta_{k+\nicefrac{1}{2}} - \sqrt{\frac{2 {\delta}_k}{h_k^T L_k^{-1} h_k }}L_k^{-1} h_k,
\end{align}
where $g_{k} = \nabla_{\theta} ~\mathbb{E}_{s \sim d^{\pi_{k}}, a \sim \pi_{\theta}(s, \cdot)}\left[ A^{\pi_{k}}_R(s,a)  \right]$, $F_{k} = \nabla_{\theta}^2 ~  {D}^{\pi_{k}}_{\mathrm{KL}} (\pi_{\theta}, \pi_{k})$, and  $h_{k} = \nabla_{\theta}  ~\mathbb{E}_{s \sim d^{\pi_{k+\nicefrac{1}{2}}}, a \sim \pi_{\theta}(s, \cdot)}  [A^{\pi_{k+\nicefrac{1}{2}}}_{C_{\pi_{k+\nicefrac{1}{2}}}}(s, a)] $,  $L_{k} = \nabla_{\theta}^2 ~  {D}^{\pi_{k+\nicefrac{1}{2}}}_{\mathrm{KL}} (\pi_{\theta}, \pi_{k+\nicefrac{1}{2}})$.

While it is straightforward to compute $A^{\pi}_{C_{\pi}}$ for any policy when the form of the baseline policy $\pi_{\mathrm{b}}$  is known, it is more challenging when only the demonstration data $\mathcal{D}$ generated according to $\pi_{\mathrm{b}}$  is available. We overcome this challenge by training a discriminator using the demonstration data $\mathcal{D}$ and the data $\mathcal{D}_{k+\nicefrac{1}{2}}$  generated by the policy $\pi_{k+\nicefrac{1}{2}}$  \citep{goodfellow2014generative, ho2016generative, kang2018policy} that will approximate the policy dependent reward function $C_{\pi_{k+\nicefrac{1}{2}}}$. Further details on training this discriminator  and a concise form of the algorithm are given in Appendix \ref{sec:app:implementation}.

\subsection{Extension to Incomplete Observation Setting}  
\label{sec:incomplete}


We now discuss how to extend LOGO to the setting where the behavior policy data  contains only incomplete state observations. For instance, consider the problem of learning a policy for a mobile robot to reach a target point without colliding with any obstacles. Collision avoidance requires sensing the presence of obstacles, which is typically achieved by camera/Lidar sensors.  Learning an RL policy  for this  problem requires a high fidelity simulator that models various sensors and their interaction with the dynamics. Such high fidelity simulators are, however,  typically slow and difficult to parallelize, and  training an RL  policy using such a simulator in a sparse reward environment can be very time consuming and computationally expensive. Often, it is much  easier to train an RL policy for the trajectory tracking problem using only a simple kinematics simulator model which only has a lower dimensional state space compared to the original problem. In particular, such a kinematics simulator will only have the position and velocity of the robot as the state instead of the true state with high dimensional camera/Lidar image. Can we use the demonstration data or  behavior policy from this low dimensional simulator  to accelerate the RL training in a high dimensional/fidelity  simulator in sparse reward environments?  We answer this question affirmatively using a simple idea to extend LOGO into such incomplete observation settings.  

Since the behavior policy appears in LOGO only through a policy dependent reward function $C_{\pi}(s, a) = \log ({\pi(s, a)}/{\pi_{\mathrm{b}}(s, a)})$, we propose to replace this with a form that can handle the incomplete observation setting. For any state $s \in \mathcal{S}$, let  $\tilde{s} = o({s})$ be its projection  to a lower dimensional state space $\tilde{\mathcal{S}}$. As explained in the  mobile robot example above,  a behavior policy $\tilde{\pi}_{\mathrm{b}}$ in the incomplete observation setting can be interpreted as mapping from $\tilde{\mathcal{S}}$ to the set of probability distributions over $\mathcal{A}$. We can then replace    $C_{\pi}(s, a)$ in the LOGO algorithm  with  $\tilde{C}_{\pi}(s, a)$ defined as $\tilde{C}_{\pi}(s, a) = \log ({\pi(s, a)}/{\tilde{\pi}_{\mathrm{b}}(o(s), a)})$. When  only the demonstration data with incomplete observation is available instead of the representation of the behavior policy, we propose to train a discriminator to estimate $\tilde{C}_{\pi}$. Let $\tilde{\mathcal{D}} = \{\tilde{\tau}^{i}\}^{n}_{i=1}$ be the demonstration with incomplete observation, where   $\tilde{\tau}^{i} = (\tilde{s}^{i}_{1}, a^{i}_{1},\ldots, \tilde{s}^{i}_{T}, a^{i}_{T})$. Then we train a discriminator using $\tilde{\mathcal{D}}$ and $\mathcal{D}_{\pi}$ to estimate $\tilde{C}_{\pi}$.

\section{Experiments}
\label{sec:Experiments}

We now evaluate LOGO from two perspectives: (i)  Can LOGO learn near-optimally in a sparse reward environment when guided by demonstration data generated by a sub-optimal policy?  (ii)  Can LOGO retain near-optimal performance when guided by sub-optimal and incomplete  demonstration data with sparse rewards?  We perform an exhaustive performance analysis of LOGO, first through simulations under four standard (sparsified) environments on the widely used MuJoCo platform~\citep{todorov2012mujoco}.  Next, we conduct simulations on the Gazebo simulator~\citep{Koenig-2004-394} using LOGO for way-point tracking by a robot in environments with and without obstacles, with the only reward being attainment of way points.  Finally, we transfer the trained models to a real-world TurtleBot robot~\citep{amsters2019turtlebot} to demonstrate LOGO in a realistic setting.
\begin{figure*}[ht]
\centering
\begin{minipage}{0.95\linewidth}
\centering
\includegraphics[width=1\linewidth]{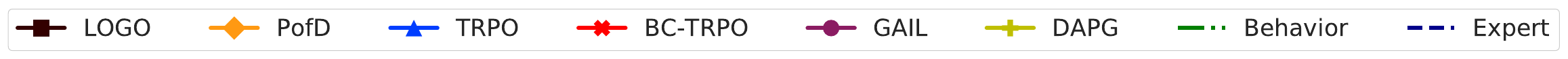}
\end{minipage}\hfill
\begin{subfigure}{1\linewidth}
\centering
\includegraphics[width=0.24\linewidth]{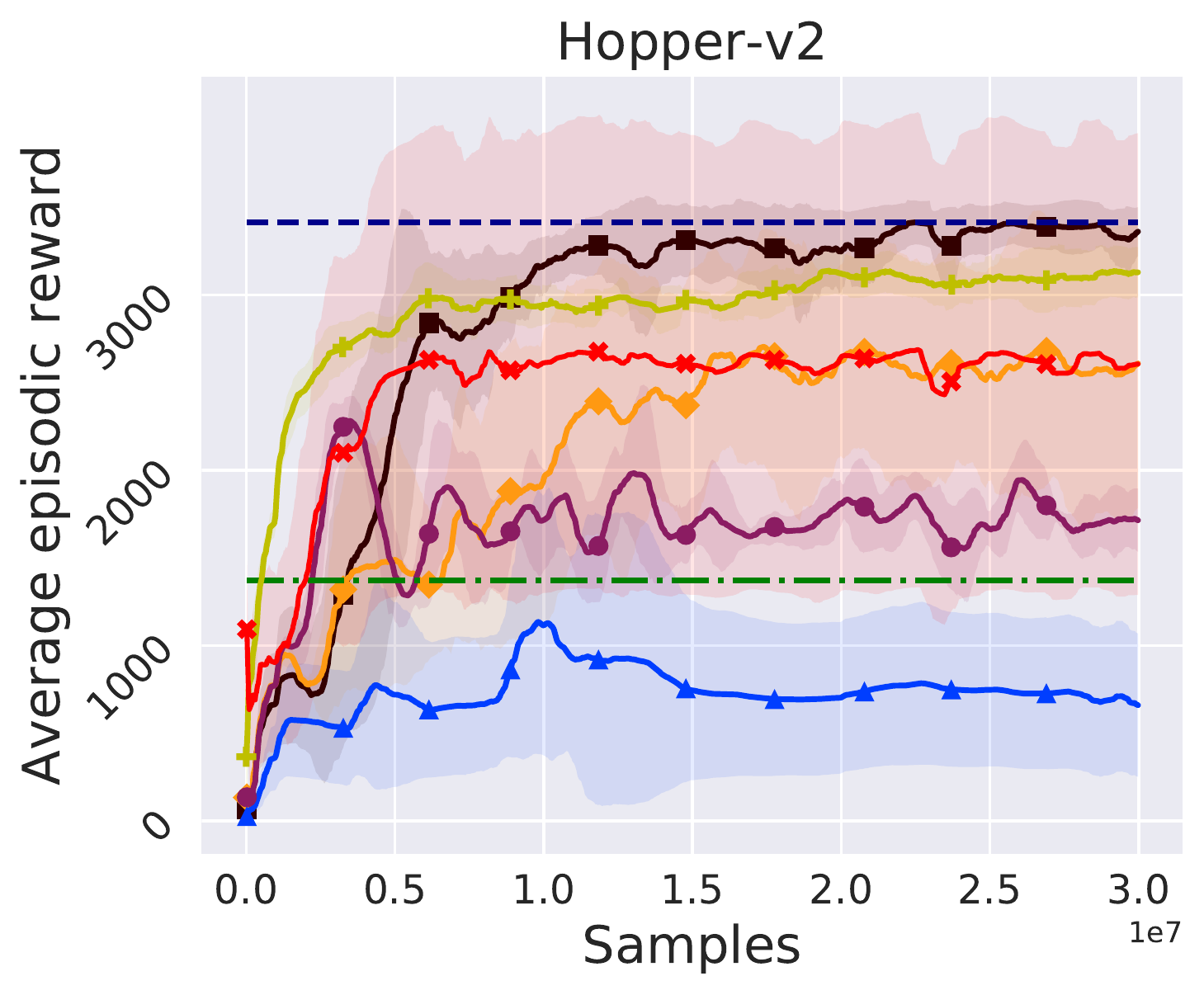}
\hfill
\includegraphics[width=0.24\linewidth]{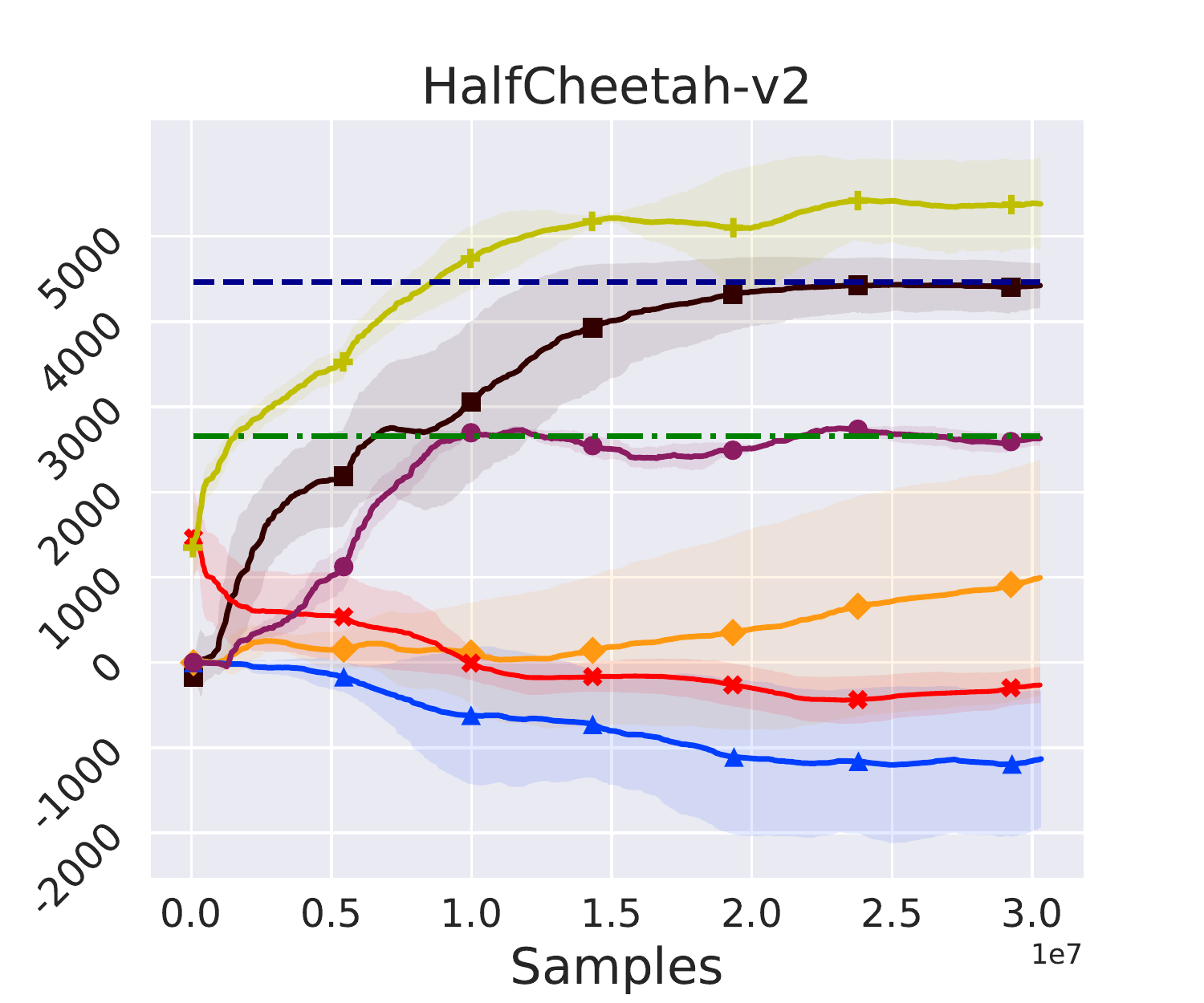}
\hfill
\includegraphics[width=0.24\linewidth]{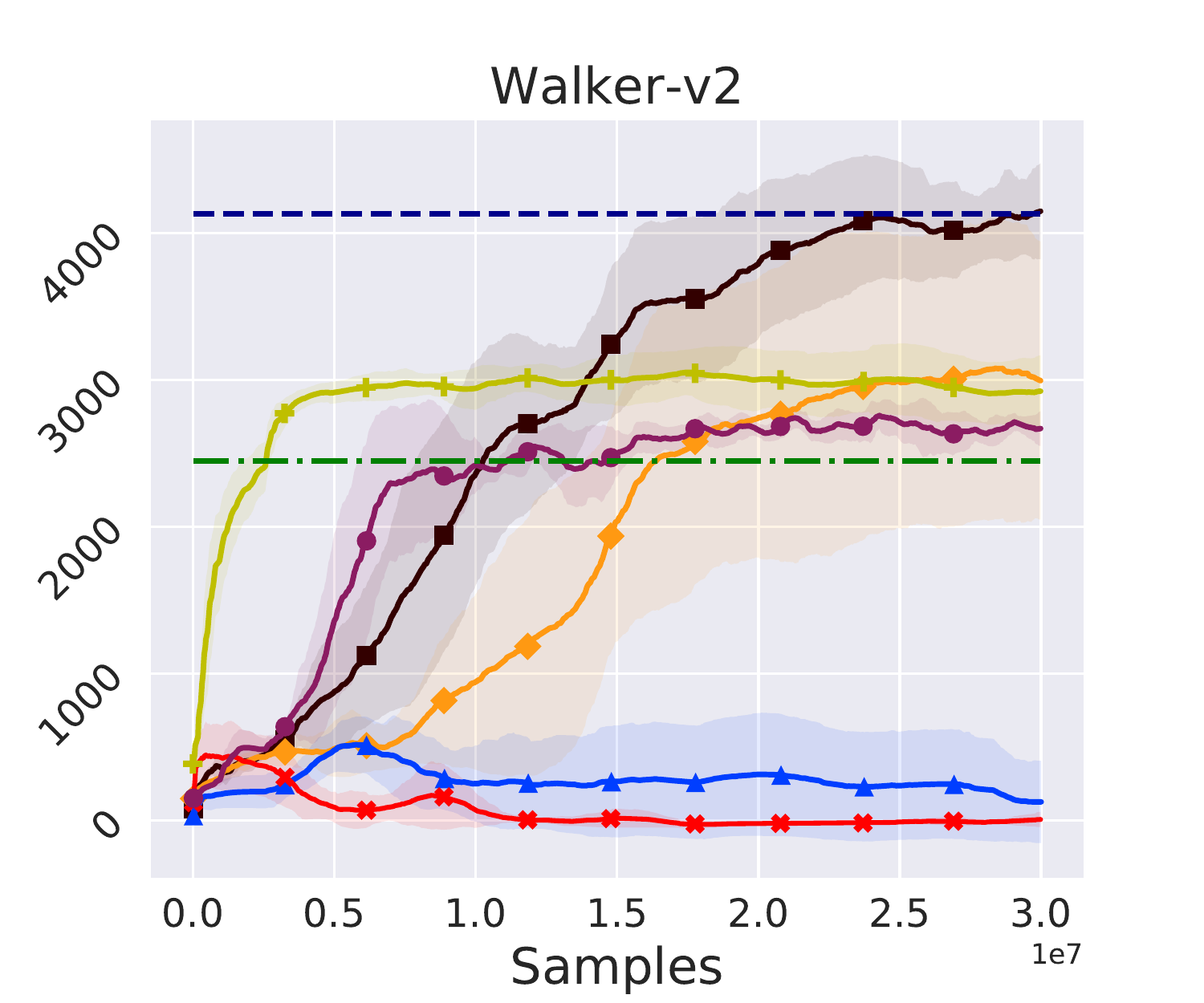}
\hfill
\includegraphics[width=0.24\linewidth]{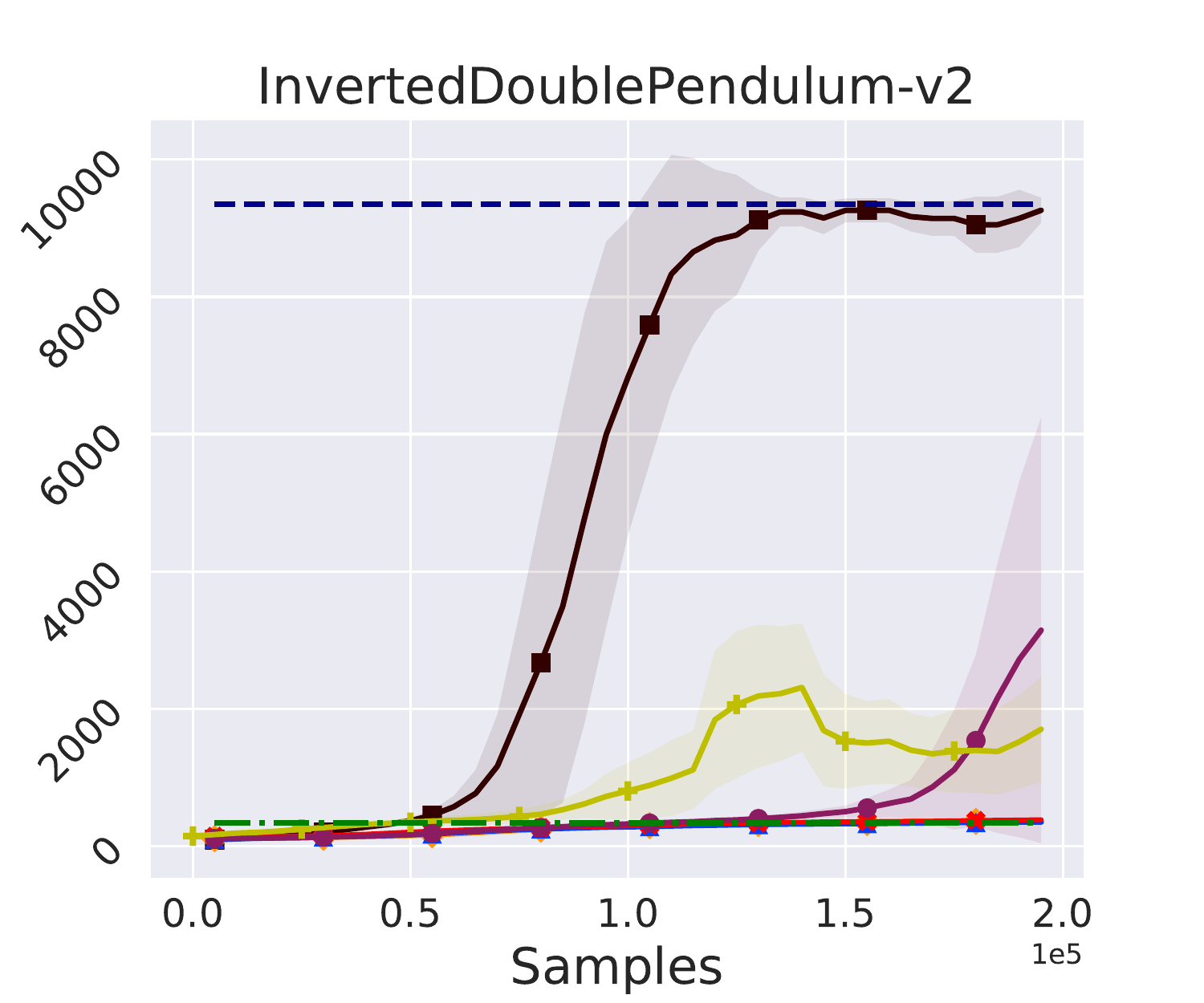}
\caption{{Evaluation on MuJoCo with full offline observation.}}
\label{fig:MuJoCo_Full_Ob}
\end{subfigure}
\vskip\baselineskip
\begin{subfigure}{1\linewidth}
\centering
\includegraphics[width=0.24\linewidth]{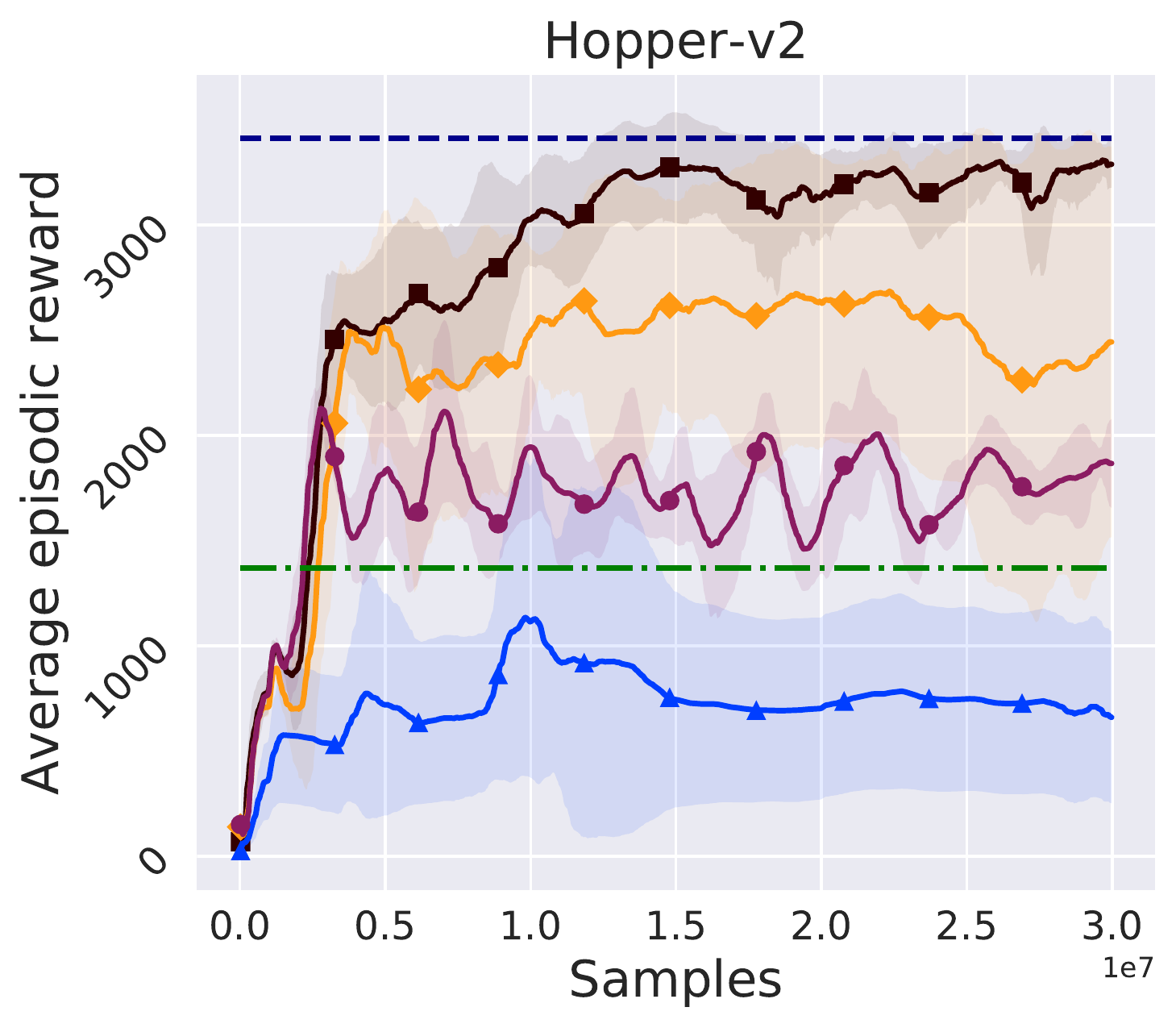}
\hfill
\includegraphics[width=0.24\linewidth]{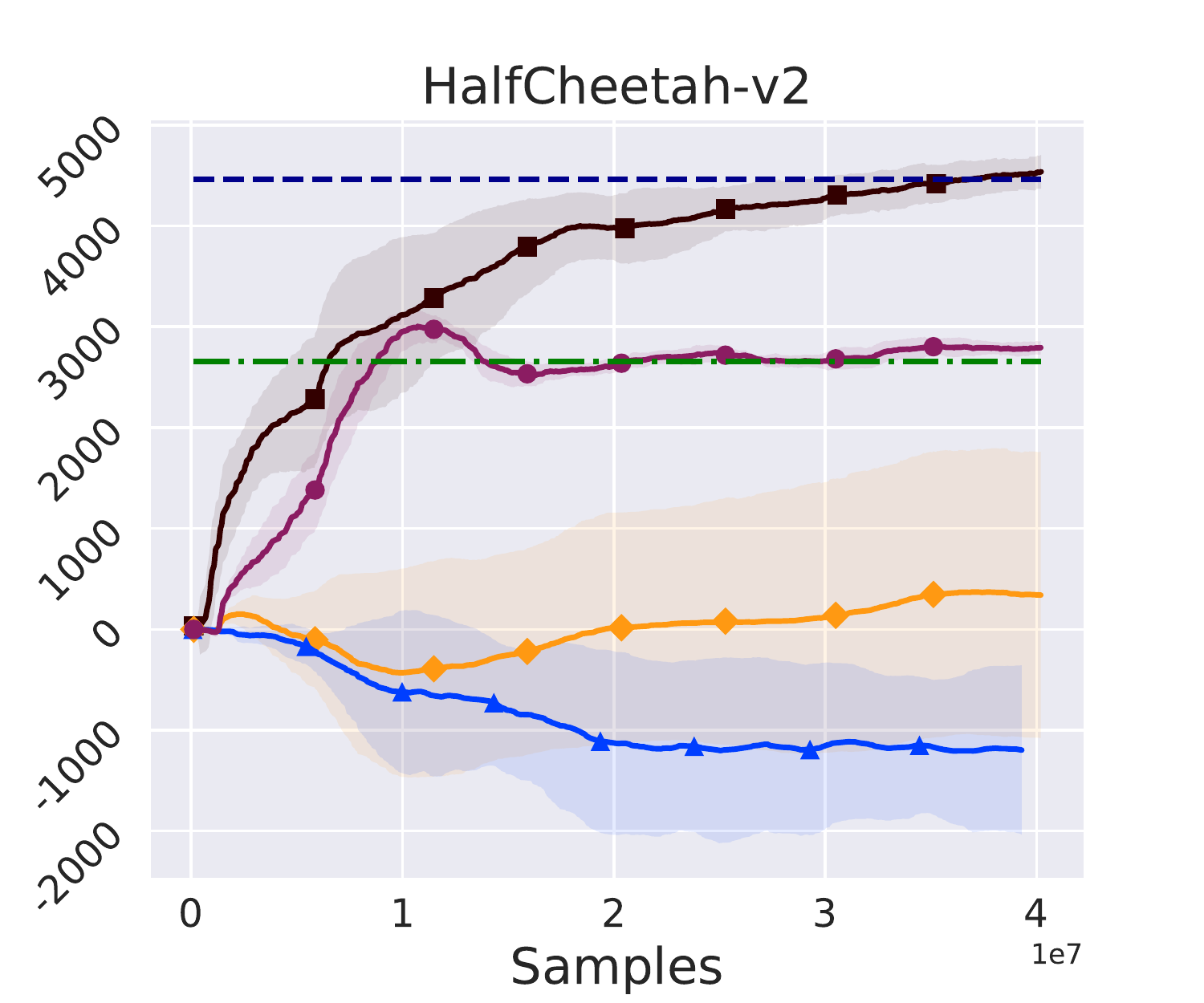}
\hfill
\includegraphics[width=0.24\linewidth]{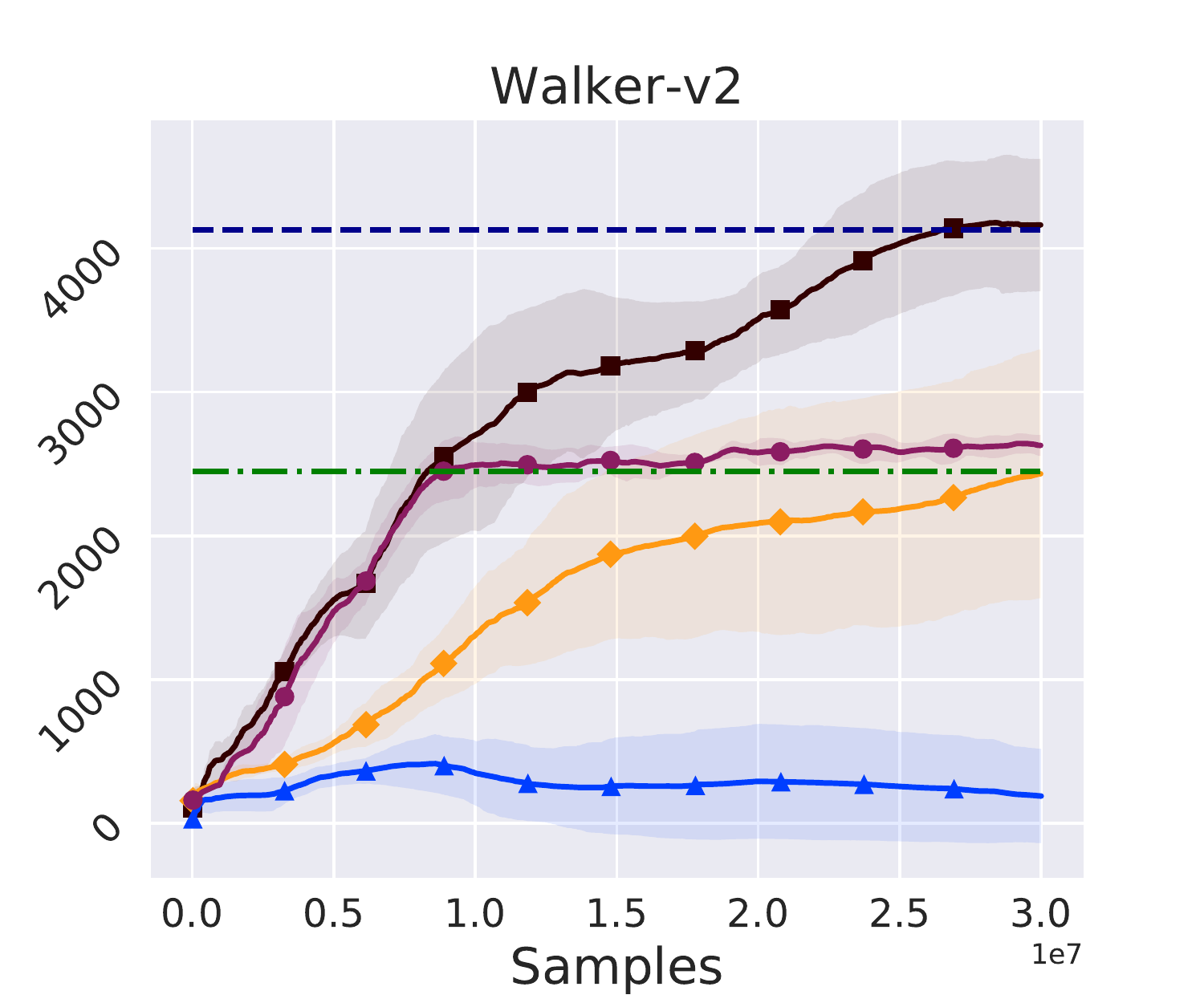}
\hfill
\includegraphics[width=0.24\linewidth]{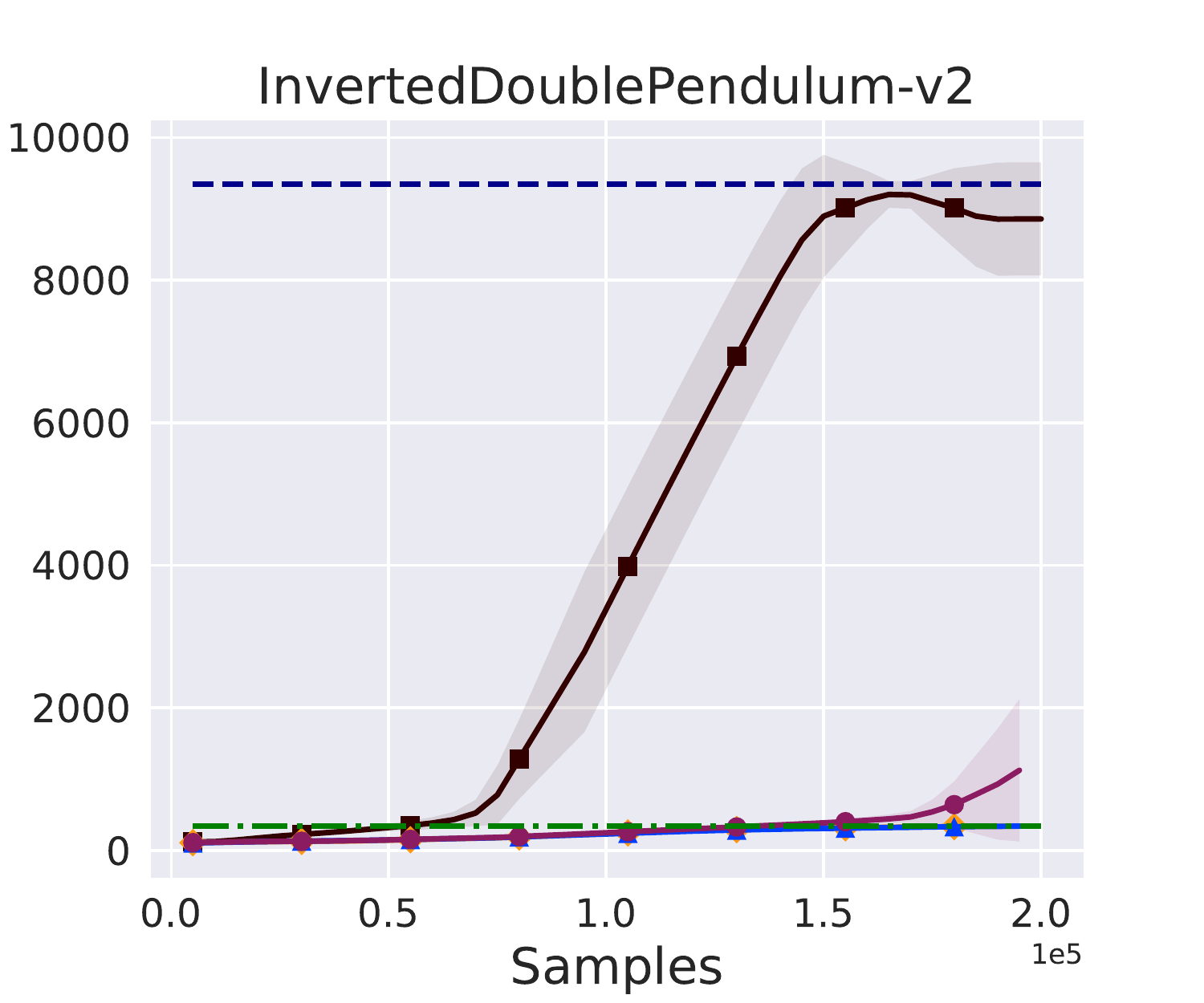}
\caption{{Evaluation on MuJoCo with incomplete offline observation.}}
\label{fig:MuJoCo_Inc_Ob}
\end{subfigure}
\caption{{Evaluation of algorithms on four sparse reward MuJoCo environments with complete (\textbf{\ref{fig:MuJoCo_Full_Ob}}) and incomplete (\textbf{\ref{fig:MuJoCo_Inc_Ob}}) offline data. The solid line corresponds to the mean over five trials with different seeds and the shaded region corresponds to the standard deviation over the trials. }}
\end{figure*}

In what follows, we present a summary of our experiments\footnote{code base and a video of the TurtleBot experiments: \texttt{https://github.com/DesikRengarajan/LOGO}} and results.  We remark that LOGO is relatively easy to implement and train, since its two TRPO-like steps imply that we can utilize much of a TRPO code base.

\subsection{MuJoCo Simulations}
\label{subsec:Mujoco_sim}
In our first set of experiments, we consider four standard environments using the MuJoCo platform.  We introduce sparsity by reducing the events at which reward feedback is provided.  Specifically, for Hopper, HalfCheetah and Walker2d, we provide a reward of $+1$ at each time only after the agent moves forward over $2, 20, \text{and }2$ units from its initial position, respectively.  For InvertedDoublePendulum, we introduce sparsity by providing a reward only at the end of the episode.

Besides LOGO, our candidate algorithms are as follows: (i) \textbf{Expert:} We train TRPO in the dense reward environment to provide the optimal baseline, (ii) \textbf{Behavior:} We use a partially trained expert that is still at a sub-optimal stage of learning to provide behavior data, (iv) \textbf{GAIL:}  We use Generative Adversarial Imitation Learning~\citep{ho2016generative}, which attempts to imitate the Behavior policy (iii) \textbf{TRPO:} We directly use TRPO in the sparse reward setting without any guidance from behavior data (iv) \textbf{POfD:}  We use Policy Optimization from Demonstration~\citep{kang2018policy} as a heuristic approach to exploiting behavior data. { (v) \textbf{BC-TRPO:} We warm start TRPO by performing behavior cloning (BC) on the sub-optimal behavior data. (vi) \textbf{DAPG:} We use Demo Augmented Policy Gradient \citep{RajeswaranKGVST18} which warm starts Natural Policy Gradient (NPG) algorithm using BC, and fine tunes it online using behavior data in a heuristic manner}.     Note that for all algorithms, we evaluate the final performance in the corresponding dense reward environment provided by OpenAI Gym, which provides a standardized way of comparing their relative merits.

\textbf{Setting 1: Sparse Rewards.}   We compare the performance of our candidate algorithms in the sparse reward setting in Figure~\ref{fig:MuJoCo_Full_Ob}, which illustrates their rewards during training.  As expected, TRPO fails to make much meaningful progress during training, while GAIL can at best attain the same performance as the sub-optimal behavior policy. {While BC-TRPO benefits from warm starting, it fails to learn later due to the absence of online guidance as in the case of LOGO.}   POfD and LOGO both use the behavior data to boot strap learning.  POfD suffers from the fact that it is influenced throughout the learning process by the behavior data, which prevents it from learning the best policy. However, LOGO's nuanced exploration using the demonstration data only as guidance enables it to quickly attain optimality. {LOGO outperforms DAPG in all but one environment.}

\textbf{Setting 2: Sparse Rewards and Incomplete State.} We next consider sparse rewards along with reducing the revealed state dimensions in the behavior data of Setting 1.  These state dimensions are selected by eliminating state dimensions revealed until the expert TRPO with dense rewards starts seeing reduced performance.  This ensures that we are pruning valuable information.  Since GAIL, POfD and LOGO all utilize the behavior data, we use the approach of projecting the appropriate reward functions that depend on the behavior data into a lower dimensional state space described in Section~\ref{sec:incomplete}. { We emphasize  that BC-TRPO and DAPG cannot be extended to this setting as they require full state information for warm starting (in BC-TRPO and DAPG) and online fine tuning (in DAPG).} We see from Figure~\ref{fig:MuJoCo_Inc_Ob} that LOGO is still capable of attaining good performance, although training duration is increased.  We will further explore the value of being able to utilize behavior data with such incomplete state information in robot experiments in the next subsection.

\subsection{TurtleBot Experiments}

We now evaluate the performance of LOGO in a real-world using TurtleBot, a two wheeled differential drive robot~\citep{amsters2019turtlebot}. We train policies for two tasks, (i) Waypoint tracking and (ii) Obstacle avoidance, on Gazebo, a high fidelity 3D robotics simulator.
\begin{figure*}[t!]
\begin{subfigure}{.32\linewidth}
\centering
\includegraphics[width=3.5cm,height=3cm,keepaspectratio]{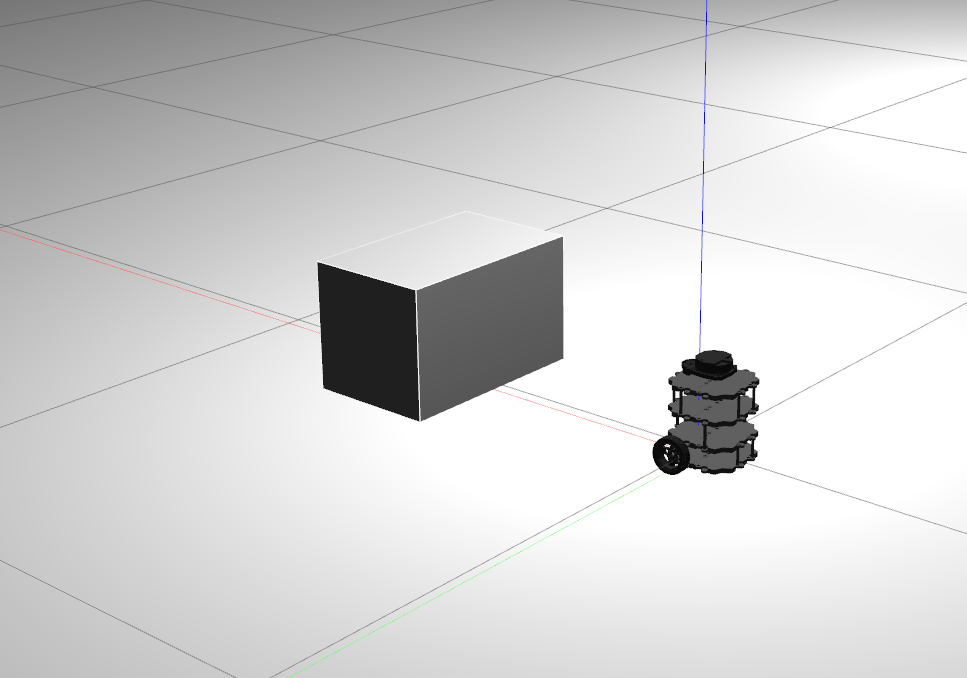}
\caption{Gazebo setup}
\label{fig:Gzbo_Setup}
\end{subfigure}\hfill
\begin{subfigure}{.32\linewidth}
\centering
\includegraphics[width=3.5cm,height=4cm,keepaspectratio]{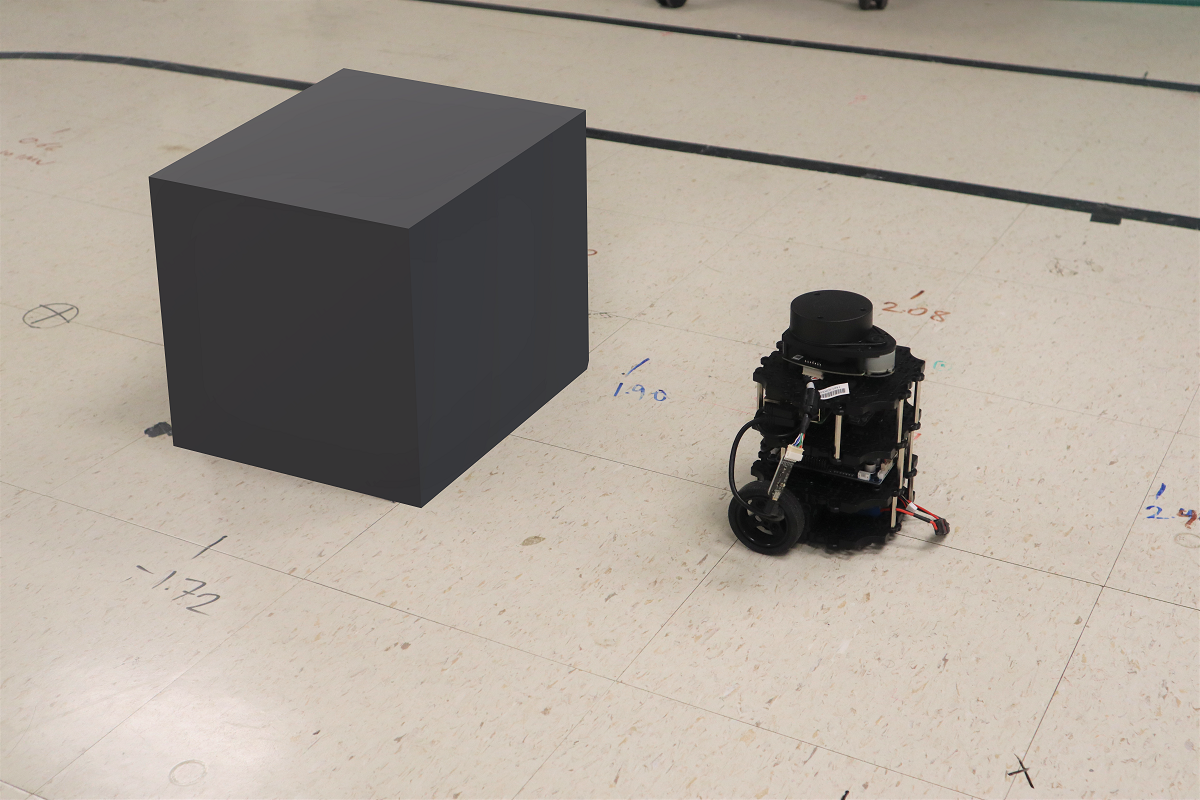}
\caption{Real-world setup}
\label{fig:Real_world_setup}
\end{subfigure}\hfill
\begin{subfigure}{.32\linewidth}
\centering
\includegraphics[width=3.5cm,height=3cm,keepaspectratio]{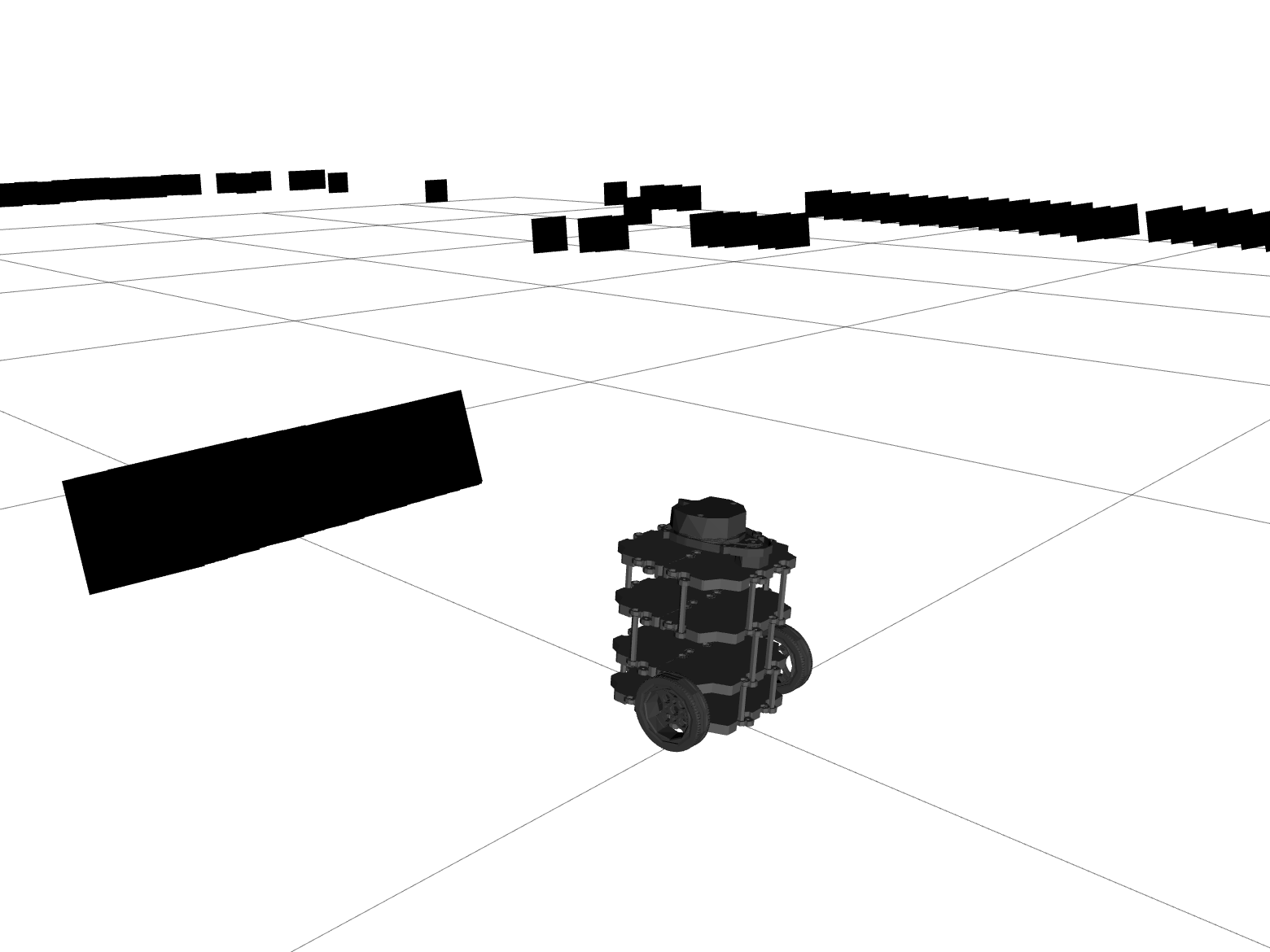}
\caption{Real-world 2D Lidar scan}
\label{fig:Lidar}
\end{subfigure}\hfill
\begin{subfigure}{.32\linewidth}
\centering
\includegraphics[width=1\linewidth]{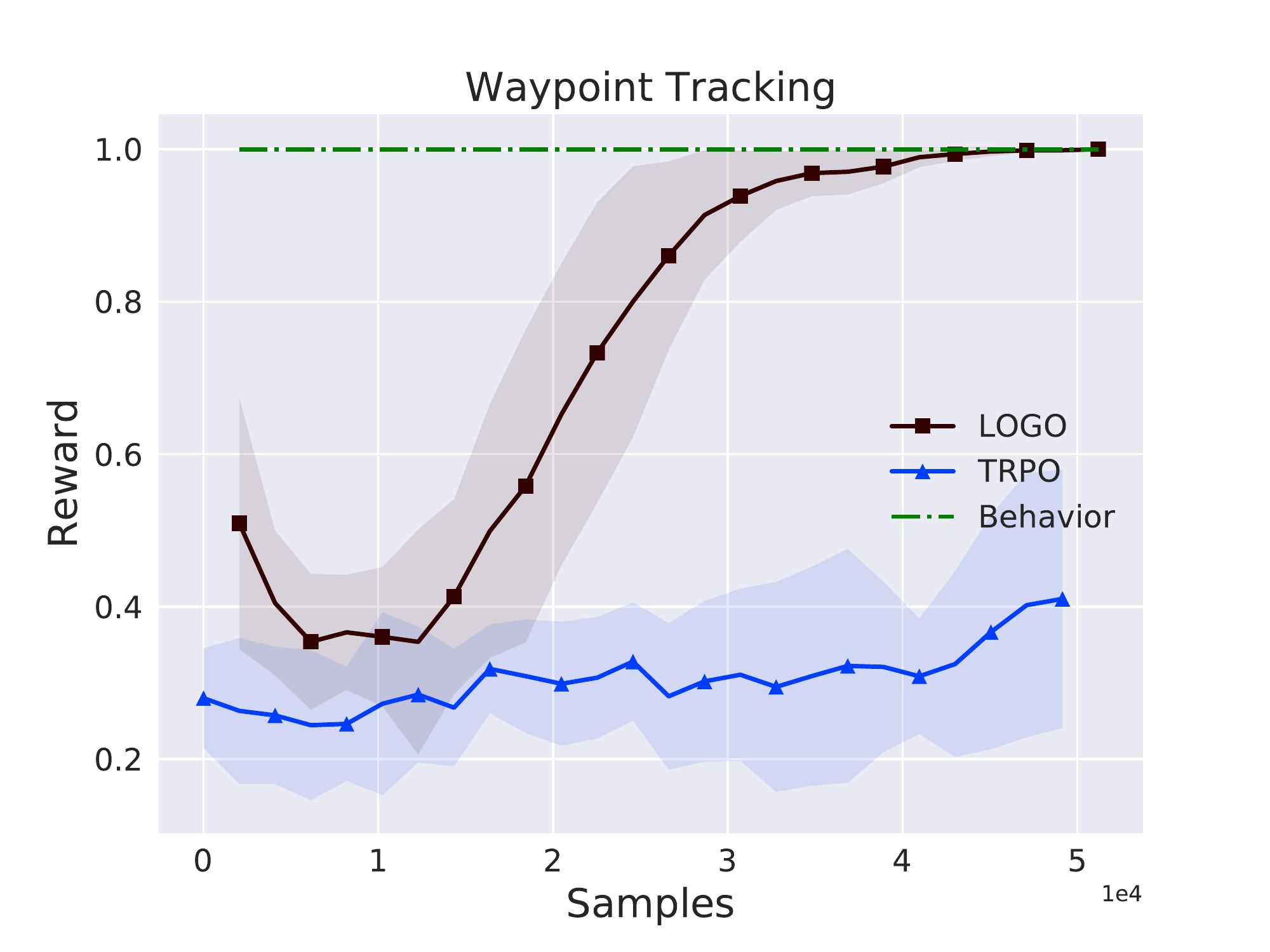}
\caption{Waypoint tracking training}
\label{fig:way_train}
\end{subfigure}\hfill
\begin{subfigure}{.32\linewidth}
\centering
\includegraphics[width=1\linewidth]{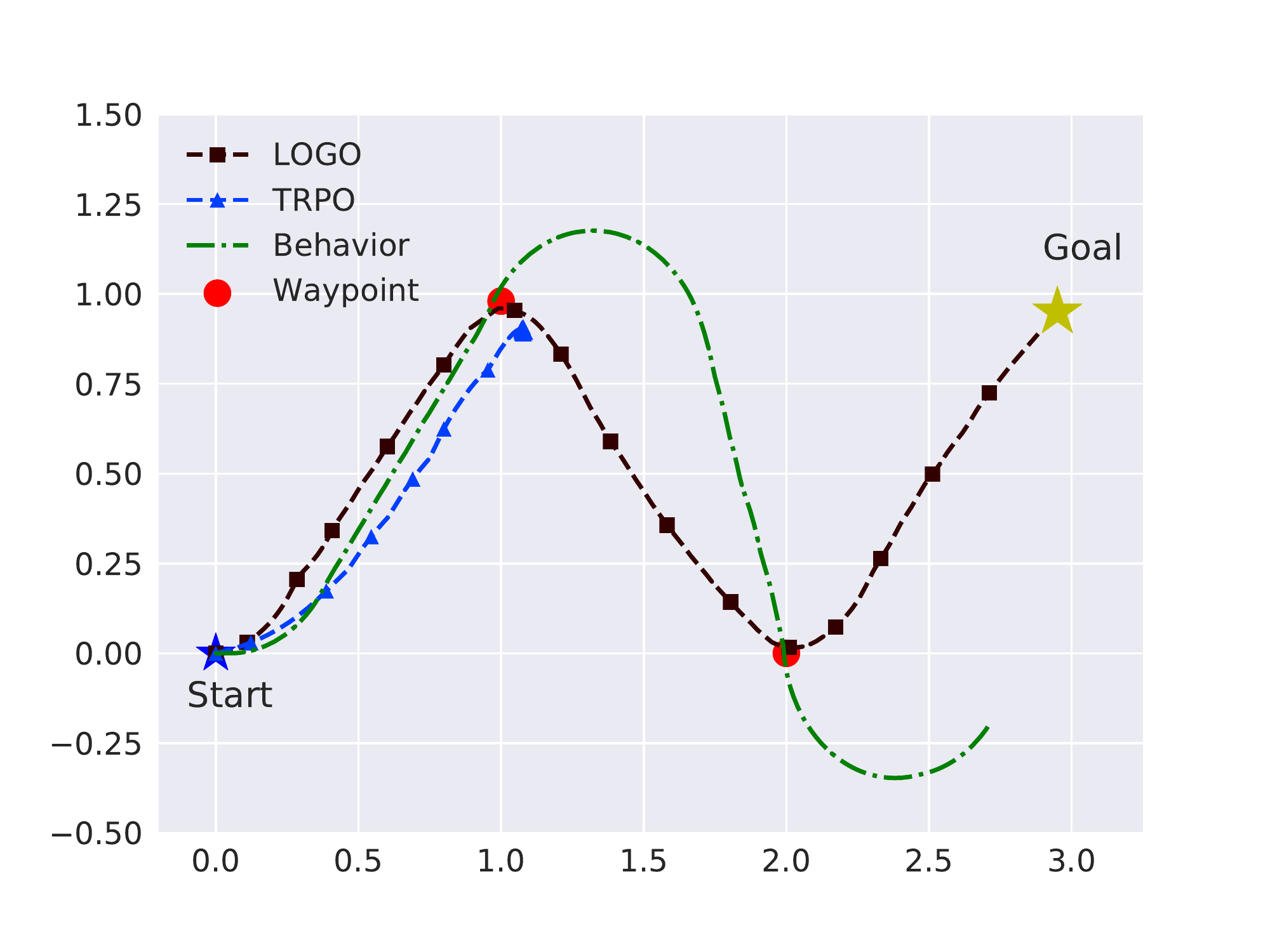}
\caption{Gazebo evaluation}
\label{fig:way_gzbo}
\end{subfigure}\hfill
\begin{subfigure}{.32\linewidth}
\centering
\includegraphics[width=1\linewidth]{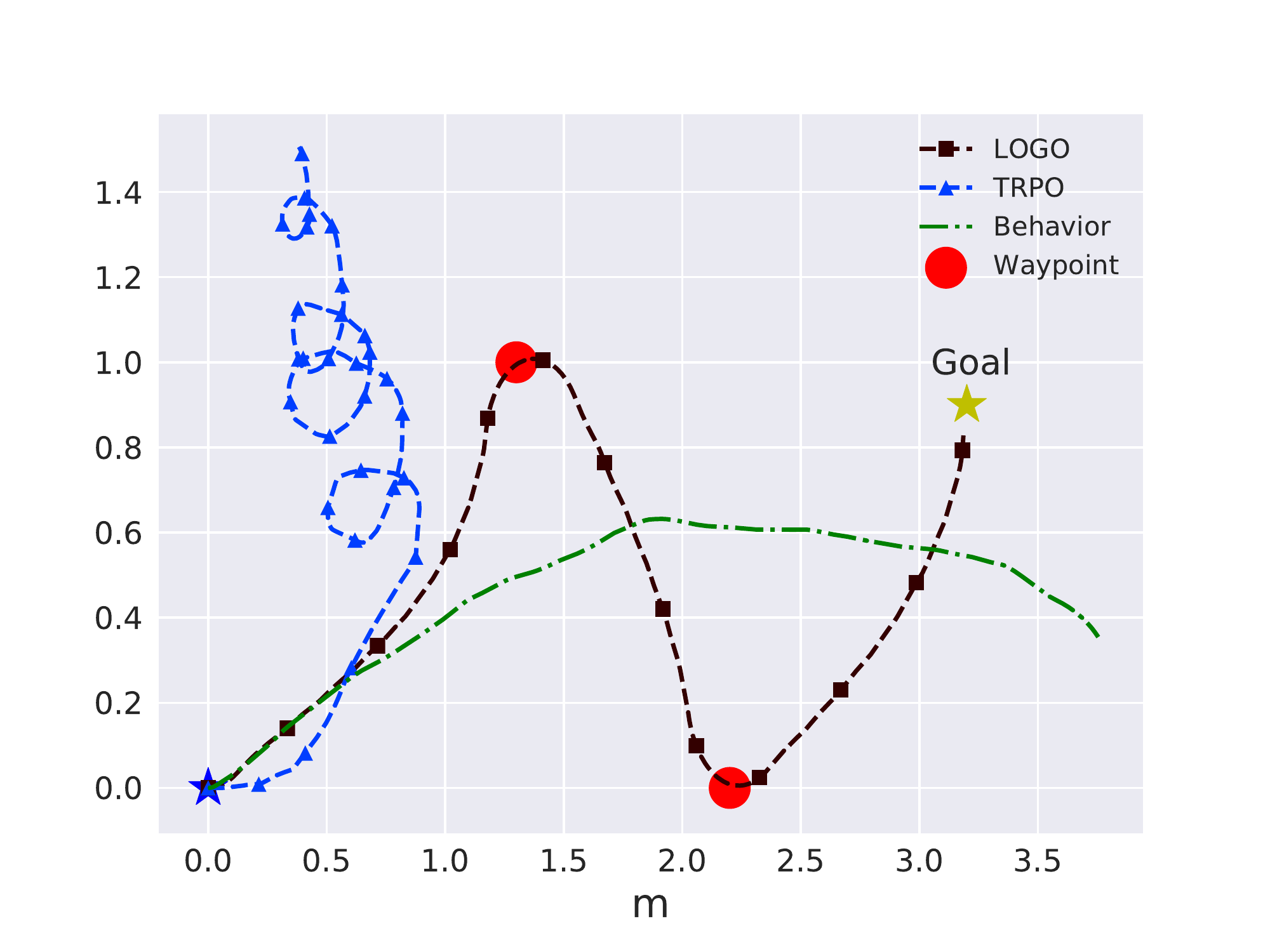}
\caption{Real-world evaluation}
\label{fig:way_rw}
\end{subfigure}\hfill
\begin{subfigure}{.32\linewidth}
\centering
\includegraphics[width=1\linewidth]{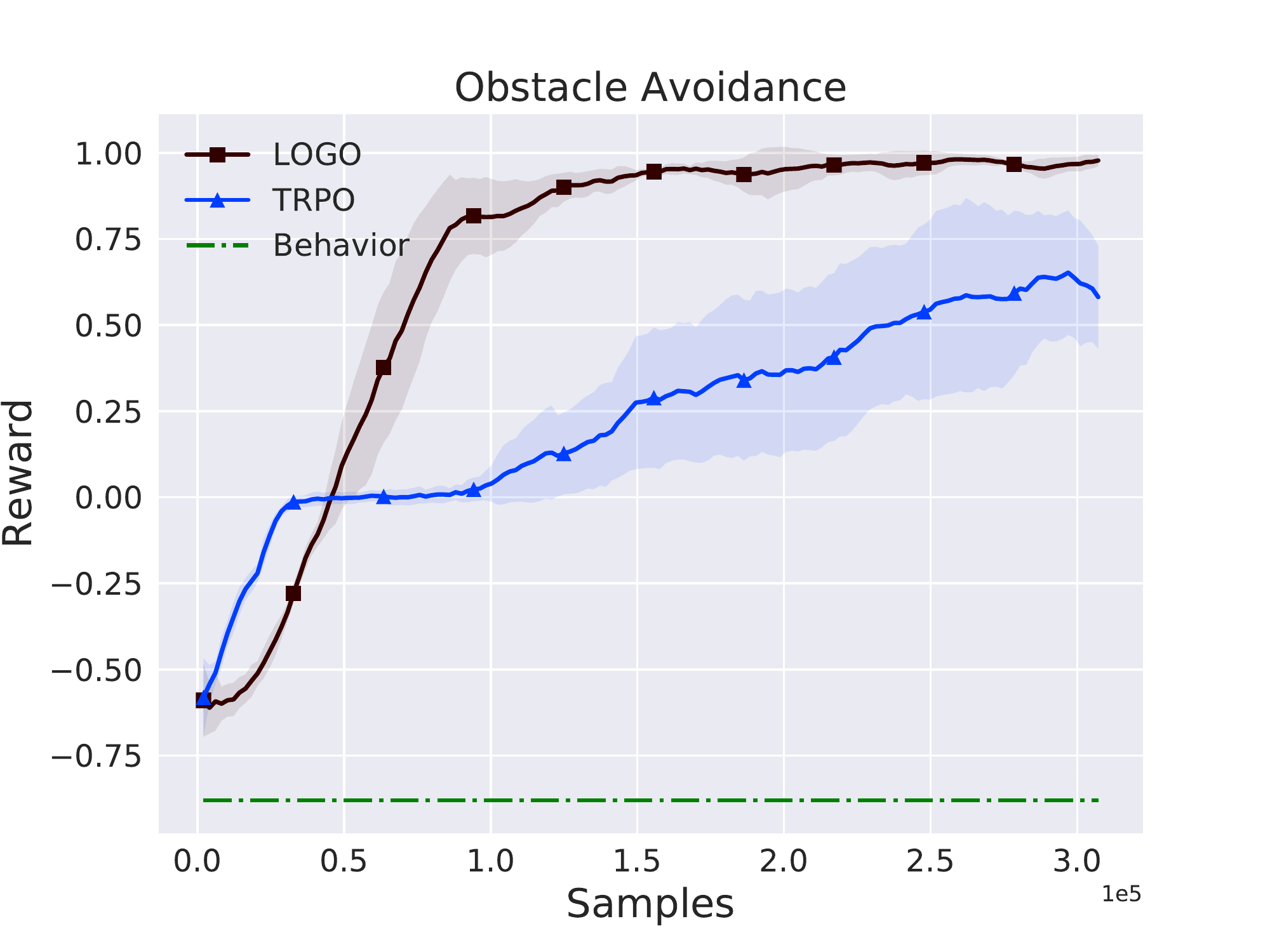}
\caption{Obstacle avoidance training}
\label{fig:obs_train}
\end{subfigure}\hfill
\begin{subfigure}{.32\linewidth}
\centering
\includegraphics[width=1\linewidth]{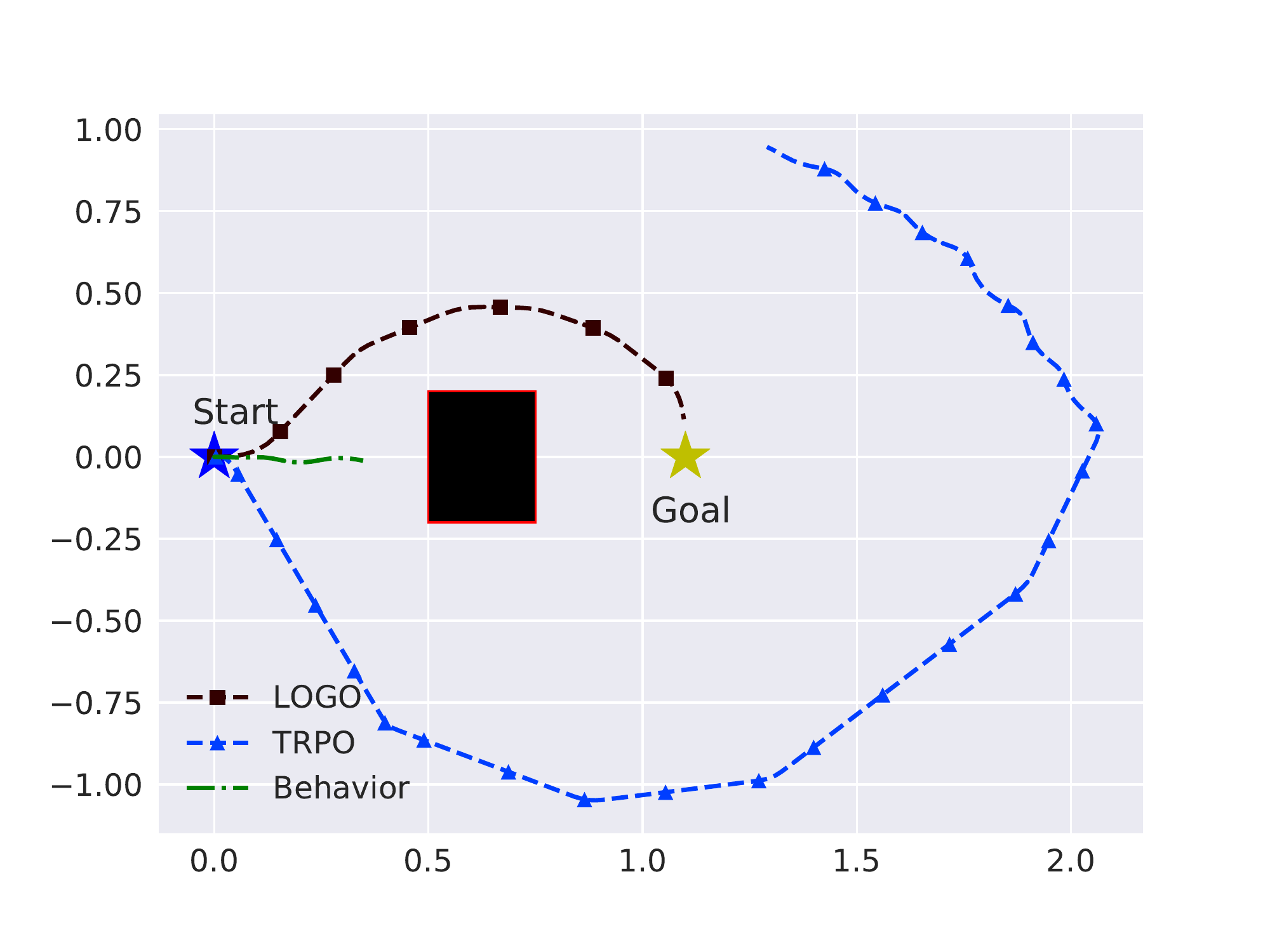}
\caption{Gazebo evaluation}
\label{fig:obs_gzbo}
\end{subfigure}\hfill
\begin{subfigure}{.32\linewidth}
\centering
\includegraphics[width=1\linewidth]{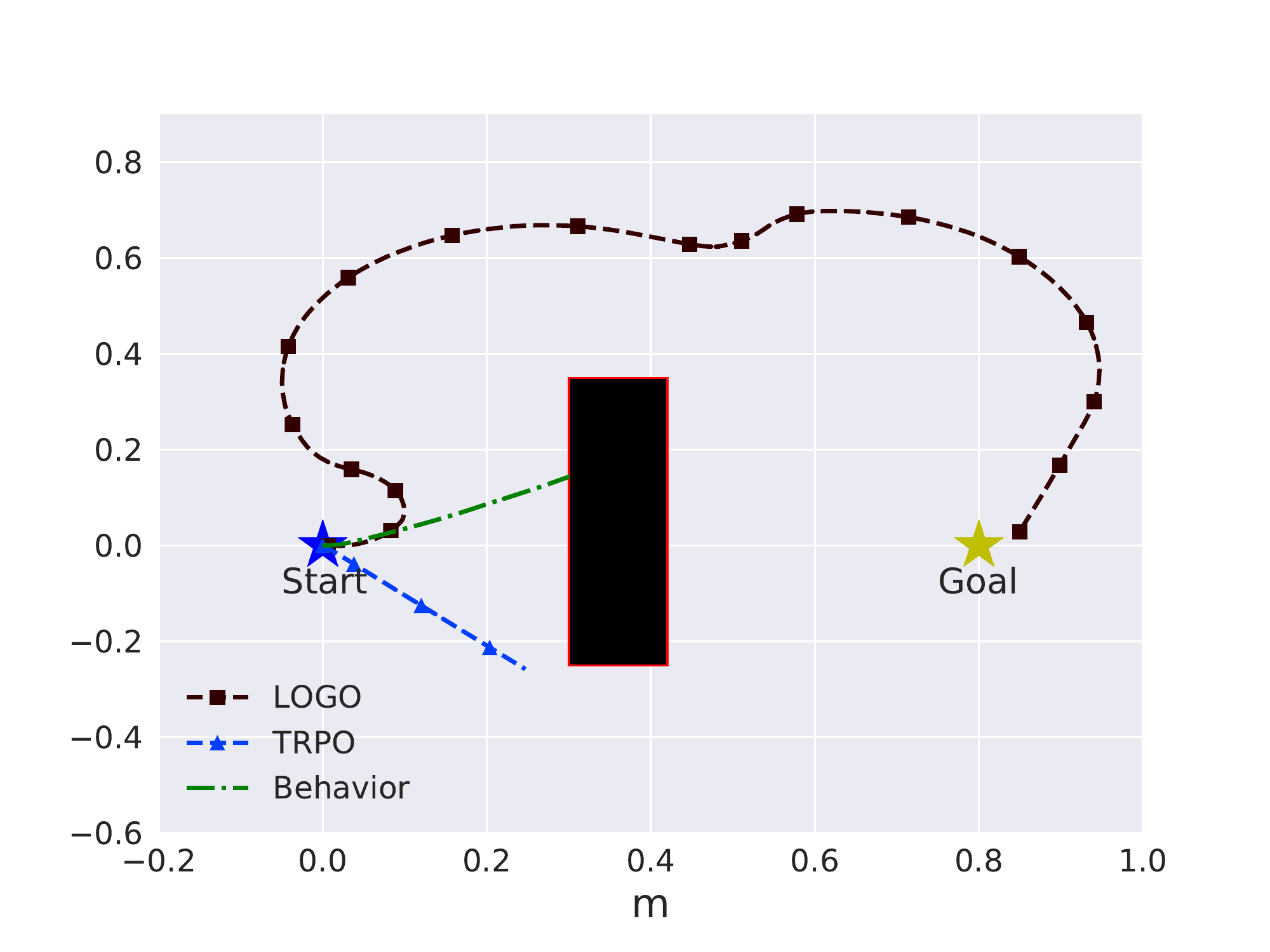}
\caption{Real-world evaluation}
\label{fig:obs_rw}
\end{subfigure}\hfill
\caption{Training and evaluation under Gazebo simulations and real-world experiments.  \textbf{(a)-(c)} show the experiment setup, with the TurtleBot added for visual reference in \textbf{(c)}.  \textbf{(d)-(f)} show waypoint tracking performance.  \textbf{(g)-(i)} show waypoint tracking with obstacle avoidance performance.   
}
\end{figure*}

\textbf{Task 1: Waypoint Tracking:}  The goal is to train a policy that takes the robot to an arbitrary waypoint within 1 meter of its current position in an episode of 20 seconds.  The episode concludes when the robot either reaches the waypont or the episode timer expires.  The state space of the agent are its relative $x,y,$ coordinates and orientation $\phi$ to the  waypoint. The actions are its linear and angular velocities. The agent receives a sparse reward of $+1$ if it reaches the waypoint, and $0$ otherwise.  We created a sub-optimal Behavior policy by training TRPO with dense rewards on our own low fidelity kinematic model Python-based simulator with the same state and action space.  While it shows reasonable waypoint tracking, the trajectories that it generates in Gazebo are inefficient, and its real-world waypoint tracking is poor (Figures~2 (d)-(f)).  As expected, TRPO shows poor performance in this sparse reward setting and often does not attain the desired waypoint before the episode concludes, giving the impression of aimless circling seen in Figure~2(f).   LOGO is able to effectively utilize the Behavior policy and shows excellent waypoint tracking seen in Figures~2 (d)-(f).

\textbf{Task 2: Obstacle Avoidance:} The goal and rewards are the same as in Task 1, with a penalty of $-1$ for collision with the obstacle.  The complete state space is now augmented by a 2D Lidar scan in addition to coordinates and orientation described in Task 1.  However, the Behavior policy is still generated via the low fidelity kinematic simulator \emph{without} the obstacle, i.e., it is created on a lower dimensional state space.  As seen in Figures 2 (g)-(i), this renders the Behavior policy impractical for Task 2, since it almost always hits the obstacle in both Gazebo and the real-world.  However, it does possess information on how to track a waypoint, and when combined with the full state information, this nugget is utilized very effectively by LOGO to learn a viable policy as seen in Figures 2 (g)-(i).  Further, TRPO in this sparse reward setting does poorly and often collides with the obstacle in real-world experiments as seen in Figure 2 (i).

\section{Conclusion}

In this paper, we studied the problem of designing RL algorithms for problems in which only sparse reward feedback is provided, but offline data collected from a sub-optimal behavior policy, possibly with incomplete state information is also available.  Our key insight was that by dividing the training problem into two steps of (i) policy improvement and (ii) policy guidance using the offline data, each using the concept of trust region based policy optimization, we can both obtain principled policy improvement and desired alignment with the behavior policy.  We designed an algorithm entitled LOGO around this insight and showed how it can be instantiated in two TRPO-like steps.  We both proved analytically that LOGO can exploit the advantage of the behavior policy, as well as validated its performance through both extensive simulations and illustrative real-world experiments.

\section{Acknowledgement}
This work was supported in part by the National Science Foundation (NSF) grants  NSF-CRII-CPS-1850206, NSF-CAREER-EPCN-2045783, NSF-CPS-2038963 and NSF-CNS 1955696, and U.S. Army Research Office (ARO) grant W911NF-19-1-0367. Any opinions, findings, and conclusions or recommendations expressed in this material are those of the authors and do not necessarily reflect the views of the sponsoring agencies. 

\bibliographystyle{iclr2022_conference}
\bibliography{iclr2022_conference_ref.bib}

\newpage

\appendix

\section{Useful Technical Results}

We will use the well known Performance Difference Lemma  in our analysis \citep{kakade2002approximately}
\begin{lemma}[Performance Difference Lemma]
\label{lem:PDL}
For any two policies  $\pi$ and $\tilde{\pi}$, 
\begin{align*}
J_{R}(\pi) - J_{R}(\tilde{\pi}) = (1 - \gamma)^{-1} ~ \mathbb{E}_{s \sim d^{\pi}, a \sim \pi(s, \cdot)} [A^{\tilde{\pi}}_{R}(s, a)]
\end{align*}
\end{lemma}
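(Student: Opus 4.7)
My plan is to prove the Performance Difference Lemma by the classical telescoping argument due to Kakade and Langford. The strategy is to rewrite $J_R(\tilde{\pi})$ as an expectation along trajectories induced by $\pi$ (not $\tilde{\pi}$), so that the two discounted return expressions can be combined under a common sampling distribution, and then to recognize the resulting per-step term as an advantage of $\tilde{\pi}$.

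First I would use the identity $J_R(\tilde{\pi}) = \mathbb{E}_{s_0 \sim \mu}[V_R^{\tilde{\pi}}(s_0)]$ and the fact that $s_0$ has the same distribution $\mu$ regardless of the policy used to roll out the trajectory. This lets me write
\begin{align*}
J_R(\tilde{\pi}) = \mathbb{E}_{\tau \sim \pi}\bigl[V_R^{\tilde{\pi}}(s_0)\bigr].
\end{align*}
Next I would apply the telescoping trick
\begin{align*}
-V_R^{\tilde{\pi}}(s_0) = \mathbb{E}_{\tau \sim \pi}\!\left[\sum_{t=0}^{\infty} \bigl(\gamma^{t+1} V_R^{\tilde{\pi}}(s_{t+1}) - \gamma^t V_R^{\tilde{\pi}}(s_t)\bigr)\right],
\end{align*}
which holds because the partial sums telescope to $-V_R^{\tilde{\pi}}(s_0)$ (assuming the usual boundedness so that $\gamma^t V_R^{\tilde{\pi}}(s_t) \to 0$).

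Adding $J_R(\pi) = \mathbb{E}_{\tau \sim \pi}[\sum_t \gamma^t R(s_t,a_t)]$ and grouping the terms inside a single expectation yields
\begin{align*}
J_R(\pi) - J_R(\tilde{\pi}) = \mathbb{E}_{\tau \sim \pi}\!\left[\sum_{t=0}^{\infty} \gamma^t \bigl(R(s_t,a_t) + \gamma V_R^{\tilde{\pi}}(s_{t+1}) - V_R^{\tilde{\pi}}(s_t)\bigr)\right].
\end{align*}
I would then take the expectation over $s_{t+1} \sim P(\cdot\mid s_t,a_t)$ conditioned on $(s_t,a_t)$: by the Bellman expansion of $Q_R^{\tilde{\pi}}$, the bracketed quantity becomes $Q_R^{\tilde{\pi}}(s_t,a_t) - V_R^{\tilde{\pi}}(s_t) = A_R^{\tilde{\pi}}(s_t,a_t)$.

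Finally I would collapse the time index using the discounted state visitation distribution. Since $a_t \sim \pi(s_t,\cdot)$ and $\mathbb{P}(s_t = s \mid \pi)$ appears weighted by $\gamma^t$, the definition $d^{\pi}(s) = (1-\gamma)\sum_{t=0}^{\infty} \gamma^t \mathbb{P}(s_t = s \mid \pi)$ gives
\begin{align*}
\sum_{t=0}^{\infty} \gamma^t \mathbb{E}_{s_t,a_t}[A_R^{\tilde{\pi}}(s_t,a_t)] = (1-\gamma)^{-1}\mathbb{E}_{s \sim d^{\pi}, a \sim \pi(s,\cdot)}[A_R^{\tilde{\pi}}(s,a)],
\end{align*}
which is exactly the claim. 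The only subtle point is justifying the interchange of sum and expectation in the telescoping step; under the standing assumption that $R$ is bounded (implicit through $R_{\max}$ used elsewhere), $V_R^{\tilde{\pi}}$ is uniformly bounded by $R_{\max}/(1-\gamma)$ and dominated convergence applies, so this is routine rather than a true obstacle.
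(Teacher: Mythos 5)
Your proof is correct; it is the classical Kakade--Langford telescoping argument, and the paper itself states this lemma without proof (citing it as well known), so there is no competing derivation to diverge from. The route you take --- telescoping $\gamma^{t} V_R^{\tilde{\pi}}(s_t)$, identifying the per-step bracket as $A_R^{\tilde{\pi}}(s_t,a_t)$ via the Bellman expansion, and collapsing the time index through $d^{\pi}$ --- is exactly the technique the paper deploys when proving its generalized version (Lemma \ref{lem:modified-PDL}), where your final collapsing step appears as Lemma \ref{lem:traj-to-dpi}.
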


The following result gives a bound on the difference between discounted state visitation distributions of two policies in terms of their average total variation difference.

\begin{lemma}[Lemma 3, \citep{achiam2017constrained}]
\label{lem:diff-d-b1}
For any two policies $\pi$ and $\pi'$, 
\begin{align*}
\norm{d^{\pi} - d^{\pi'}}_{1} \leq 2 \gamma  (1 - \gamma)^{-1} ~  D^{\pi}_{\mathrm{TV}}(\pi, \pi') 
\end{align*}
\end{lemma}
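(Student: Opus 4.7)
The plan is to use the closed-form expression $d^\pi = (1-\gamma)\mu(I-\gamma P_\pi)^{-1}$, where $P_\pi(s'\mid s) = \sum_a \pi(a\mid s) P(s'\mid s,a)$ is the state-to-state kernel under $\pi$, viewed as a row-stochastic matrix, and $\mu$ is the initial distribution, viewed as a row vector. Writing $G_\pi = (I - \gamma P_\pi)^{-1} = \sum_{t=0}^{\infty} \gamma^t P_\pi^t$, I would first apply the resolvent identity $G_\pi - G_{\pi'} = G_\pi(G_{\pi'}^{-1} - G_\pi^{-1})G_{\pi'} = \gamma\, G_\pi (P_\pi - P_{\pi'}) G_{\pi'}$. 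Left-multiplying by $(1-\gamma)\mu$ and using $(1-\gamma)\mu G_\pi = d^\pi$ produces the key identity
\[
d^\pi - d^{\pi'} \;=\; \gamma\, d^\pi (P_\pi - P_{\pi'})\, G_{\pi'}.
\]

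Next I would push the $\ell_1$ norm through this identity. Since $P_{\pi'}$ is row-stochastic, $\|v P_{\pi'}\|_1 \le \|v\|_1$ for every signed row vector $v$, and summing the Neumann expansion gives $\|v G_{\pi'}\|_1 \le \|v\|_1/(1-\gamma)$. Taking $v = d^\pi(P_\pi - P_{\pi'})$ yields $\|d^\pi - d^{\pi'}\|_1 \le \tfrac{\gamma}{1-\gamma}\|d^\pi(P_\pi - P_{\pi'})\|_1$, so it remains to bound the residual by $2 D^\pi_{\mathrm{TV}}(\pi,\pi')$.

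For the last step I would expand coordinate-wise: $[d^\pi(P_\pi - P_{\pi'})](s') = \sum_{s,a} d^\pi(s)(\pi(a\mid s) - \pi'(a\mid s)) P(s'\mid s,a)$. Moving the absolute value inside the sums over $s$ and $a$ and then summing over $s'$ collapses the factor $\sum_{s'} P(s'\mid s,a) = 1$, leaving $\sum_s d^\pi(s)\sum_a |\pi(a\mid s)-\pi'(a\mid s)| = 2\,\mathbb{E}_{s\sim d^\pi}[D_{\mathrm{TV}}(\pi(s,\cdot),\pi'(s,\cdot))] = 2D^\pi_{\mathrm{TV}}(\pi,\pi')$. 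Chaining the two inequalities delivers the claimed bound.

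I do not expect a substantive obstacle; the argument is essentially bookkeeping. The only points that require care are maintaining a consistent row-vector convention so the resolvent identity lines up with the definition of $d^\pi$, noting that the $\ell_1$-contraction of $P_{\pi'}$ applies to arbitrary signed row vectors (which is what justifies feeding it the signed quantity $d^\pi(P_\pi - P_{\pi'})$), and observing that the asymmetric appearance of $d^\pi$ in the bound reflects which side of the decomposition was chosen; the dual identity $G_\pi - G_{\pi'} = \gamma G_{\pi'}(P_\pi - P_{\pi'})G_\pi$ would instead average the TV against $d^{\pi'}$.
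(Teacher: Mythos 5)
Your argument is correct: the resolvent identity $G_\pi - G_{\pi'} = \gamma\, G_\pi (P_\pi - P_{\pi'}) G_{\pi'}$, the $\ell_1$-contraction of row-stochastic kernels giving the $(1-\gamma)^{-1}$ factor, and the coordinate-wise collapse of $\sum_{s'} P(s'\mid s,a)=1$ all check out, and the asymmetry remark about which visitation distribution weights the TV term is accurate. The paper itself offers no proof — it imports the statement directly from \citet{achiam2017constrained} — and your derivation is essentially the same resolvent-based argument given in that source, so there is nothing to reconcile.
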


We will use the following well known result. 
\begin{lemma}
\label{lem:pinksers-to-tv-kl}
For any two policies $\pi$ and $\pi'$, $ D^{\pi}_{\mathrm{TV}}(\pi, \pi') \leq \sqrt{D^{\pi}_{\mathrm{KL}}(\pi, \pi') /2}$.
\end{lemma}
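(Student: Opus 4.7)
The plan is to reduce the statement to the classical (pointwise) Pinsker inequality applied at each state and then to pull the square root outside the expectation over $s \sim d^{\pi}$ via Jensen's inequality. The classical Pinsker inequality is a textbook result that for any two probability distributions $p$ and $q$ on a common measurable space, $D_{\mathrm{TV}}(p, q) \leq \sqrt{D_{\mathrm{KL}}(p, q)/2}$; I would simply cite it rather than re-derive it.

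First, I would unpack the notation from the Definitions and Notations paragraph: $D^{\pi}_{\mathrm{TV}}(\pi, \pi') = \mathbb{E}_{s \sim d^{\pi}}[D_{\mathrm{TV}}(\pi(s, \cdot), \pi'(s, \cdot))]$ and similarly $D^{\pi}_{\mathrm{KL}}(\pi, \pi') = \mathbb{E}_{s \sim d^{\pi}}[D_{\mathrm{KL}}(\pi(s, \cdot), \pi'(s, \cdot))]$. For each fixed state $s$, the conditional action distributions $\pi(s, \cdot)$ and $\pi'(s, \cdot)$ are genuine probability distributions over $\mathcal{A}$, so the classical Pinsker inequality applies pointwise in $s$, yielding
\begin{equation*}
D_{\mathrm{TV}}(\pi(s, \cdot), \pi'(s, \cdot)) \;\leq\; \sqrt{D_{\mathrm{KL}}(\pi(s, \cdot), \pi'(s, \cdot))/2}.
\end{equation*}

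Next I would take expectation over $s \sim d^{\pi}$ on both sides, which gives
\begin{equation*}
D^{\pi}_{\mathrm{TV}}(\pi, \pi') \;\leq\; \mathbb{E}_{s \sim d^{\pi}}\!\left[\sqrt{D_{\mathrm{KL}}(\pi(s, \cdot), \pi'(s, \cdot))/2}\right].
\end{equation*}
Then, since $x \mapsto \sqrt{x}$ is concave on $[0, \infty)$, Jensen's inequality yields
\begin{equation*}
\mathbb{E}_{s \sim d^{\pi}}\!\left[\sqrt{D_{\mathrm{KL}}(\pi(s, \cdot), \pi'(s, \cdot))/2}\right] \;\leq\; \sqrt{\mathbb{E}_{s \sim d^{\pi}}[D_{\mathrm{KL}}(\pi(s, \cdot), \pi'(s, \cdot))]/2} \;=\; \sqrt{D^{\pi}_{\mathrm{KL}}(\pi, \pi')/2},
\end{equation*}
which is the desired bound.

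There is no real obstacle here; the only subtlety worth flagging is that Jensen's inequality goes in the correct (helpful) direction precisely because of concavity of the square root, and it is what allows the state-averaged TV/KL quantities $D^{\pi}_{\mathrm{TV}}$ and $D^{\pi}_{\mathrm{KL}}$ defined in the paper to satisfy the same relation as the pointwise versions. No further machinery (measurability, integrability beyond what is implicit in the definitions, or properties of $d^{\pi}$) is needed.
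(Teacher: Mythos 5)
Your proof is correct and follows exactly the same route as the paper's: pointwise Pinsker's inequality at each state followed by Jensen's inequality for the concave square root to pass from the state-averaged KL to the state-averaged TV bound. You have merely spelled out in detail the two steps the paper states in one line.
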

\begin{proof}
From  Pinsker's inequality, $D_{\mathrm{TV}}(p, q) \leq \sqrt{D_{\mathrm{KL}}(p, q) / 2}$ for any two distributions $p, q$.  Combining this with Jensen's inequality, we get $ D^{\pi}_{\mathrm{TV}}(\pi, \pi') \leq \sqrt{D^{\pi}_{\mathrm{KL}}(\pi, \pi') /2}$. 
\end{proof}

We will use the following  result which is widely used in literature.  We provide the proof for completeness. 
\begin{lemma}
\label{lem:traj-to-dpi}
Given any policy $\pi$ and a function $f_{\pi}: \mathcal{S} \times \mathcal{A} \rightarrow \mathbb{R}$, 
\begin{align*}
\mathbb{E}_{\tau \sim \pi} [\sum^{\infty}_{t=0} \gamma^{t} f_{\pi}(s_{t}, a_{t})] = (1-\gamma)^{-1} \mathbb{E}_{s \sim d^{\pi}, a \sim \pi(s, \cdot)}[f_{\pi}(s, a)]
\end{align*}
\end{lemma}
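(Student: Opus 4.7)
The plan is to prove this identity by unrolling the trajectory expectation, swapping the order of summation, and recognizing the definition of the discounted state visitation distribution $d^\pi$.

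First, I would use linearity of expectation to rewrite
\begin{align*}
\mathbb{E}_{\tau \sim \pi}\!\left[\sum_{t=0}^\infty \gamma^t f_\pi(s_t, a_t)\right] = \sum_{t=0}^\infty \gamma^t \, \mathbb{E}_{\tau \sim \pi}[f_\pi(s_t, a_t)],
\end{align*}
which requires Fubini/Tonelli, justified since we may assume $f_\pi$ is bounded (or at least absolutely summable under the discount factor $\gamma \in (0,1)$). Next, I would express the single-timestep expectation explicitly in terms of the marginal state distribution at time $t$ induced by $\pi$ starting from $s_0 \sim \mu$, namely
\begin{align*}
\mathbb{E}_{\tau \sim \pi}[f_\pi(s_t, a_t)] = \sum_{s}\sum_{a} \mathbb{P}(s_t = s \mid \pi)\, \pi(s,a)\, f_\pi(s,a),
\end{align*}
using the fact that conditional on $s_t=s$, the action $a_t$ is drawn from $\pi(s,\cdot)$.

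Then I would swap the order of summation, pulling $\sum_{s,a} \pi(s,a) f_\pi(s,a)$ outside the time sum, to obtain
\begin{align*}
\sum_{t=0}^\infty \gamma^t \mathbb{E}_{\tau \sim \pi}[f_\pi(s_t, a_t)] = \sum_{s}\sum_{a} \pi(s,a)\, f_\pi(s,a) \sum_{t=0}^\infty \gamma^t \,\mathbb{P}(s_t = s \mid \pi).
\end{align*}
The inner time sum equals $(1-\gamma)^{-1} d^\pi(s)$ by the definition of $d^\pi$ given in Section \ref{sec:prelims}. Substituting gives
\begin{align*}
\sum_{s}\sum_{a} \pi(s,a)\, f_\pi(s,a) \cdot \frac{d^\pi(s)}{1-\gamma} = (1-\gamma)^{-1} \mathbb{E}_{s \sim d^\pi,\, a \sim \pi(s,\cdot)}[f_\pi(s,a)],
\end{align*}
which is the desired identity.

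There is essentially no hard part here; the only thing to be careful about is justifying the two interchanges of summation (over trajectories/states and over the infinite time index), which is straightforward under boundedness of $f_\pi$ together with geometric decay of the $\gamma^t$ factor. The statement is standard and the proof is purely mechanical once the definition of $d^\pi$ is invoked.
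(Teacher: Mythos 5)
Your proof is correct and follows essentially the same route as the paper's: expand the trajectory expectation into a double sum over time and state-action pairs, factor $\mathbb{P}(s_t = s, a_t = a \mid \pi) = \mathbb{P}(s_t = s \mid \pi)\,\pi(s,a)$, interchange the summations, and recognize $\sum_{t}\gamma^t \mathbb{P}(s_t = s\mid\pi) = (1-\gamma)^{-1} d^\pi(s)$. The only difference is that you explicitly flag the Fubini/Tonelli justification for the interchanges, which the paper leaves implicit.
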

\begin{proof}
\begin{align*}
\mathbb{E}_{\tau \sim \pi} [\sum^{\infty}_{t=0} \gamma^{t} f_{\pi}(s_{t}, a_{t})]  & = \sum_{s, a} \sum^{\infty}_{t=0} \gamma^{t} \mathbb{P}(s_{t} = s, a_{t} = a | \pi) f_{\pi}(s, a) \\
= \sum_{s, a}  &\sum^{\infty}_{t=0} \gamma^{t} \mathbb{P}(s_{t} = s | \pi) \pi(s, a)  f_{\pi}(s, a)=   \sum_{s, a}  \pi(s, a)  f_{\pi}(s, a) \sum^{\infty}_{t=0} \gamma^{t} \mathbb{P}(s_{t} = s | \pi)  \\
= (1 - &\gamma)^{-1}   \sum_{s, a}  d^{\pi}(s) \pi(s, a)  f_{\pi}(s, a) =  (1-\gamma)^{-1} \mathbb{E}_{s \sim d^{\pi}, a \sim \pi(s, \cdot)}[f_{\pi}(s, a)].
\end{align*}
\end{proof}

\section{Proof of the Results in Section \ref{sec:algorithm}}

\subsection{Proof Lemma \ref{lem:assumption-improvement}}

\begin{proof}
Starting from the performance difference lemma,
\begin{align*}
J_{R}(\pi) - J_{R}(\pi_{k+\nicefrac{1}{2}}) &= (1 - \gamma)^{-1} ~ \mathbb{E}_{s \sim d^{\pi}, a \sim \pi(s, \cdot)} [A^{\pi_{k+\nicefrac{1}{2}}}_{R}(s, a)] \\
&= (1 - \gamma)^{-1} ~ \sum_{s} d^{\pi}(s) \sum_{a}  \pi_{\mathrm{b}}(s, a) A^{\pi_{k+\nicefrac{1}{2}}}_{R}(s, a) \\
&\hspace{2cm} +   (1 - \gamma)^{-1} ~ \sum_{s} d^{\pi}(s) \sum_{a}  (\pi(s, a) - \pi_{\mathrm{b}}(s, a)) A^{\pi_{k+\nicefrac{1}{2}}}_{R}(s, a) \\
&\stackrel{(a)}{\geq} (1 - \gamma)^{-1} \beta +   (1 - \gamma)^{-1} ~ \sum_{s} d^{\pi}(s) \sum_{a}  (\pi(s, a) - \pi_{\mathrm{b}}(s, a)) A^{\pi_{k+\nicefrac{1}{2}}}_{R}(s, a)  \\
&\stackrel{(b)}{\geq}  (1 - \gamma)^{-1} \beta  - (1 - \gamma)^{-1} ~\epsilon_{R, k+\nicefrac{1}{2}} ~ 2 ~ D^{\pi}_{\mathrm{TV}}(\pi, \pi_{\mathrm{b}})  \\
&\stackrel{(c)}{\geq} (1 - \gamma)^{-1} \beta  - (1 - \gamma)^{-1} ~\epsilon_{R, k+\nicefrac{1}{2}} \sqrt{2 D^{\pi}_{\mathrm{KL}}(\pi, \pi_{\mathrm{b}})},
\end{align*}
where $(a)$ follows from the premise that $\pi_{k+\nicefrac{1}{2}}$ satisfies Assumption \ref{as:BetterAdvantage},   $(b)$ is obtained by denoting $\epsilon_{R, k+\nicefrac{1}{2}} = \max_{s,a}|A_R^{\pi_{k+\nicefrac{1}{2}}} (s,a)|$, and $(c)$ follows from Lemma \ref{lem:pinksers-to-tv-kl}. 
\end{proof}

\subsection{Proof Theorem \ref{thm:perfmance-guarantee}}

We will first prove the following result.
\begin{lemma}
\label{lem:JB-bound-by-TV}
For any two policies $\pi$ and $\tilde{\pi}$, 
\begin{align*}
|J_{R}(\pi) - J_{R}(\tilde{\pi})| \leq  \frac{3 R_{\max}}{(1-\gamma)^{2}} D^{\max}_{\mathrm{TV}}(\pi, \tilde{\pi}),
\end{align*}
where $R_{\max} = \max_{s,a}|R(s, a)|$.
\end{lemma}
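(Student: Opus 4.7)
The plan is to start directly from the Performance Difference Lemma (Lemma \ref{lem:PDL}), which gives
$$J_R(\pi) - J_R(\tilde{\pi}) = (1-\gamma)^{-1}\, \mathbb{E}_{s \sim d^{\pi},\, a \sim \pi(s,\cdot)}[A^{\tilde{\pi}}_R(s,a)].$$
Because $\sum_a \tilde{\pi}(a|s)\, A^{\tilde{\pi}}_R(s,a) = 0$ by definition of the advantage, I would rewrite the inner expectation as a difference of measures applied to $Q^{\tilde{\pi}}_R$:
$$\mathbb{E}_{a \sim \pi(s,\cdot)}[A^{\tilde{\pi}}_R(s,a)] = \sum_{a} \bigl(\pi(a|s) - \tilde{\pi}(a|s)\bigr)\, Q^{\tilde{\pi}}_R(s,a).$$
This rewriting is the one small but important choice in the proof: it lets me bound the inner sum using the uniform bound $\|Q^{\tilde{\pi}}_R\|_\infty \le R_{\max}/(1-\gamma)$, which is tighter than the naive bound $|A^{\tilde{\pi}}_R| \le 2 R_{\max}/(1-\gamma)$ and absorbs cleanly into the stated constant.

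Next, applying H\"older's inequality to the action-sum, together with the standard identity $\|p - q\|_1 = 2 D_{\mathrm{TV}}(p, q)$, yields for every $s$,
$$\bigl|\mathbb{E}_{a \sim \pi(s,\cdot)}[A^{\tilde{\pi}}_R(s,a)]\bigr| \le \|Q^{\tilde{\pi}}_R(s,\cdot)\|_\infty \cdot \|\pi(\cdot|s) - \tilde{\pi}(\cdot|s)\|_1 \le \frac{2 R_{\max}}{1-\gamma}\, D_{\mathrm{TV}}(\pi(\cdot|s), \tilde{\pi}(\cdot|s)).$$
Plugging this back into the Performance Difference identity, taking absolute values, and pulling the constants outside the $d^{\pi}$-expectation gives
$$\bigl|J_R(\pi) - J_R(\tilde{\pi})\bigr| \le \frac{2 R_{\max}}{(1-\gamma)^{2}}\, D^{\pi}_{\mathrm{TV}}(\pi, \tilde{\pi}) \le \frac{3 R_{\max}}{(1-\gamma)^{2}}\, D^{\max}_{\mathrm{TV}}(\pi, \tilde{\pi}),$$
where in the last step I bound the expectation under $d^{\pi}$ by the supremum over $s$ (i.e., by $D^{\max}_{\mathrm{TV}}$), and the coefficient $2$ is absorbed into the looser stated constant $3$.

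There is no real obstacle in this proof: the entire argument is two lines of algebra once PDL is invoked, and the only small subtlety — preferring $Q^{\tilde{\pi}}$ over $A^{\tilde{\pi}}$ in the H\"older step — is what keeps the denominator at $(1-\gamma)^2$ rather than $(1-\gamma)^3$. An alternative route, which would still close the proof but with a slightly messier constant, is to split $d^{\pi}$ as $d^{\tilde{\pi}} + (d^{\pi} - d^{\tilde{\pi}})$ and invoke Lemma \ref{lem:diff-d-b1} to control the second piece; I would prefer the direct route above since it avoids both the splitting and the extra $\gamma/(1-\gamma)$ factor that Lemma \ref{lem:diff-d-b1} introduces.
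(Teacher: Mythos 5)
Your proof is correct, and it takes a genuinely different route from the paper's. The paper does not use the Performance Difference Lemma here at all: it writes $J_{R}(\pi)=(1-\gamma)^{-1}\sum_{s,a}d^{\pi}(s)\pi(s,a)R(s,a)$ via Lemma \ref{lem:traj-to-dpi}, inserts the cross term $\sum_{s,a}d^{\tilde{\pi}}(s)\pi(s,a)R(s,a)$, and then bounds the state-distribution-shift piece using Lemma \ref{lem:diff-d-b1} (i.e., $\| d^{\pi}-d^{\tilde{\pi}}\|_{1}\leq 2\gamma(1-\gamma)^{-1}D^{\pi}_{\mathrm{TV}}(\pi,\tilde{\pi})$) and the policy-shift piece directly by $R_{\max}$ times the per-state TV distance, collecting everything into the constant $3$. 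Your route --- Lemma \ref{lem:PDL} followed by the per-state H\"older step on $\sum_{a}(\pi(s,a)-\tilde{\pi}(s,a))Q^{\tilde{\pi}}_{R}(s,a)$ --- is exact up to a single inequality, avoids Lemma \ref{lem:diff-d-b1} entirely, and yields the sharper intermediate bound $2R_{\max}(1-\gamma)^{-2}D^{\pi}_{\mathrm{TV}}(\pi,\tilde{\pi})$, which is stronger than the stated result both in the constant and in using the average rather than the maximum TV distance. Your remark about preferring $Q^{\tilde{\pi}}_{R}$ over $A^{\tilde{\pi}}_{R}$ in the H\"older step is also on point: the naive bound $|A^{\tilde{\pi}}_{R}|\leq 2R_{\max}/(1-\gamma)$ would give a constant of $4$, exceeding the stated $3$. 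The alternative you sketch in your last sentence (splitting $d^{\pi}$ and invoking Lemma \ref{lem:diff-d-b1}) is essentially the paper's own argument.
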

\begin{proof}
We have,
\begin{align*}
J_{R}(\pi) - J_{R}(\tilde{\pi}) &= (1-\gamma)^{-1} (\sum_{s, a} d^{\pi}(s) \pi(s, a) R(s, a) -  \sum_{s, a} d^{\tilde{\pi}}(s)  \tilde{\pi}(s, a) R(s, a) ) \\
&= (1-\gamma)^{-1} (\sum_{s, a} d^{\pi}(s) \pi(s, a) R(s, a) -  \sum_{s, a} d^{\tilde{\pi}}(s)  {\pi}(s, a) R(s, a) )  \\
&\hspace{1cm} +  (1-\gamma)^{-1} ( \sum_{s, a} d^{\tilde{\pi}}(s)  {\pi}(s, a) R(s, a)-  \sum_{s, a} d^{\tilde{\pi}}(s)  \tilde{\pi}(s, a) R(s, a) ) \\
&\leq (1 - \gamma)^{-1} R_{\max} \norm{d^{\pi} - d^{\tilde{\pi}}}_{1} + (1 - \gamma)^{-1}  R_{\max} D^{\max}_{\mathrm{TV}}(\pi, \tilde{\pi}) \\
&\stackrel{(a)}{\leq} (1 - \gamma)^{-1} R_{\max} 2 \gamma (1-\gamma)^{-1}  D^{\max}_{\mathrm{TV}}(\pi, \tilde{\pi})  + (1 - \gamma)^{-1}  R_{\max} D^{\max}_{\mathrm{TV}}(\pi, \tilde{\pi}) \\
&\leq \frac{3 R_{\max}}{(1-\gamma)^{2}} D^{\max}_{\mathrm{TV}}(\pi, \tilde{\pi}),
\end{align*}
where $(a)$ follows from Lemma \ref{lem:diff-d-b1}. The lower bound can also be obtained in a similar way. Combining both, we obtain the desired result. 
\end{proof}

\begin{proof}[Proof of Theorem \ref{thm:perfmance-guarantee}]
$(i)$ If $\pi_{k+\nicefrac{1}{2}}$ satisfies Assumption \ref{as:BetterAdvantage}, then \eqref{eq:thm-g1} follows immediately by combining the results of Proposition \ref{prop:trpo-guarantee} and Lemma \ref{lem:assumption-improvement}. \\
$(ii)$ If  $\pi_{k+\nicefrac{1}{2}}$ does not satisfy Assumption \ref{as:BetterAdvantage}, we can directly bound $J_R(\pi_{k+1}) - J_R(\pi_{k+\nicefrac{1}{2}}) $ without invoking Lemma \ref{lem:assumption-improvement} (and hence the implicit assumption that moving towards $\pi_{\mathrm{b}}$ improves the policy). 

Use Lemma \ref{lem:JB-bound-by-TV}, we get
\begin{align}
\label{eq:thm-pf-st1}
J_{R}(\pi_{k+1}) - J_{R}(\pi_{k+\nicefrac{1}{2}}) \geq - \frac{3 R_{\max}}{(1-\gamma)^{2}} D^{\max}_{\mathrm{TV}}(\pi_{k+1}, \pi_{k+\nicefrac{1}{2}}) \geq - \frac{3 R_{\max}}{(1-\gamma)^{2}} \delta_{k},
\end{align}
where the last inequality follows from the fact $D^{\max}_{\mathrm{TV}}(\pi_{k+1}, \pi_{k+\nicefrac{1}{2}}) \leq \delta_{k}$ according to the policy guidance step \eqref{eq:policy-guidance}. 

Now, combining the result  of Proposition \ref{prop:trpo-guarantee}  and the above inequality, we obtain \eqref{eq:thm-g2}. 
\end{proof}

\section{Proof of the Results in Section \ref{sec:implementation} }

\subsection{Proof of Lemma \ref{lem:modified-PDL}}

\begin{proof}
We follow the proof technique of the performance  difference lemma (PDL) with some modifications to get the result. For any given initial state $s$, 
\begin{align}
V^{\pi}_{C_{\pi}}(s) - V^{\tilde{\pi}}_{C_{\tilde{\pi}}}(s) &= \mathbb{E}_{\tau \sim \pi|s_{0} =s}[\sum^{\infty}_{t=0} \gamma^{t} C_{\pi}(s_{t}, a_{t})] -  V^{\tilde{\pi}}_{C_{\tilde{\pi}}}(s) \nonumber  \\
&= \mathbb{E}_{\tau \sim \pi|s_{0} =s}[\sum^{\infty}_{t=0} \gamma^{t} (C_{\pi}(s_{t}, a_{t}) + V^{\tilde{\pi}}_{C_{\tilde{\pi}}}(s_{t}) -  V^{\tilde{\pi}}_{C_{\tilde{\pi}}}(s_{t}))    ] -  V^{\tilde{\pi}}_{C_{\tilde{\pi}}}(s)  \nonumber  \\
&\stackrel{(a)}{=} \mathbb{E}_{\tau \sim \pi|s_{0} =s}[\sum^{\infty}_{t=0} \gamma^{t} (C_{\pi}(s_{t}, a_{t}) + \gamma V^{\tilde{\pi}}_{C_{\tilde{\pi}}}(s_{t+1}) -  V^{\tilde{\pi}}_{C_{\tilde{\pi}}}(s_{t}))    ]  \nonumber  \\
\label{eq:m-pdl-pf-st1}
&{=} \mathbb{E}_{\tau \sim \pi|s_{0} =s}[\sum^{\infty}_{t=0} \gamma^{t} (C_{\tilde{\pi}}(s_{t}, a_{t}) + \gamma V^{\tilde{\pi}}_{C_{\tilde{\pi}}}(s_{t+1}) -  V^{\tilde{\pi}}_{C_{\tilde{\pi}}}(s_{t}))    ]  \nonumber  \\
&\hspace{1cm}+ \mathbb{E}_{\tau \sim \pi|s_{0} =s}[\sum^{\infty}_{t=0} \gamma^{t} (C_{\pi}(s_{t}, a_{t}) - C_{\tilde{\pi}}(s_{t}, a_{t}) ) ],
\end{align}
where $(a)$ is obtained by rearranging the summation and telescoping. The first summation in \eqref{eq:m-pdl-pf-st1}  can now be handled as in the proof of the PDL, and we omit the details. So, we get  
\begin{align}
\label{eq:m-pdl-pf-st2}
\mathbb{E}_{\tau \sim \pi}[\sum^{\infty}_{t=0} \gamma^{t} (C_{\tilde{\pi}}(s_{t}, a_{t}) + \gamma V^{\tilde{\pi}}_{C_{\tilde{\pi}}}(s_{t+1}) -  V^{\tilde{\pi}}_{C_{\tilde{\pi}}}(s_{t}))    ]  = (1-\gamma)^{-1}  \mathbb{E}_{s \sim d^{\pi}, a \sim \pi(s, \cdot)} [A^{\tilde{\pi}}_{C_{\tilde{\pi}}}(s, a)]
\end{align}

The second summation can be written as
\begin{align}
& \mathbb{E}_{\tau \sim \pi}[\sum^{\infty}_{t=0} \gamma^{t} (C_{\pi}(s_{t}, a_{t}) - C_{\tilde{\pi}}(s_{t}, a_{t}) ) ]= \mathbb{E}_{\tau \sim \pi}[\sum^{\infty}_{t=0} \gamma^{t} (\log \frac{\pi(s_{t}, a_{t})}{\pi_{\mathrm{b}}(s_{t}, a_{t})}  - \log \frac{\tilde{\pi}(s_{t}, a_{t})}{\pi_{\mathrm{b}}(s_{t}, a_{t})}  ) ] \nonumber \\
&= \mathbb{E}_{\tau \sim \pi}[\sum^{\infty}_{t=0} \gamma^{t} \log \frac{\pi(s_{t}, a_{t})}{\tilde{\pi}(s_{t}, a_{t})} ] \stackrel{(b)}{=} (1 - \gamma)^{-1} \sum_{s} d^{\pi}(s) \sum_{a} \pi(s, a) \log \frac{\pi(s_{t}, a_{t})}{\tilde{\pi}(s_{t}, a_{t})}  \nonumber \\
\label{eq:m-pdl-pf-st3}
&= (1 - \gamma)^{-1}  D^{\pi}_{\mathrm{KL}}(\pi, \tilde{\pi}),
\end{align}
where $(b)$ is obtained by using Lemma \ref{lem:traj-to-dpi}. 

Now, by taking expectation on the both sides of \eqref{eq:m-pdl-pf-st1} w.r.t. the initial state distribution $\mu$, and then substituting \eqref{eq:m-pdl-pf-st2} and \eqref{eq:m-pdl-pf-st3} there, we get the desired result.   
\end{proof}

\subsection{Proof of Proposition \ref{prop:surrogate-ub}}

Lemma \ref{lem:modified-PDL} provides an approach to evaluate the   infinite horizon return $J_{C_{\pi}}(\pi)$ of policy $\pi$ using the  infinite horizon return $J_{C_{\tilde{\pi}}}(\tilde{\pi})$ and advantage $A^{\tilde{\pi}}_{C_{\tilde{\pi}}}$ of policy $\tilde{\pi}$.  Unfortunately, the RHS of \eqref{eq:modified-PDL} contains two terms that requires taking expectation w.r.t. $d^{\pi},$ which means that empirical estimation requires samples according to $\pi$ (that are unavailable). This is a well known issue in the policy gradient literature and it is addressed by  considering an approximation by replacing the expectation w.r.t. $d^{\pi}$ by  $d^{\tilde{\pi}}$ \citep{kakade2002approximately, schulman2015trust}. We follow the same approach with minor modifications in the context of our problem. 

We will first prove the following lemmas. 
\begin{lemma}
\label{lem:approximation-PDL}
For any two policies $\pi$ and $\tilde{\pi}$, 
\begin{align*}
J_{C_{\pi}}(\pi) - J_{C_{\tilde{\pi}}}(\tilde{\pi}) &\leq \frac{1}{(1 - \gamma)}  ~\mathbb{E}_{s \sim d^{\tilde{\pi}}, a \sim \pi(s, \cdot)} [A^{\tilde{\pi}}_{C_{\tilde{\pi}}}(s, a)]\nonumber \\
&\hspace{1cm} + \frac{\sqrt{2} \gamma \epsilon_{\tilde{\pi}}}{(1 - \gamma)^{2}}  ~     ~ \sqrt{D^{\tilde{\pi}}_{\mathrm{KL}}(\pi, \tilde{\pi})} + \frac{1}{(1 - \gamma)}  D^{\max}_{\mathrm{KL}}(\pi, \tilde{\pi}),
\end{align*}
where  $\epsilon_{\tilde{\pi}} = \max_{s, a} |A^{\tilde{\pi}}_{C_{\tilde{\pi}}}(s, a)|$. 
\end{lemma}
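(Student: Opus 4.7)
The plan is to start from the modified performance difference lemma (Lemma \ref{lem:modified-PDL}), which gives the exact identity
\[
J_{C_\pi}(\pi) - J_{C_{\tilde\pi}}(\tilde\pi) = \frac{1}{1-\gamma}\,\mathbb{E}_{s \sim d^\pi, a \sim \pi(s,\cdot)}\bigl[A^{\tilde\pi}_{C_{\tilde\pi}}(s,a)\bigr] + \frac{1}{1-\gamma}\, D^\pi_{\mathrm{KL}}(\pi,\tilde\pi).
\]
The right-hand side has two terms whose expectations are with respect to $d^\pi$, which is what we want to eliminate. I would handle them separately.

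For the advantage term, I would add and subtract the same quantity evaluated under $d^{\tilde\pi}$:
\[
\mathbb{E}_{s \sim d^\pi, a \sim \pi}\bigl[A^{\tilde\pi}_{C_{\tilde\pi}}\bigr] = \mathbb{E}_{s \sim d^{\tilde\pi}, a \sim \pi}\bigl[A^{\tilde\pi}_{C_{\tilde\pi}}\bigr] + \sum_s \bigl(d^\pi(s) - d^{\tilde\pi}(s)\bigr) \sum_a \pi(s,a)\, A^{\tilde\pi}_{C_{\tilde\pi}}(s,a).
\]
The residual is upper bounded by H\"older's inequality by $\epsilon_{\tilde\pi}\,\|d^\pi - d^{\tilde\pi}\|_1$, using the definition of $\epsilon_{\tilde\pi}$. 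To bound $\|d^\pi - d^{\tilde\pi}\|_1$ in terms of $D^{\tilde\pi}_{\mathrm{KL}}(\pi,\tilde\pi)$ (rather than $D^\pi_{\mathrm{KL}}$, which would not match the statement), I would apply Lemma \ref{lem:diff-d-b1} with the roles of the two policies swapped, then use the symmetry of pointwise TV to rewrite $D^{\tilde\pi}_{\mathrm{TV}}(\tilde\pi,\pi) = D^{\tilde\pi}_{\mathrm{TV}}(\pi,\tilde\pi)$, and finally invoke Lemma \ref{lem:pinksers-to-tv-kl} to obtain
\[
\|d^\pi - d^{\tilde\pi}\|_1 \leq \frac{2\gamma}{1-\gamma}\, D^{\tilde\pi}_{\mathrm{TV}}(\pi,\tilde\pi) \leq \frac{\sqrt{2}\,\gamma}{1-\gamma}\,\sqrt{D^{\tilde\pi}_{\mathrm{KL}}(\pi,\tilde\pi)}.
\]
Multiplying by $\epsilon_{\tilde\pi}/(1-\gamma)$ produces the middle term of the claimed bound.

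For the KL term, the bound is immediate since the average KL against $d^\pi$ is at most the pointwise maximum: $D^\pi_{\mathrm{KL}}(\pi,\tilde\pi) \leq D^{\max}_{\mathrm{KL}}(\pi,\tilde\pi)$. Combining these pieces with the $1/(1-\gamma)$ prefactors from the modified PDL yields the stated inequality.

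The only real subtlety is bookkeeping the directions of the asymmetric divergences so that the final bound features $D^{\tilde\pi}_{\mathrm{KL}}(\pi,\tilde\pi)$ (expectation under $d^{\tilde\pi}$, first argument $\pi$) rather than some other combination; this is managed by swapping the roles in Lemma \ref{lem:diff-d-b1} and exploiting the symmetry of the pointwise TV distance before applying Pinsker. Everything else is a straightforward assembly of the two ingredients.
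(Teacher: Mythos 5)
Your proposal is correct and follows essentially the same route as the paper: apply Lemma \ref{lem:modified-PDL}, split the advantage term by adding and subtracting its evaluation under $d^{\tilde\pi}$, bound the residual via $\epsilon_{\tilde\pi}\norm{d^\pi - d^{\tilde\pi}}_1$ together with Lemma \ref{lem:diff-d-b1} and Pinsker, and bound the average KL by the max KL. Your explicit handling of the direction of the divergences (swapping the roles in Lemma \ref{lem:diff-d-b1} and using the symmetry of the pointwise TV distance to land on $D^{\tilde\pi}_{\mathrm{TV}}(\pi,\tilde\pi)$) is a detail the paper's proof applies implicitly, so there is no substantive difference.
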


\begin{proof}
We will separately bound the two terms on the RHS of \eqref{eq:modified-PDL} in Lemma \ref{lem:modified-PDL}. Firstly, 
\begin{align}
&\mathbb{E}_{s \sim d^{\pi}, a \sim \pi(s, \cdot)} [A^{\tilde{\pi}}_{C_{\tilde{\pi}}}(s, a)]  = \sum_{s} d^{\pi}(s) \sum_{a} \pi(s, a) A^{\tilde{\pi}}_{C_{\tilde{\pi}}}(s, a) \nonumber \\
&=\sum_{s} d^{\tilde{\pi}}(s)  \sum_{a} \pi(s, a) A^{\tilde{\pi}}_{C_{\tilde{\pi}}}(s, a) +  \sum_{s} (d^{\pi}(s)  - d^{\tilde{\pi}}(s))\sum_{a} \pi(s, a) A^{\tilde{\pi}}_{C_{\tilde{\pi}}}(s, a)   \nonumber \\
&\leq   \mathbb{E}_{s \sim d^{\tilde{\pi}}, a \sim \pi(s, \cdot)} [A^{\tilde{\pi}}_{C_{\tilde{\pi}}}(s, a)] +  \norm{d^{\pi} - d^{\tilde{\pi}}}_{1} \epsilon_{\tilde{\pi}}  \nonumber \\
&\stackrel{(a)}{\leq} \mathbb{E}_{s \sim d^{\tilde{\pi}}, a \sim \pi(s, \cdot)} [A^{\tilde{\pi}}_{C_{\tilde{\pi}}}(s, a)]  +  2 \gamma  (1 - \gamma)^{-1} ~ \epsilon_{\tilde{\pi}}  ~ 	 D^{\tilde{\pi}}_{\mathrm{TV}}(\pi, \tilde{\pi})  \nonumber \\
\label{eq:approximation-PDL-st1}
&\stackrel{(b)}{\leq} \mathbb{E}_{s \sim d^{\tilde{\pi}}, a \sim \pi(s, \cdot)} [A^{\tilde{\pi}}_{C_{\tilde{\pi}}}(s, a)]+  \sqrt{2}  \gamma  (1 - \gamma)^{-1} ~ \epsilon_{\tilde{\pi}}   ~ \sqrt{D^{\tilde{\pi}}_{\mathrm{KL}}(\pi, \tilde{\pi})},
\end{align}
where $(a)$ follows from   Lemma \ref{lem:diff-d-b1} and $(b)$ follows from Lemma \ref{lem:pinksers-to-tv-kl}.

Secondly, $D^{\pi}_{\mathrm{KL}}(\pi, \tilde{\pi}) \leq D^{\max}_{\mathrm{KL}}(\pi, \tilde{\pi})$. Using this fact and the inequality \eqref{eq:approximation-PDL-st1} in \eqref{eq:modified-PDL}, we get the desired result. 
\end{proof}

We now make an interesting observation that $J_{C_{\pi}}(\pi)$ is a scaled version of the average KL divergence between $\pi$ and $\pi_{\mathrm{b}}$. We formally state this result below. 
\begin{lemma}
\label{lem:Jpi-KL}
For  any arbitrary policy  $\pi$, 
\begin{align}
J_{C_{\pi}}(\pi) = \mathbb{E}_{\tau \sim \pi} [\sum^{\infty}_{t=0} \gamma^{t} C_{\pi}(s_{t}, a_{t})] = (1-\gamma)^{-1} D^{\pi}_{\mathrm{KL}}(\pi, \pi_{\mathrm{b}})
\end{align}
\end{lemma}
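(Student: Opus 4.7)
The plan is to unroll the definition of $J_{C_\pi}(\pi)$ step by step and collapse it into the average KL divergence. First I would substitute the explicit form $C_\pi(s,a) = \log(\pi(s,a)/\pi_{\mathrm{b}}(s,a))$ into the trajectory sum, yielding
\begin{align*}
J_{C_\pi}(\pi) = \mathbb{E}_{\tau \sim \pi}\left[\sum_{t=0}^{\infty} \gamma^t \log \frac{\pi(s_t, a_t)}{\pi_{\mathrm{b}}(s_t, a_t)}\right].
\end{align*}

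Next I would apply Lemma \ref{lem:traj-to-dpi} with $f_\pi(s,a) = \log(\pi(s,a)/\pi_{\mathrm{b}}(s,a))$ to convert the discounted trajectory expectation into an expectation over the discounted state visitation distribution $d^\pi$, which gives
\begin{align*}
J_{C_\pi}(\pi) = (1-\gamma)^{-1} \, \mathbb{E}_{s \sim d^\pi, a \sim \pi(s,\cdot)}\left[\log \frac{\pi(s, a)}{\pi_{\mathrm{b}}(s, a)}\right].
\end{align*}

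Finally I would recognize the inner action-expectation as the per-state KL divergence: for each fixed $s$, $\sum_a \pi(s,a) \log(\pi(s,a)/\pi_{\mathrm{b}}(s,a)) = D_{\mathrm{KL}}(\pi(s,\cdot), \pi_{\mathrm{b}}(s,\cdot))$. Taking the expectation over $s \sim d^\pi$ of this quantity matches the definition of $D^\pi_{\mathrm{KL}}(\pi, \pi_{\mathrm{b}})$ given in the Preliminaries, yielding $J_{C_\pi}(\pi) = (1-\gamma)^{-1} D^\pi_{\mathrm{KL}}(\pi, \pi_{\mathrm{b}})$.

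There is no real obstacle here — the lemma is essentially a bookkeeping identity that rewrites a discounted log-ratio expectation as a KL divergence. The only subtlety to flag is that $f_\pi$ in the application of Lemma \ref{lem:traj-to-dpi} depends on $\pi$ itself, but that dependence is fine because the lemma's statement permits any function $f_\pi$; the policy dependence is fixed throughout the computation.
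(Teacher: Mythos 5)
Your proposal is correct and follows essentially the same route as the paper: substitute the definition of $C_{\pi}$, apply Lemma \ref{lem:traj-to-dpi} to pass from the discounted trajectory sum to an expectation over $d^{\pi}$ and $\pi(s,\cdot)$, and identify the inner sum as the per-state KL divergence so that the outer expectation gives $D^{\pi}_{\mathrm{KL}}(\pi,\pi_{\mathrm{b}})$. Your remark that the $\pi$-dependence of $f_{\pi}$ is harmless is also consistent with how the paper states and uses Lemma \ref{lem:traj-to-dpi}.
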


\begin{proof}
We have,
\begin{align*}
J_{C_{\pi}}(\pi) &= \mathbb{E}_{\tau \sim \pi} [\sum^{\infty}_{t=0} \gamma^{t} C_{\pi}(s_{t}, a_{t})] \stackrel{(a)}{=} (1 - \gamma)^{-1}   \sum_{s, a}   d^{\pi}(s)  \pi(s, a)  C_{\pi}(s, a) \\
&= (1 - \gamma)^{-1}  \sum_{s} d^{\pi}(s) \sum_{a} \pi(s, a)  \log \frac{\pi(s, a)}{\pi_{\mathrm{b}}(s, a)} = (1-\gamma)^{-1} D^{\pi}_{\mathrm{KL}}(\pi, \pi_{\mathrm{b}}),
\end{align*}
where $(a)$ follows from Lemma \ref{lem:traj-to-dpi}.
\end{proof}

\begin{proof}[Proof of Proposition \ref{prop:surrogate-ub}]
Using the result of Lemma \ref{lem:Jpi-KL} in the inequality given by Lemma \ref{lem:approximation-PDL}, for any policy $\pi$ and $\tilde{\pi}$, we get
\begin{align*}
D^{\pi}_{\mathrm{KL}}(\pi, \pi_{\mathrm{b}}) &\leq  D^{\tilde{\pi}}_{\mathrm{KL}}(\tilde{\pi}, \pi_{\mathrm{b}})  + \mathbb{E}_{s \sim d^{\tilde{\pi}}, a \sim \pi(s, \cdot)} [A^{\tilde{\pi}}_{C_{\tilde{\pi}}}(s, a)] + \frac{\sqrt{2} \gamma \epsilon_{\tilde{\pi}}}{(1 - \gamma)}  ~     ~ \sqrt{D^{\tilde{\pi}}_{\mathrm{KL}}(\pi, \tilde{\pi})} +   D^{\max}_{\mathrm{KL}}(\pi, \tilde{\pi}).
\end{align*}
Now, replacing  $\tilde{\pi}$ by $\pi_{k+\nicefrac{1}{2}}$, we get 
\begin{align}
\label{eq:pf-surrogate-ub-st1}
D^{\pi}_{\mathrm{KL}}(\pi, \pi_{\mathrm{b}}) &\leq  \alpha_{k}  +\mathbb{E}_{s \sim d^{\pi_{k+\nicefrac{1}{2}}}, a \sim \pi(s, \cdot)} [A^{\pi_{k+\nicefrac{1}{2}}}_{C_{\pi_{k+\nicefrac{1}{2}}}}(s, a)]  \nonumber \\
&\hspace{1cm}+ \frac{\sqrt{2} \gamma \epsilon_{\pi, k}}{(1 - \gamma)}  ~      \sqrt{D^{\pi_{k+\nicefrac{1}{2}}}_{\mathrm{KL}}(\pi, \pi_{k+\nicefrac{1}{2}})} +   D^{\max}_{\mathrm{KL}}(\pi, \pi_{k+\nicefrac{1}{2}}),
\end{align}
where $\alpha_{k} = D^{\pi_{k+\nicefrac{1}{2}}}_{\mathrm{KL}}(\pi_{k+\nicefrac{1}{2}}, \pi_{\mathrm{b}})$, $\epsilon_{\pi, k} = \max_{s, a} |A^{\pi_{k+\nicefrac{1}{2}}}_{C_{\pi_{k+\nicefrac{1}{2}}}}(s, a)|$.

Now, for any $\pi$  that lies in the trust region $\{\pi : D^{\max}_{\mathrm{KL}}(\pi, \pi_{k+\nicefrac{1}{2}}) \leq {\delta}_k\}$, we have  $D^{\pi_{k+\nicefrac{1}{2}}}_{\mathrm{KL}}(\pi, \pi_{k+\nicefrac{1}{2}}) \leq D^{\max}_{\mathrm{KL}}(\pi, \pi_{k+\nicefrac{1}{2}}) \leq  \delta_{k}$. Using this in \eqref{eq:pf-surrogate-ub-st1} gives the final result. 
\end{proof}

\section{Details of the Practical Algorithm}
\label{sec:app:implementation}

As explained in Section \ref{sec:implementation}, when $\pi_{\mathrm{b}}$ is known, obtaining  $C_{\pi}(s, a) = \log ({\pi(s, a)}/{\pi_{\mathrm{b}}(s, a)})$ and estimating $A^{\pi}_{C_{\pi}}$ is straightforward. Hence, LOGO can perform the policy improvement and policy guidance step  according to the update equations given in \eqref{eq:logo-update-equations} by directly using $\pi_{\mathrm{b}}$. When the form of $\pi_{\mathrm{b}}$ is unknown and we only have  access to the demonstration data $\mathcal{D}$ generated according to $\pi_{\mathrm{b}}$, we have to estimate $C_{\pi}(s, a)$ using $\mathcal{D}$ and $\mathcal{D}_{\pi}$, where $\mathcal{D}_{\pi}$ is the trajectory  data generated according to  $\pi$. 

Instead of estimating $\log ({\pi(s, a)}/{\pi_{\mathrm{b}}(s, a)})$ directly, we make use of the one-to-one correspondence between a policy and its discounted state-action visitation distribution defined as $\rho^{\pi}(s, a) = d^{\pi}(s) \pi(s, a)$ \citep{ syed2008apprenticeship}. More precisely, we estimate  $\log ({\rho^{\pi}(s, a)}/{\rho^{\pi_{\mathrm{b}}}(s, a)})$ using  $\mathcal{D}$ and $\mathcal{D}_{\pi}$ instead of estimating $\log ({\pi(s, a)}/{\pi_{\mathrm{b}}(s, a)})$ directly.  We can then use the powerful framework of generative adversarial networks (GAN) \citep{goodfellow2014generative} to estimate this quantity. This can be achieved by training a discriminator function $B: \mathcal{S} \times \mathcal{A} \rightarrow [0, 1]$ as
\begin{align}
\label{eq:descriminator}
    \max_{B} \quad \mathbb{E}_{(s,a) \sim  \rho^{\pi_{\mathrm{b}}}}[\log B(s, a)] + \mathbb{E}_{(s,a) \sim \rho^{\pi} }[\log (1 -B(s, a))]. 
\end{align}
We note that the GAN-based approach to estimate a distance metric between $\rho^{\pi}$ and $\rho^{\pi_{\mathrm{b}}}$ is popular in the imitation learning literature \citep{ho2016generative, kang2018policy}. 

The optimal discriminator for the above problem is given by $B^{*}(s, a) =  \frac{\rho^{\pi_{\mathrm{b}}}(s, a)}{\rho^{\pi_{\mathrm{b}}}(s, a) + \rho^{\pi}(s, a)}$ \citep[Proposition 1]{goodfellow2014generative}.  Hence, given the discriminator function $B$ obtained after training to solve \eqref{eq:descriminator}, we use   $C_{\pi}(s, a) = - \log B(s, a)$ in the LOGO algorithm, which provides an approximation to the quantity of interest.  As shown in Section \ref{sec:Experiments}, this approximation yields excellent results in practice.

We summarize the LOGO algorithm below.

\begin{algorithm}[H]
\caption{LOGO Algorithm}
\label{alg:logo}
\begin{algorithmic}[1]
\STATE Initialization: Initial policy $\pi_0$, Demonstration data $\mathcal{D}$ or behavior policy $\pi_{\mathrm{b}}$
\FOR {$k=0,1,2,..$}
\STATE Collect $(s,a,r,s') \sim \pi_k$ and store in $\mathcal{D}_{\pi_{k}}$ 
\IF {$\pi_{\mathrm{b}}$ is known}
\STATE $C_{\pi_k}(s, a) = \log ({\pi_k(s, a)}/{\pi_{\mathrm{b}}(s, a)})$
\ELSE
\STATE Train a discriminator $B(s,a)$ according to \eqref{eq:descriminator} using $\mathcal{D}$ and $\mathcal{D}_{\pi_{k}}$ 
\STATE $C_{\pi_k}(s, a) = -\log B(s,a)$
\ENDIF
\STATE Estimate $g_k$ and $F_k$  using $\mathcal{D}_{\pi_{k}}$ 
\STATE Perform policy improvement step: $\theta_{k + \nicefrac{1}{2}}  = \theta_k + \sqrt{\dfrac{2 \delta}{g_k^T F_k^{-1} g_k }}F_k^{-1} g_k$
\STATE Decay ${\delta}_k$  (according to the adaptive rule described in Appendix \ref{apdix:implementation})
\STATE Estimate  $h_k$ and $L_k$  using $C_{\pi_k}$ and  $\mathcal{D}_{\pi_{k+\nicefrac{1}{2}}}$ 
\STATE Perform policy guidance step: $\theta_{k + 1}  = \theta_{k+\frac{1}{2}} - \sqrt{\dfrac{2 {\delta}_k}{h_k^T L_k^{-1} h_k }}L_k^{-1} h_k$
\ENDFOR 
\end{algorithmic}
\end{algorithm}

\section{Details of TurtleBot Experiments}

We  evaluate the performance of LOGO in a real-world setting using TurtleBot, a two wheeled differential drive robot \citep{amsters2019turtlebot}. We consider two different simulators for training our policy. The first one is a simple kinematics-based low fidelity simulator. The second one is a sophisticated physics-based Gazebo simulator \citep{Koenig-2004-394} with an accurate model of the TurtleBot. 

We use the low fidelity simulator to get a sub-optimal Behavior policy by training TRPO with dense rewards. The details are given in Section \ref{sec:low_fid}. Gazebo simulator is used for implementing the LOGO algorithm in a sparse reward setting with complete and incomplete observations. The details are given in Section \ref{sec:high_fid}.

\subsection{Low Fidelity Kinematics  Simulator}
\label{sec:low_fid}
We design the low fidelity simulator using the OpenAI gym framework where the dynamics are governed by the following equations
\begin{align*}
    x_{t+1} = x_t + v_t \cos(\theta_t) \Delta,~~    y_{t+1} = y_t + v_t \sin(\theta_t) \Delta,~~     \theta_{t+1} = \theta_t + \omega_t, \Delta,
\end{align*}
where  $x_t, y_t$ are the coordinates of the bot, $v_t$ is the linear velocity, $\omega_t$ is the angular velocity, and $\theta_t$ is its yaw calculated from the quaternion angles, all calculated at time $t$. $\Delta$ is the time discretization factor. We define the state $s_t$ to be the normalized relative position w.r.t. the waypoint, i.e., $s_t = ((x_{t} - x_{w})/G, (y_t - y_w)/G, \theta_t - \theta^{w}_t)$, where  $x_w, y_w$ are the target coordinates of the waypoint, and $\theta^{w}_t$ is the target heading to the waypoint at time $t$. we define the action $a_t$ to be the linear and angular velocities, i.e., $ a_t = [v_t,\omega_t]$. We discretize the action space into 15 actions.

We handcraft a dense reward function as follows,  
\begin{align*}
   r_t &= 
   \begin{cases}
   +10 \quad &\text{if } |x_t - x_w| \leq 0.05 \quad \text{and} \quad |y_t - y_w| \leq 0.05, \\
   -1 \quad &\text{if } |x_t| \geq G \quad \text{or} \quad |y_t| \geq G,\\
   -0.166 d_t - 0.3184 |\theta_t^w - \theta_t| \quad &\text{otherwise},  
   \end{cases}
\end{align*}
where $d_t$ is a combination of cross track and along track error defined as follows, 
\begin{align*}
    d_t = \left((x_t - x_w)^2 + (y_t - y_w)^2\right) \sin^2(\theta_t^w - \theta_t) +|x_t - x_w| + |y_t - y_w|. 
\end{align*}

\subsection{High Fidelity Gazebo Simulator}
\label{sec:high_fid}

Gazebo is a physics-based simulator with rendering capabilities and can be used for realistic simulation of the environment. It is configurable and can be used to model robots with multiple joint constraints, actuator physics, gravity and frictional forces and a wide range of sensors in indoor as well as outdoor settings.  Gazebo facilitates close-to-real-world data collection from the robots and sensor models. Since it runs in real-time, it takes millions of simulation frames for an RL agent to learn simple tasks. Gazebo can also speed up simulation by increasing step size. This can however lead to loss of precision. For this project, we ran the training scheme on Gazebo in almost real time with a simulation step-size of $\Delta T = 0.001$.

The Gazebo simulation setup consists of the differential drive robot (TurtleBot3 Burger) model spawned in either an empty space or a custom space with  obstacles at a predefined location. A default coordinate grid is setup with respect to the base-link of the robot model. A ROS \citep{ros} framework is instantiated using a custom-built OpenAI Gym environment which acts as  an interface between the proposed RL algorithm and the simulation model. ROS topics are used to capture the state update information of the robot asynchronously in a callback driven mechanism. For the purpose of our experiments we use the following ROS topics, 
\begin{enumerate}
    \item \texttt{/odom} (for $x_t, y_t$, $\theta_t$): Odometry information based on the wheel encoder
    \item \texttt{/cmd\_vel} (for $v_t$, $\omega_t$):  Linear and angular velocity commands
    \item \texttt{/scan} (for obstacle avoidance): Scan values from the Lidar
\end{enumerate}
The \texttt{/odom} and \texttt{/scan}  topics are used to determine $s_t$ while \texttt{/cmd\_vel} is used to output the required action $a_t$. The state space and action space for waypoint tracking task is similar to state and action space in section \ref{sec:low_fid}. For obstacle avoidance tasks, we also include the distance values obtained form the Lidar as a part of the state. 

\subsection{Robot Platform: TurtleBot3}
We use TurtleBot 3 \citep{amsters2019turtlebot}, an open source differential drive robot equipped with a RP Lidar as our robotic platform for real-world experiments. We use ROS as the middleware to setup the communication framework. The bot transmits its state (position and Lidar information) information over a network to a intel NUC, which transmits back the corresponding action according to the policy being executed.

\subsection{Training: Waypoint Tracking}
In navigation problems, we have a global planner that uses a high level map of the bot's surroundings for planning a trajectory using waypoints a unit-meter distance from each other, while the goal of the bot is to achieve these waypoints. To obtain a waypoint tracking scheme, we first train a behavior policy on the low fidelity simulator using TRPO to reach a waypoint approximately one unit distance (1m in the real world) away from its initial position.  In each training episode, the bot is reset to the origin and a waypoint is randomly chosen as $x_w \sim \text{uniform}([-1,1]), y_w = 1$. The episode is terminated if the robot reaches the waypoint, or if it crosses the training boundary or exceeds the maximum episode length. The behavior policy obtained after training in the low fidelity simulator is then used in the LOGO algorithm training on Gazebo. LOGO is trained in Gazebo  with  sparse  sparse rewards, where a reward of $+1$ is provided if the bot reaches the waypoint, and $0$ otherwise.
We evaluate the trained policy in Gazebo and the real world, by providing it a series of waypoints to track in order to reach its final goal. 
\subsection{Training: Obstacle Avoidance}
We train our bot for obstacle avoidance in Gazebo using the behavior policy described in the section above. Our goal is to use the skills of waypoint navigation from the behavior policy  to guide and learn the skills of obstacle avoidance. The state space includes the Lidar scan values in addition to the state space described in section \ref{sec:low_fid}. The \texttt{/scan} provides $360$ values, each of these indicate the distance to the nearest object in a $1 \degree$ sector. For the purpose of our experiments, we use the minimum distance in each $60 \degree$ sector. This reduces the Lidar data to $6$ values. We train our algorithm on Gazebo with a fixed obstacle for random waypoints. In each training episode, the bot is reset to the origin and a waypoint is generated similar to the previous section. The episode is terminated if same conditions in the previous section are satisfied or if a collision with the obstacle occurs. We demonstrate the performance of our algorithms both in Gazebo as well as the real-world.  

\section{Implementation Details}
\label{apdix:implementation}
We implement all the algorithms in this paper using PyTorch \citep{torch}. For all our experiments, we have a two layered $(128 \times 128)$ fully connected neural network with  $tanh$ activation functions to parameterize our policy and value functions.  We use a learning rate of $3\times 10^{-4}$, a discount factor $\gamma = 0.99$, and TRPO parameter $\delta = 0.01$. We decay the influence of the behavior policy by decaying $\delta_k$. We start with $\delta_{0}$, and we do not decay $\delta_k$ for the first $K_{\delta}$ iterations. For $k  >K_{\delta}$,  we geometrically decay $\delta_k$  as $\delta_{k} \leftarrow \alpha \delta_{k}$,  whenever the average return in the current iteration is greater than the average return in the past $10$ iterations. The rest of the hyperparameters for MuJoCo simulations, Gazebo simulation, and real-world experiments are given in table \ref{tab:hyp}. In table \ref{tab:dem} we  provide details on the demonstration data collected using the behavior policy.  

{We implement LOGO based on a publicly available TRPO code base. We implement PofD, TRPO, BC-TRPO, and GAIL by modifying the same code base as used by LOGO. We run DAPG using code base and hyperparameters provided by the authors.  For consistency, we run all algorithms using the same batch size.} 

\begin{table}[hbt!]
\centering
\begin{tabular}{ c c c c c c}
\hline  \\ 
Environment               & \multicolumn{2}{c}{$\delta_0$} & $\alpha$ & $K_{\delta}$ & Batch Size \\ 
                          & \shortstack{Complete \\ Observation}          & \shortstack{Incomplete \\ Observation}        &                  &              &            \\ \hline
Hopper-v2                 & $0.01$        & $0.02$          & $0.95$           & $50$         & $20000$    \\ 
HalfCheetah-v2            & $0.2$         & $0.1$           & $0.95$           & $50$         & $20000$    \\ 
Walker2d-v2               & $0.03$        & $0.05$          & $0.95$           & $50$         & $20000$    \\ 
InvertedDoublePendulum-v2 & $0.2$         & $0.1$           & $0.9$            & $5$          & $5000$     \\ \hline
Waypoint tracking         & \multicolumn{2}{c}{$0.01$}      & $0.95$           & $5$          & $2048$     \\ 
Obstacle avoidance        & \multicolumn{2}{c}{$0.008$}     & $0.9$            & $5$          & $2048$     \\ \hline
\end{tabular}
\caption{Hyperparameters}
\label{tab:hyp}
\end{table}

\begin{table}[hbt!]
\centering
\resizebox{\textwidth}{!}{%
\begin{tabular}{ c c c c c c c }
\hline \\
Environment               & \multicolumn{2}{c}{Offline $\mathcal{S}$} & \multicolumn{1}{l}{Online $\mathcal{S}$} & $\mathcal{A}$    & Samples &  \shortstack{Average \\ Episodic Reward}  \\ 
                          & \shortstack{Complete \\ Observation}                 & \shortstack{Incomplete \\ Observation}            & \multicolumn{1}{l}{}                     &                  &         &                         \\ \hline
Hopper-v2                 & $\mathbb{R}^{11}$    & $\mathbb{R}^{7}$    & $\mathbb{R}^{11}$                         & $\mathbb{R}^{3}$ & $3869$  & $1369.81$               \\ 
HalfCheetah-v2            & $\mathbb{R}^{17}$    & $\mathbb{R}^{14}$   & $\mathbb{R}^{17}$                         & $\mathbb{R}^{6}$ & $5000$  & $2658.34$               \\ 
Walker2d-v2                 & $\mathbb{R}^{17}$    & $\mathbb{R}^{10}$   & $\mathbb{R}^{17}$                         & $\mathbb{R}^{6}$ & $4584$  & $2449.21$               \\ 
InvertedDoublePendulum-v2 & $\mathbb{R}^{11}$    & $\mathbb{R}^{7}$    & $\mathbb{R}^{11}$                         & $\mathbb{R}$     & $183$   & $340.73$                \\ \hline
Waypoint traking          & \multicolumn{2}{c}{$\mathbb{R}^{3}$}      & $\mathbb{R}^{3}$                          & 15               & Policy  & $1$                     \\ 
Obstacle avoidance        & \multicolumn{2}{c}{$\mathbb{R}^{3}$}      & $\mathbb{R}^{9}$                          & 15               & Policy  & $-0.88$                 \\ \hline
\end{tabular}}
\caption{Demonstration data details}
\label{tab:dem}
\end{table}

\newpage

\section{Related Work}
\label{sec:more-related-work}

\textbf{Offline RL:} Recently, there have been many interesting works in the area of offline RL which use offline data to learn a policy. In particular, offline RL algorithms such as BEAR \citep{kumar2019stabilizing}, BCQ \citep{fujimoto2019off}, ABM \citep{Siegel2020Keep}, and BRAC \citep{wu2019behavior} focus on learning a policy using \textit{only} the offline data \textit{without} any online learning or online fine-tuning. The key idea of these offline RL algorithms  is to learn a policy that is `close' to the behavior policy that generated the data via imposing some constraints, to overcome the problem of distribution shift. This approach, however,  often results in conservative policies, and hence  it is difficult to guarantee significant performance improvement over the behavior policy. Moreover, standard offline RL algorithms are not immediately amenable to online fine-tuning, as observed in \cite{nair2020accelerating}.  LOGO is different from the offline RL algorithms in the following two key aspects. First, unlike the \textit{offline} RL algorithms mentioned before, LOGO is an \textit{online} RL algorithm which uses the offline demonstration data for guiding the online exploration during the initial phase of learning. By virtue of this clever online guidance, LOGO is able to converge to a policy that is significantly superior than the sub-optimal behavior policy that generated the demonstration data.  Second, offline RL algorithms typically require \textit{large amount} of  state-action data \textit{with the associated rewards}. LOGO only requires a small amount demonstration data since it is used only for the guiding the online exploration. Moreover, LOGO requires only the state-action observation and does not need the associated reward data. In many real-world application applications, it may be possible to get state-action demonstration data from human demonstration or using a baseline policy. However, it is difficult to assign reward values to these observation.

Advantage Weighted Actor-Critic (AWAC) algorithm \citep{nair2020accelerating}
 propose  to accelerate online RL by leveraging   offline data. This work is different from our approach in four crucial aspects. $(i)$ AWAC requires offline data \textit{with associated rewards} whereas LOGO requires only the state-action observations (not the reward data). In many real-world  applications, it may be possible to get state-action demonstration data from human demonstration or using a baseline policy. However, it may be difficult to assign reward values to these observations, especially in the sparse reward setting.  $(ii)$ AWAC explicitly mentions that it leverages \textit{large amounts of offline data}  whereas LOGO relies on small amount of sub-optimal demonstration data. $(iii)$ LOGO gives a novel and theoretically sound approach using the  double trust region structure that provides provable guarantees on its performance. AWAC algorithm does not give any such provable guarantees. $(iv)$ LOGO can be easily extended to the setting with incomplete state information where as AWAC is not immediately amenable to such extension.  
 
There are also many recent works on addressing multiple aspects of IL and LfD, including combining IL and RL for long horizon tasks \citep{gupta2020relay}, learning from imperfect \citep{gao2018reinforcement, jing2020reinforcement, wu2019imitation} and incomplete \citep{libardi2021guided, gulcehre2019making} demonstrations. Their approaches, performance guarantees and experimental settings are different from that of our problem and proposed solution approach.

\section{Sensitivity Analysis of $\alpha$ and $K_\delta$}
\begin{figure*}[t!]
\centering
\begin{subfigure}{0.49\linewidth}
\centering
\includegraphics[width=1\linewidth]{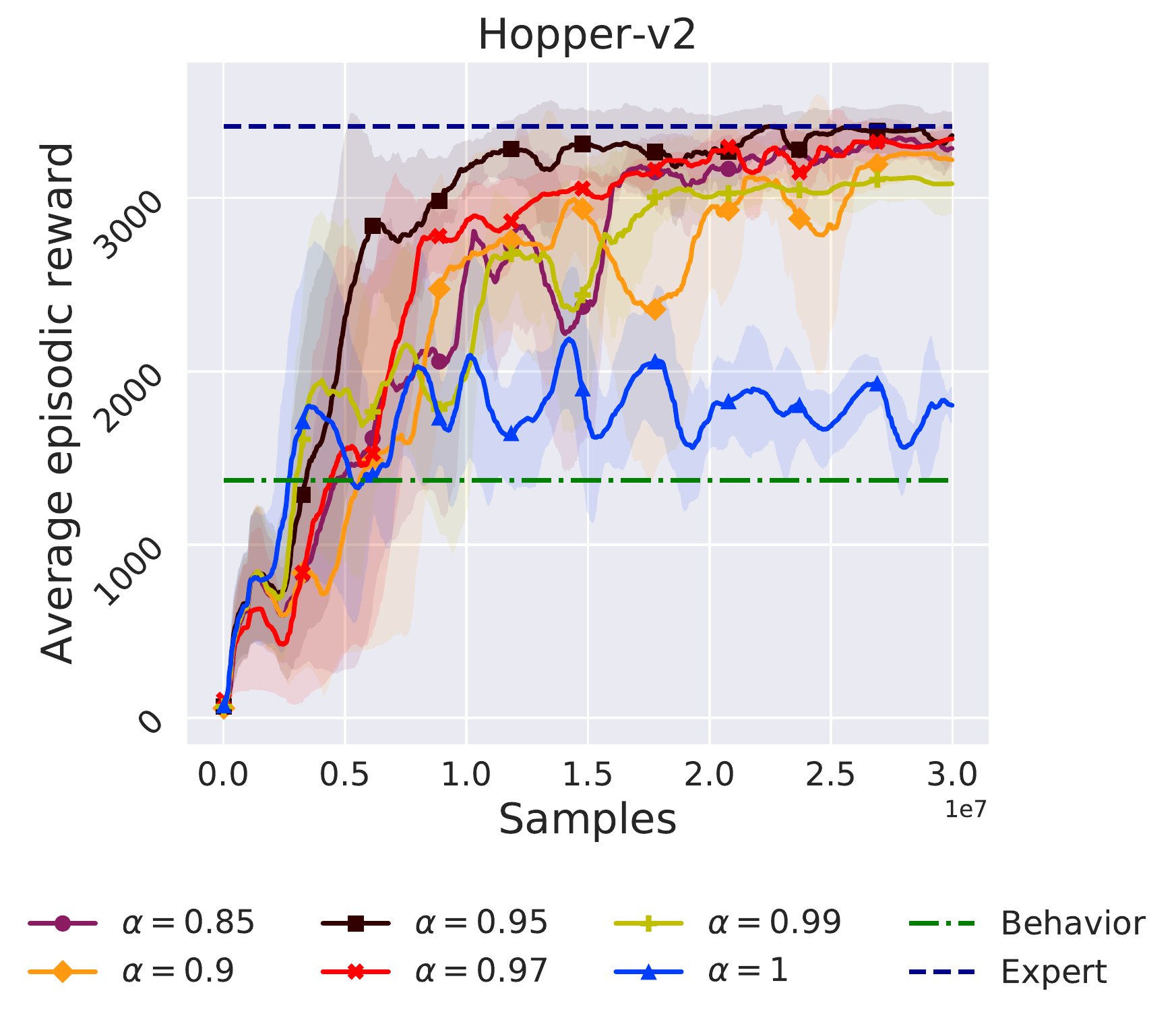}
\caption{Sensitivity of $\alpha$}
\label{fig:sense_alpha}
\end{subfigure}
\begin{subfigure}{0.49\linewidth}
\centering
\includegraphics[width=1\linewidth]{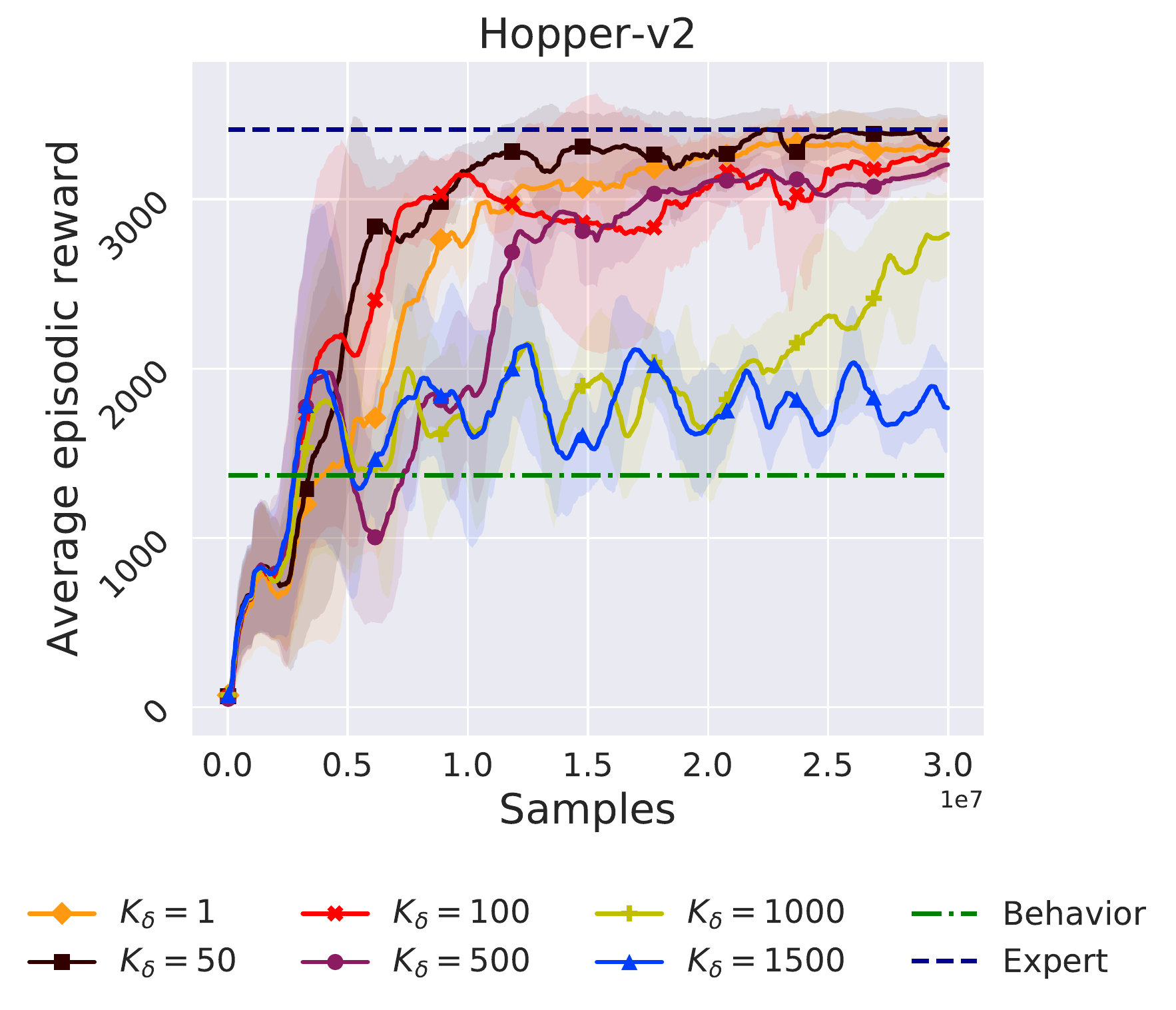}
\caption{Sensitivity of $K_{\delta}$}
\label{fig:sense_K}
\end{subfigure}
\end{figure*}

We run LOGO for different values of $\alpha$ and $K_{\delta}$ by keeping the other parameters fixed on the Hopper-v2 environment. We observe from Figure~\ref{fig:sense_alpha} that the performance of LOGO is not sensitive to perturbations in $\alpha$.  When  $\alpha = 1$, we do not decay the influence of the behavior data, this results in a performance close to the behavior data which is inline with our intuition. We observe from Figure~ \ref{fig:sense_K} that LOGO is not sensitive to perturbations in $K_\delta$ as well. We observe that the performance of LOGO is similar for $K_\delta = 1,50,100$. This because we only decay $\delta_k$ whenever the average return in the current iteration is greater than the average return in the past $10$ iterations. We further note that $K_\delta$  controls the iteration from which we begin decaying the influence of the behavior data, this can be clearly seen for $K_\delta = 500, 1000$ where the performance rises as soon as the decaying begins.

\end{document}